\documentclass[twoside,11pt]{article}

\usepackage[preprint]{jmlr2e}

\usepackage{amsmath}
\usepackage{mathtools}
\usepackage{booktabs}
\usepackage{multirow}
\usepackage{tabularx}
\usepackage{xcolor}
\usepackage{tikz}
\usetikzlibrary{arrows.meta, positioning, calc}

\newtheorem{assumption}[theorem]{Assumption}

\newcommand{\E}{\mathbb{E}}
\newcommand{\R}{\mathbb{R}}
\newcommand{\cC}{\mathcal{C}}
\newcommand{\cA}{\mathcal{A}}

\newcommand{\cF}{\mathcal{F}}
\newcommand{\cM}{\mathcal{M}}
\newcommand{\ind}{\mathbf{1}}
\newcommand{\rmin}{r_{\min}}
\newcommand{\Cbind}{\mathcal{C}_{\text{bind}}}
\newcommand{\piref}{\pi_{\text{ref}}}

\newcommand{\wH}{w_{\!H}}
\newcommand{\wC}{w_{\!C}}
\newcommand{\wA}{w_{\!A}}

\newcommand{\pathtt}[1]{\nolinkurl{#1}}
\newcolumntype{Y}{>{\raggedright\arraybackslash}X}

\usepackage{lastpage}
\jmlrheading{1}{2026}{1-\pageref{LastPage}}{4/26}{}{26-0000}{Lauri Lov\'{e}n}

\ShortHeadings{The Behavioral Credibility Trilemma}{Lov\'{e}n}
\firstpageno{1}

\begin{document}

\title{The Behavioral Credibility Trilemma: \\
When Calibrated Autonomy Becomes Impossible}

\author{\name Lauri Lov\'{e}n \email lauri.loven@oulu.fi \\
       \addr Future Computing Group\\
       University of Oulu\\
       Oulu, Finland
\AND
       \name Nam Do \email nam.do@oulu.fi \\
       \addr Future Computing Group\\
       University of Oulu\\
       Oulu, Finland
\AND
       \name Hassan Mehmood \email hassan.mehmood@oulu.fi \\
       \addr Future Computing Group\\
       University of Oulu\\
       Oulu, Finland
\AND
       \name Dinesh Kumar Sah \email 
       dinesh.sah@oulu.fi \\
       \addr Future Computing Group\\
       University of Oulu\\
       Oulu, Finland
\AND
       \name Sasu Tarkoma \email sasu.tarkoma@helsinki.fi \\
       \addr Department of Computer Science\\
       University of Helsinki\\
       Helsinki, Finland}

\editor{TBD}

\maketitle

\begin{abstract}%
We prove that no reinforcement learning policy with confidence-gated autonomy can simultaneously achieve maximum helpfulness, optimal calibration, and full autonomy under rational oversight, whenever some tasks exceed the agent's reliable competence: the \emph{Behavioral Credibility Trilemma}. The impossibility is geometric: adding any non-affine autonomy incentive to a strictly proper scoring rule destroys strict properness, so an agent rewarded for both calibrated confidence and autonomous action systematically inflates its reported confidence on tasks below the principal's approval threshold whenever the autonomy stake exceeds the calibration cost of clearing it. The Behavioral Perturbation Lemma quantifies the inflation (scaling as $\wA/(2\wC)$ for the Brier score) and shows detection requires $\Omega(1/\Delta^2)$ observations for interior reports. We prove that, in the unsaturated regime, no affine oversight rule is optimal for the principal and the optimum is attained by a sharp threshold satisfying the trilemma's own hypotheses, so the impossibility is endogenized rather than assumed; moreover, for symmetric, log-concave, full-support location policy families under the Brier score, calibration is not even a stationary point of policy-gradient training. We formalize the Confidence-Gated Decision Problem, map existing methods onto the trilemma, and identify two constructive resolution pathways (commitment, role separation). A 540-configuration Best-of-N experiment tests five hypotheses, all strongly confirmed (effect sizes $d = 1.10$--$5.35$, the upper end from a per-completion estimator that inflates magnitude relative to per-task aggregates) and replicated under a pre-specified protocol on two further open-weights model families spanning a real capability range, and adds a descriptive analysis of the achievable-$(H, C, A)$ surface geometry showing a plateau-truncated frontier consistent with the predicted inflation saturation.
\end{abstract}

\begin{keywords}
  reinforcement learning, proper scoring rules, mechanism design, agent calibration, AI safety
\end{keywords}

\section{Introduction}
\label{sec:intro}

\subsection{The General Problem}

Consider any system in which an RL agent must report confidence before acting, a gating mechanism uses the reported confidence to decide whether to approve the action, and the agent's reward increases with degree of autonomous operation. This combination of private information, scored self-reporting, and conflicting incentives arises naturally in any autonomous system that must earn trust through calibrated self-assessment: a coding agent stating ``92\% confident this refactoring is correct'' before committing, a medical AI reporting ``probability of malignancy: 0.73'' before recommending biopsy, an autonomous vehicle declaring ``lane-change safety confidence: 0.96'' before executing.

The central question is whether such a system can simultaneously maximize task performance, maintain calibrated confidence reports, and retain full autonomy; we prove the answer is no.

The structure is analogous to the credibility problems studied in mechanism design~\citep{akbarpour2020credible}: when the objective depends on the report through a channel other than accuracy, truthful reporting ceases to be optimal.

\subsection{The Trilemma, Informally}

We formalize the underlying tension as a trilemma among three desirable properties of an AI agent's policy: \textbf{Helpfulness (H)} --- the agent selects utility-maximizing actions; \textbf{Calibration (C)} --- the agent's reported confidence matches its true success probability;\footnote{Throughout, the theorems use \emph{calibration} in the strong, pointwise sense $r = p(s,a)$ almost surely; the weaker distributional sense of \citet{dawid1982well}, and why the pointwise choice is load-bearing, are discussed in Remark~\ref{rem:pointwise-calibration}. \emph{Credibility} means calibration is common knowledge among the agent and principal. \emph{Trustworthiness} is a broader concept encompassing robustness, alignment, and safety, which is not addressed by the trilemma.} and \textbf{Autonomy (A)} --- the agent acts without requesting human approval.

Any two of these properties are simultaneously achievable at their respective \emph{exact maxima} under the step gate; all three exact maxima are not simultaneously achievable, when the principal enforces a meaningful safety standard and tasks involve genuine uncertainty (Figure~\ref{fig:trilemma}). The impossibility concerns exact joint optimality; in practice the agent operates on a Pareto frontier selected by the weight vector $(\wH,\wC,\wA)$ (Section~\ref{sec:trilemma}), so the trilemma is a structural trade-off, not a catastrophic failure.
The three achievable pairs correspond to recognizable behavioral patterns. In \emph{ask-permission mode} (H+C, sacrifice A) the agent selects the best action, reports calibrated confidence, and delegates when uncertain. In \emph{autonomous-sycophant mode} (H+A, sacrifice C) it selects the best action, inflates confidence to clear the approval threshold, and acts autonomously. In \emph{conservative-refusal mode} (C+A, sacrifice H) it selects conservative actions it is confident about, reports calibrated confidence, and acts autonomously; on \emph{binding} states it substitutes a reliable safe variant for the optimal action (Assumption~\ref{asm:safe-rich}), falling back to the abstain action where none exists (see Remark~\ref{rem:abstain-operational} for the operational caveat on the $(C,A)$ corner).

\begin{figure}[t]
\centering
\begin{tikzpicture}[scale=1.55]
  \coordinate (H) at (0, 0);
  \coordinate (C) at (3, 0);
  \coordinate (A) at (1.5, 2.598);

  \draw[thick] (H) -- (C) -- (A) -- cycle;

  \node[below left] at (H) {\textbf{Helpfulness (H)}};
  \node[below right] at (C) {\textbf{Calibration (C)}};
  \node[above] at (A) {\textbf{Autonomy (A)}};

  \coordinate (HC) at ($(H)!0.5!(C)$);
  \coordinate (HA) at ($(H)!0.5!(A)$);
  \coordinate (CA) at ($(C)!0.5!(A)$);

  \fill[black!70!blue] (HC) circle (2.5pt);
  \node[below, font=\small, text width=2.8cm, align=center] at ($(HC)-(0,0.15)$) {H+C\\[-1pt]\scriptsize Ask-permission};

  \fill[black!70!red] (HA) circle (2.5pt);
  \node[left, font=\small, text width=2.8cm, align=center] at ($(HA)-(0.15,0)$) {H+A\\[-1pt]\scriptsize Autonomous-sycophant};

  \fill[black!70!green] (CA) circle (2.5pt);
  \node[right, font=\small, text width=2.8cm, align=center] at ($(CA)+(0.15,0)$) {C+A\\[-1pt]\scriptsize Conservative-refusal};

  \coordinate (center) at (1.5, 0.866);
  \fill[red!8] (center) circle (0.45);
  \node[font=\small, text=red!60!black] at (center) {H+C+A};
  \node[font=\scriptsize, text=red!60!black] at ($(center)-(0,0.22)$) {infeasible};

  \draw[thick, blue!60!black, dashed] (HC) to[bend left=15] (HA);
  \draw[thick, blue!60!black, dashed] (HC) to[bend right=15] (CA);
  \draw[thick, blue!60!black, dashed] (HA) to[bend left=15] (CA);
\end{tikzpicture}
\caption{The Behavioral Credibility Trilemma (Theorem~\ref{thm:trilemma}): under the step gate, any two exact maxima are jointly achievable (filled circles), all three are not (shaded interior); dashed curves indicate the Pareto frontier. For smooth gates the $(H,A)$ and $(C,A)$ corners attain gate-dependent suprema below $A = 1$ (see the theorem's proviso).}
\label{fig:trilemma}
\end{figure}

\subsection{Why It Is Fundamental}

The impossibility arises from the geometry of the scoring landscape, not from any assumption about the agent's reasoning process: any oversight system that conditions approval on reported confidence adds a non-accuracy component to the objective, and whether the agent is a rational utility-maximizer or a neural network trained by gradient descent, the optimum of this perturbed surface is displaced from the calibrated report --- Section~\ref{sec:optimizer} gives conditions (smooth gates, symmetric log-concave location policy families) under which standard optimizers reach a displaced optimum.

The formal content: a strictly proper scoring rule has a unique optimal report equal to the true probability~\citep{brier1950verification, gneiting2007strictly}, and adding any non-affine function of the report destroys this strict properness~\citep{schervish1989general}; since any practical approval policy is nonlinear in the report, the perturbed rule's maximizer deviates from the truth. The \emph{only} perturbations preserving strict properness are the constant ones, which provide no safety gate at all; non-constant affine perturbations also destroy strict properness while failing to screen (Proposition~\ref{prop:properness-destruction}). What is new beyond this classical perturbation result is detailed in the contributions below.

\subsection{Contributions}

This paper makes four contributions:

\begin{enumerate}
\item \textbf{The Behavioral Perturbation Lemma and the Trilemma Theorem.} We prove that any non-constant dependence of autonomy on the confidence report destroys strict properness (Proposition~\ref{prop:properness-destruction}), quantify the inflation magnitude and detection complexity (Lemma~\ref{lem:perturbation}), and derive the three-way impossibility (Theorems~\ref{thm:two-way} and~\ref{thm:trilemma}). We further prove that, in the unsaturated regime, no affine oversight rule is optimal for the principal and the optimum is attained by a sharp threshold satisfying the trilemma's own hypotheses (Theorem~\ref{thm:non-affine}), endogenizing the impossibility. Drawing on the companion work's one-sided welfare bound, we motivate the Brier score as the distribution-robust calibration loss within a regular generator class (Proposition~\ref{prop:brier-unique}); the bound itself is the companion's result, and this paper's contribution is its application to the trilemma.

\item \textbf{Confidence-Gated Decision Problem formalization.} We define the Confidence-Gated Decision Problem (Definition~\ref{def:cgmdp}), the natural formal setting for RL with confidence-gated autonomy, and prove that the trilemma is a property of this decision problem's payoff surface (Section~\ref{sec:cgmdp}).

\item \textbf{Optimizer-independence.} We prove via a one-dimensional covariance association inequality~\citep{hoeffding1940masstabinvariante, harris1960lower, proschan1977monotone} that for symmetric, log-concave, full-support location policy families under the Brier score, calibration is not a stationary point of the policy gradient for any $\wA > 0$; rational best response and full-support evolutionary selection likewise settle on the displaced optimum, and on smooth gates every local-ascent method converges to an inflated report (Theorem~\ref{thm:optimizer-independence}). Outside this class, knife-edge log-concave families exist at which calibration is exactly stationary, so the stated scope is tight.

\item \textbf{Method mapping.} We show where eight existing methods sit in the $(H,C,A)$ achievable region, distinguishing Goodhart effects (methods without calibration) from trilemma effects (methods with calibration). We identify two constructive resolution pathways (commitment, role separation), both attaining the $(H,C)$ corner exactly while differing in the fate of the blocked mass and in incentive robustness.
\end{enumerate}

\textbf{Scope and equilibrium concept.} We analyze the Stackelberg equilibrium where the principal commits to an approval rule before the agent reports~\citep{fudenberg1991game}. The agent's private type is one-dimensional ($\theta \in [0,1]$), making this a single-parameter mechanism design problem~\citep{myerson1981optimal}: helpfulness pins the action, so the residual screening is one-dimensional in $p(s, a^*(s))$. The multi-dimensional case, where the agent can also deviate on the action dimension, is a screening problem with multi-dimensional types~\citep{rochet1998ironing} and is left open. We do not impose an individual-rationality / participation constraint; the agent is assumed to participate, and the analysis concerns the reporting subgame. Agent-level calibration is necessary but not sufficient for system-level credibility; their composition is an important open question.

\subsection{Paper Organization}

Section~\ref{sec:related} surveys related work; Sections~\ref{sec:trilemma}--\ref{sec:optimizer} develop the theory (the CGDP and trilemma theorems, resolution pathways, method mapping, and optimizer-independence); Section~\ref{sec:experiments} reports the 540-configuration experiment and its pre-specified replication across three open-weights model families (Appendix~\ref{app:cross-model-replication}); Section~\ref{sec:discussion} discusses implications and limitations; Section~\ref{sec:conclusion} concludes.

\section{Background and Related Work}
\label{sec:related}

\subsection{RLHF, Alignment Training, and Safe Variants}

Reinforcement learning from human feedback (RLHF) trains language models by learning a reward model from pairwise human comparisons under the Bradley-Terry-Luce model \citep{christiano2017deep}, then optimizing the policy via PPO~\citep{schulman2017proximal} with a KL penalty against the reference policy. Direct Preference Optimization (DPO) exploits the closed-form KL-regularized optimal policy to eliminate the separate reward model \citep{rafailov2023direct}. Reward models trained on human preferences systematically favor helpfulness and fluency, the bias that drives the H+A corner of the trilemma. Among constrained variants, Safe RLHF \citep{dai2024safe} decouples helpfulness and harmlessness into separate reward and cost models, CPO \citep{achiam2017constrained} provides trust-region updates with per-iteration safety guarantees, and Constitutional AI~\citep{bai2022constitutional} uses natural-language principles evaluated by an LLM judge; none include calibration as an explicit training signal.

\subsection{Proper Scoring Rules and Calibration}

A scoring rule $S(r, \omega)$ is \emph{strictly proper} if the expected score $\bar{S}(r; p) = \E_p[S(r, \omega)]$ is uniquely maximized at $r = p$ for every distribution $p$ \citep{gneiting2007strictly}. The Brier score $S(r, \omega) = -(r - \omega)^2$ is the canonical example~\citep{brier1950verification}.

\textbf{Notation convention.} Throughout this paper, $S(r, \omega)$ denotes the outcome-based scoring rule (a function of the report $r$ and the realized outcome $\omega \in \{0,1\}$), while $S(r, p) \coloneqq \E_\omega[S(r, \omega) \mid p]$ denotes the expected score used in the theoretical analysis. Under the Brier score, $S(r, p) = -(r - p)^2 - p(1-p)$; since the $p(1-p)$ term is constant in $r$, we write $S(r, p) = -(r-p)^2$ when optimizing over $r$. In the experimental section, $S(r_i, y_i) = -(r_i - y_i)^2$ is the outcome-based score evaluated against the observed binary outcome $y_i$. The attribution chain for proper scoring rules runs from \citet{mccarthy1956measures}'s early formalization and \citet{savage1971elicitation}'s elicitation framework through \citet{winkler1969scoring}'s systematic treatment of probability assessors to the comprehensive characterization of \citet{gneiting2007strictly}; the \citet{savage1971elicitation} and \citet{schervish1989general} characterizations establish that strictly proper scoring rules are fully determined by a strictly convex generator, and imply the perturbation corollary stated here as Proposition~\ref{prop:properness-destruction}, the classical foundation of our impossibility. The \citet{schervish1989general} threshold-measure representation is also what isolates the Brier class in Proposition~\ref{prop:brier-unique}.

In the LLM calibration literature, language models have internal states correlated with uncertainty \citep{kadavath2022language}, pre-trained transformers exhibit non-trivial calibration properties \citep{desai2020calibration, guo2017calibration}, and calibrated confidence can be elicited through appropriate prompting \citep{jiang2021can, tian2023just}. Post-hoc calibration methods (Platt scaling, histogram binning, temperature scaling;~\citealp{platt1999probabilistic, naeini2015obtaining, guo2017calibration}) improve calibration without retraining but do not address the incentive conflict that arises when calibration enters the training objective alongside autonomy; \citet{ovadia2019can} show that calibration degrades under distribution shift, orthogonal to but compounding the trilemma. \citet{tian2023just} demonstrate that RLHF fine-tuning degrades calibration relative to pre-training, supporting the Perturbation Lemma's predictions.

The qualitative impossibility established here is complemented by a quantitative characterization of the unperturbed baseline. \citet{loven2026honestreporting} (companion preprint, \href{https://arxiv.org/abs/2605.03793}{arXiv:2605.03793}) studies the pure elicitation setting with no approval channel, where an agent of private type is scored by a pseudospherical scoring rule: honest reporting is dominant-strategy optimal there as a classical consequence of strict properness, and the companion certifies a uniform, strictly positive honesty margin for low \emph{outcome-space} dimension of the scored interface ($d \in \{2,3,4\}$; the agent's type itself is one-dimensional), with the certificate degrading and eventually failing beyond that range. Those results quantify the benign baseline that the trilemma perturbs: absent the autonomy channel, scoring-family choice alone delivers exact honesty with a certified margin, while Lemma~\ref{lem:perturbation} and Proposition~\ref{prop:properness-destruction} show that any nonconstant approval incentive displaces the optimal report away from truth for every weight ratio $\wA/\wC > 0$, regardless of that margin.

\paragraph{Alignment impossibilities.}
Several recent impossibility results constrain the alignment landscape: no alignment procedure can simultaneously guarantee representativeness, tractability, and robustness~\citep{wolf2023alignment}; strong optimization, perfect value capture, and robust generalization are mutually exclusive~\citep{gaikwad2025laws}; and \citet{casper2023open} survey fundamental limitations of RLHF. \citet{lang2024theoretical} prove that RLHF under partial observability of the evaluator produces policies that deceptively inflate performance or overjustify behaviour; their driver is the evaluator's partial observability over trajectories, with no scoring rule, confidence report, or gate, so their result and ours formalize complementary inflation mechanisms.

Our trilemma is fundamentally different in mechanism: existing impossibilities operate at the \emph{system design} level (computational or statistical barriers), ours at the \emph{deployment} level (what a trained agent can achieve under oversight), and ours is \emph{incentive-theoretic}: even if training is perfect, the agent--principal information asymmetry creates the trilemma.

The Perturbation Lemma is related to Goodhart's Law: in the taxonomy of \citet{manheim2019categorizing} the strategic inflation we identify is an instance of the adversarial variant, and what the Lemma adds is the specific proper-scoring-rule formalization, with closed-form inflation magnitude and bounded detection complexity.

\subsection{Mechanism Design for AI}

The connection between mechanism design and AI alignment has been explored from several directions. The underlying scoring-rule/decision-stake conflict is established in market and competition settings by the decision-coupled elicitation literature (Section~\ref{sec:decision-coupled}). Our formal model is a screening model with hidden type in the tradition of \citet{mirrlees1971exploration} and \citet{myerson1981optimal}; the single-task, multi-objective structure with conflict between calibration and autonomy places it within the mechanism-design tradition of \citet{laffont1993theory}. Our CGDP is not a Vickrey--Clarke--Groves mechanism~\citep{vickrey1961counterspeculation, clarke1971multipart, groves1973incentives}: the designer's instrument is restricted to a scoring rule plus approval function, not a transfer-based allocation, so VCG-style truthfulness does not apply (see Remark~\ref{rem:revelation-principle}). It is also distinct from the cheap-talk model of \citet{crawford1982strategic}: the agent's report is scored against outcomes, creating a direct payoff consequence. Recent work by \citet{conitzer2024social} connects social choice theory to alignment; \citet{hadfield2019incomplete} draws on incomplete contract theory~\citep{hart1988incomplete}. The property elicitation framework of \citet{frongillo2015vector} characterizes what statistical functionals can be truthfully elicited; our result characterizes when that truthful elicitation breaks under conflicting incentives. Within the broader strategic-prediction landscape of strategic classification~\citep{hardt2016strategic} and performative prediction~\citep{perdomo2020performative}, the trilemma occupies the scoring-rule-based screening corner: the agent reports a calibration-scored confidence rather than manipulating features, and the principal's gating rule plays the role of the decision boundary, scored against ground truth rather than against a strategic-adversary model. The EC-tradition canon of peer-prediction and Bayesian truth-serum mechanisms~\citep{prelec2004bayesian, witkowski2012peer} provides a complementary scoring-rule design for eliciting truthful private reports without relying on verifiable outcomes, in contrast to the CGDP where realised outcomes are available and the principal's residual instrument is the threshold gate.

\subsection{Decision-Coupled Elicitation and Proper Scoring}
\label{sec:decision-coupled}

Prediction markets use proper scoring rules to incentivize truthful belief reporting from participants with private information~\citep{hanson2003combinatorial, chen2010designing}. The trilemma's core conflict --- a proper score corrupted by a report-dependent decision stake --- is the defining phenomenon of the \emph{decision-coupled elicitation} family, established across several guises: \citet{othman2010decision} prove that no strictly proper scoring rule/decision rule pair exists and that any non-IIA decision rule incentivizes misreporting; \citet{chen2014eliciting} characterize the escape, showing incentive compatibility is restored exactly when the decision maker commits to full-support \emph{randomized} action selection with inverse-probability score corrections; \citet{oesterheld2020decision} show that when reports drive the decision, only the expected utility of the recommended action is truthfully elicitable; \citet{witkowski2023incentive} exhibit the same distortion in forecasting competitions, where a winner-take-all prize drives extremization; and \citet{oesterheld2023incentivizing} treat the performative variant. The CGDP is the confidence-gated, always-scored, autonomy-stake member of this family: the report is scored against realized outcomes on every task, and the decision stake is the agent's \emph{intrinsic} approval-gated task reward rather than a designer-paid score. That instrument-space difference is why Theorem~\ref{thm:non-affine} answers the design question oppositely to \citet{chen2014eliciting}: their randomized-repair instrument is unavailable when the stake is intrinsic to the agent, and within the deterministic monotone-gate class the principal's optimum is the sharp threshold that \emph{triggers}, rather than repairs, the impossibility --- complementary characterizations of adjacent instrument spaces. The two-way calibration--autonomy tension (Theorem~\ref{thm:two-way}) is thus this family's phenomenon in a new coordinate system; the trilemma's novel content lies in the third axis (helpfulness pinning the type through action choice), the closed-form inflation and detection magnitudes, the endogenization of the gate (Theorem~\ref{thm:non-affine}), and optimizer-independence (Theorem~\ref{thm:optimizer-independence}).

\subsection{Adjustable Autonomy}

The adjustable autonomy literature \citep{dorais1999adjustable, bradshaw2003human, scerri2002towards, parasuraman2000model} studies dynamic allocation of decision authority, and recent agentic systems~\citep{nakano2021webgpt} implement de facto confidence gating by requiring the agent to decide when to request human assistance. Our formulation captures the binary approve/reject decision via the approval function $q(r) \in [0,1]$ (Definition~\ref{def:cgmdp}), the simplest structure that yields the impossibility; richer measures would add nuance without changing the fundamental result.

\section{The Behavioral Credibility Trilemma}
\label{sec:trilemma}

\subsection{The Confidence-Gated Decision Problem}
\label{sec:cgmdp}

\begin{definition}[Confidence-Gated Decision Problem]
\label{def:cgmdp}
A confidence-gated decision problem (CGDP) is a tuple $\cM = (\cC, \cA, [0,1], p, R, q, S, \wC, \wA)$, where $\cC$ is a finite context set; $\cA$ is a finite action set; $[0,1]$ is the confidence report space; $p: \cC \times \cA \to [0,1]$ maps (context, action) to the true success probability; $R: \cC \times \cA \to [0,1]$ is the task reward function; $q: [0,1] \to [0,1]$ is the approval function, $q(r)$ being the probability that action $a$ is approved given report $r$ (if not approved, the agent receives reward $0$); $S: [0,1] \times [0,1] \to \R$ is a strictly proper scoring rule evaluating calibration, with $S(r, p)$ the score awarded when the agent reports $r$ and the true success probability is $p$; and $\wC, \wA > 0$ are the calibration and autonomy weights. The agent's payoff for action $a$ with report $r$ in context $c$ is:
\begin{equation}
\label{eq:payoff}
V(a, r; c) = \wC \cdot S(r, p(c,a)) + \wA \cdot q(r) \cdot R(c,a),
\end{equation}
where $R(c,a)$ is the task reward from the tuple above.
\end{definition}

This is a family of single-shot decision problems parameterized by context $c$, with no sequential structure (state transitions, horizons, discount factors); iterative optimization dynamics enter only through Section~\ref{sec:experiments} and Theorem~\ref{thm:optimizer-independence}, not through the impossibility results.

\begin{remark}[Two uses of $q$]
\label{rem:q-dual}
When $q$ is treated as the principal's strategic variable (Theorem~\ref{thm:non-affine}), we write $\cM(q)$ to make the dependence explicit. Theorems~\ref{thm:two-way}--\ref{thm:trilemma} and Lemma~\ref{lem:perturbation} analyse $\cM(q)$ for fixed $q$; Theorem~\ref{thm:non-affine} solves $\max_q U_P(\cM(q))$.
\end{remark}

\begin{assumption}[Safe-Fallback Action]
\label{asm:abstain}
The action set $\cA$ contains an \emph{abstain} action $a_\varnothing$ with $p(c, a_\varnothing) = 0$ and $R(c, a_\varnothing) = 0$ for all $c$: the standard ``defer to human'' option in selective prediction~\citep{chow1957optimum, elyaniv2010foundations}. Reporting $r = 0$ on $a_\varnothing$ is calibrated (matches ground truth $y = 0$) and yields no autonomy payoff under any monotone gate; abstaining does not count as autonomous under Definition~\ref{def:autonomy}, since a truthful report $r = 0$ receives $q(0) < 1/2$ whenever $\rmin > 0$, where $\rmin$ is the effective approval threshold defined in Assumption~\ref{asm:binding}, so states resolved by $a_\varnothing$ contribute $q(0) < 1/2$ --- exactly zero under the step gate --- rather than one to $A(\pi)$. The in-$\cA$ fallback $a_\varnothing$ is distinct from the model-level delegation action $a_{\textup{del}}$ (Definition~\ref{def:delegation-ext}), which routes the task to an external delegate; Remark~\ref{rem:abstain-operational} discusses the consequences for the $(C,A)$ corner.
\end{assumption}

\begin{remark}[Operational status of the $(C, A)$ corner]
\label{rem:abstain-operational}
On binding states (Assumption~\ref{asm:binding}), conservative refusal substitutes a safe action $a_{\text{safe}}(s)$ (Assumption~\ref{asm:safe-rich}) for the task-optimal one and clears the gate with calibrated reports: ``$A$'' on the $(C, A)$ corner is the gate-approval functional of Definition~\ref{def:autonomy}, not relabelled delegation. The cost is task value: $a_{\text{safe}}$ is not $R$-maximal on binding states, and where the only approvable action is near-trivial the corner approaches the operational profile of deferral while remaining formally autonomous; where no safe action exists at all, the policy falls back to $a_\varnothing$ and the corner ceases to be exactly attainable (Assumption~\ref{asm:abstain}).
\end{remark}

\begin{definition}[CG-Policy]
\label{def:cgpolicy}
A CG-policy is a mapping $\pi: \cC \to \Delta(\cA \times [0,1])$ that jointly selects an action $a$ and a confidence report $r$ given context $c$. Write $\pi_a(c)$ and $\pi_r(c)$ for the marginals.
\end{definition}

\begin{definition}[Private Belief]
\label{def:belief}
The agent's private belief about action $a$ in context $c$ is $p(c,a)$, the true success probability of Definition~\ref{def:cgmdp}, which the agent observes exactly (Assumption~\ref{asm:private}). The principal does not observe $p(c,a)$ directly.
\end{definition}

\begin{remark}
A helpfulness weight $\wH$ does not appear in the agent's payoff~\eqref{eq:payoff}: the action $a$ is chosen to maximize $R(c,a)$ (helpfulness) as a separate optimization over the action dimension, leaving the report $r$ as the only strategic variable, and $\wH$ enters only the principal's social welfare objective $\wH H + \wC C + \wA A$. Helpfulness still enters~\eqref{eq:payoff} implicitly through $R(c,a)$ in the autonomy term: the agent receives $\wA \cdot q(r) \cdot R(c,a)$ only when approved, so higher-reward actions yield stronger autonomy incentives.
\end{remark}

\begin{remark}[Revelation principle]
\label{rem:revelation-principle}
The revelation principle~\citep{myerson1981optimal} does not collapse the CGDP to a truthful direct-revelation mechanism, because the designer's instrument is restricted: the principal commits to an approval function $q$ composed with a strictly proper scoring rule $S$, not to an arbitrary reward function. Within this instrument space, Proposition~\ref{prop:properness-destruction} (appendix) shows any non-constant perturbation of $S$ destroys strict properness (the constant/affine/non-affine trichotomy of Section~\ref{sec:intro}), so no non-constant $q$ admits a direct-revelation implementation making $r = p$ a best response. The impossibility is therefore genuine rather than an artifact of not optimising over the full unconstrained reward space (Theorem~\ref{thm:non-affine}).
\end{remark}

\begin{remark}[Reduced-form interpretation]
\label{rem:reduced-form}
The CGDP is a reduced-form model of the incentive structure facing the training process, not a literal description of inference-time cognition: the agent does not literally observe $p(c,a)$ and strategically choose $r \neq p$; rather, the training objective creates a composite payoff landscape whose optimum is shifted away from the calibrated report. The ``private information'' $p$ and ``strategic report'' $r$ are modeling conveniences that Theorem~\ref{thm:optimizer-independence} partially justifies (under its hypotheses, optimization processes ascending the perturbed surface converge to a displaced optimum regardless of whether the agent has explicit access to $p$); the core impossibility results (Theorems~\ref{thm:trilemma}--\ref{thm:non-affine}) depend only on the report-space payoff structure, not on the sequential wrapper or the private-information interpretation.
\end{remark}

\subsection{Three Objectives and the Model Assumptions}
\label{sec:objectives}

Fix a CGDP $\cM$ and a distribution over contexts. For notational continuity with the proofs and theorems below, we write $s$ for context (equivalently $c$) throughout.

\begin{definition}[Helpfulness]
\label{def:helpfulness}
{\sloppy $H(\pi) = \E_{s}[\E_{a \sim \pi_a(s)}[R(s,a)]]$. Maximized by $a^*(s) = \arg\max_a R(s,a)$. Write $H^* = \max_\pi H(\pi)$.\par}
\end{definition}

\begin{definition}[Calibration]
\label{def:calibration}
$C(\pi) = \E_{s}[\E_{(a,r) \sim \pi(s)}[S(r, p(s,a))]]$. For a fixed action policy, maximized by reporting $r = p(s,a)$ (strict properness of $S$). Write $C^*(\pi)$ for the maximum achievable given the action policy.
\end{definition}

\begin{remark}[Why pointwise calibration]
\label{rem:pointwise-calibration}
The trilemma uses calibration in the strong, pointwise sense ($r = p(s,a)$ almost surely), not the weaker distributional sense of \citet{dawid1982well} ($\Pr(\text{success} \mid r) = r$), and the choice is load-bearing: under distributional calibration the impossibility dissolves by pooling. An agent that selects $a^*(s)$ everywhere and reports the \emph{constant} $r = \E[p(s, a^*(s))]$ is perfectly distributionally calibrated (its single report bin has conditional success rate equal to the report), and whenever mean competence clears the threshold it also achieves $H = H^*$ and $A = 1$. But such pooled reports are exactly the \emph{informationless} ones: the gate's purpose is per-decision screening, and a report carrying no information about the decision at hand defeats it --- the same mechanism as the pooling-at-threshold behaviour of the saturated regime (Remark~\ref{rem:saturated-regime}). Pointwise calibration is therefore the screening-relevant notion; under the weak notion the trilemma's content survives not as an impossibility but as a screening-welfare loss, of the kind quantified in the companion work~\citep{loven_endogeneity_2026}. Deployed calibration audits (ECE, reliability diagrams) measure the weak notion, so an empirically ``calibrated'' agent may still be pooling; the experiments below accordingly evaluate reports against per-task ground-truth probabilities rather than bin frequencies alone.
\end{remark}

\begin{definition}[Autonomy]
\label{def:autonomy}
$A(\pi) = \E_{s}[\E_{(a,r) \sim \pi(s)}[q(r)]]$. The expected approval probability.
\end{definition}

\label{sec:assumptions}
\begin{assumption}[Private Information]
\label{asm:private}
The agent observes $p(s,a)$ (or a sufficient statistic). The principal observes $(s, a, r, \text{outcome})$ but not $p(s,a)$.
\end{assumption}

\begin{assumption}[Binding Contexts]
\label{asm:binding}
There exists $\Cbind \subseteq \cC$ with $\Pr(s \in \Cbind) = \mu' > 0$ such that for the optimal action $a^*(s)$:
\begin{equation}
p(s, a^*(s)) < \rmin(s) \quad \text{for all } s \in \Cbind,
\end{equation}
where $\rmin(s) = \inf\{r : q(r) \geq 1/2\}$ (with the convention $\inf \emptyset = +\infty$) is the effective approval threshold: the lowest report at which approval is at least as likely as rejection. For the step gate $q(r) = \ind[r \geq r_0]$ this gives $\rmin = r_0$, and for the sigmoid gate $q(r) = \sigma((r - r_0)/\tau)$ it gives the midpoint $\rmin = r_0$, so smooth full-support gates have a nonempty binding set whenever some task falls below their midpoint. Since $q$ does not depend on the context, $\rmin(s)$ is a single constant, written $\rmin$ where no confusion arises; the argument is kept for notational uniformity with state-dependent gates, and the experiments treat $\rmin$ as a design parameter (Section~\ref{sec:exp-iv}). When $\arg\max_a R(s,a)$ is not a singleton, the inequality is required to hold for every $R$-maximal action, and $a^*(s)$ denotes a fixed measurable selection among them. Truthful reporting of the optimal action's success probability would not clear the gate at the principal's approval standard.
\end{assumption}

\begin{remark}[Empirical plausibility of the binding-set condition]
\label{rem:binding-empirical}
Assumption~\ref{asm:binding} requires that some tasks exceed the agent's reliable competence ($p < \rmin$). In deployed LLM systems this holds broadly: LLMs exhibit substantial uncertainty on 15--40\% of tasks across diverse benchmarks~\citep{kadavath2022language} and their expressed confidence frequently deviates from calibrated probabilities~\citep{tian2023just}, so any safety-relevant deployment with a non-trivial approval threshold has $\mu' > 0$ unless the model is perfectly competent on all tasks, which no current system is.
\end{remark}

\begin{assumption}[Safe-Action Richness]
\label{asm:safe-rich}
For every context $s \in \cC$ there exists an action $a_{\text{safe}}(s) \in \cA$ with $p(s, a_{\text{safe}}(s)) \geq \rmin(s)$. On contexts with $p(s, a^*(s)) \geq \rmin(s)$ the optimal action itself serves as the witness, so the assumption has content only on $\Cbind$, where it posits a conservative action reliable enough to clear the gate under truthful reporting (e.g., a reduced-scope variant of the task). It is invoked only by the $(C, A)$-corner achievability clause of Theorem~\ref{thm:trilemma}, not by any impossibility result; where it fails, the conservative-refusal mode falls back to $a_\varnothing$ (Assumption~\ref{asm:abstain}).
\end{assumption}

\begin{assumption}[Non-Affine Gating]
\label{asm:nonaffine}
For every $p^* < \rmin$, the approval function $q$ is not affine on the interval $[p^*, 1]$ (equivalently, there is some $\epsilon \in (0, 1 - p^*]$ such that $q$ is not affine on $[p^*, p^* + \epsilon]$).
\end{assumption}

\begin{remark}
Assumption~\ref{asm:nonaffine} is mild: any rational gating mechanism is non-affine (threshold rules, sigmoid gates, and any piecewise-linear gate with a kink), while affine gates fail to screen (Remark~\ref{rem:revelation-principle}); for the properness-destruction argument (Proposition~\ref{prop:properness-destruction}) and Theorem~\ref{thm:optimizer-independence}, the weaker condition ``$q$ is not constant on $[p^*, p^* + \epsilon]$'' suffices. The quantification is existential over sub-intervals (equivalently, over the single interval $[p^*, 1]$), not universal: a step gate is constant, hence affine, on intervals strictly below $\rmin$, but non-affine on every interval crossing $\rmin$, and therefore satisfies the assumption together with Assumptions~\ref{asm:binding} and~\ref{asm:monotone}.
\end{remark}

\begin{assumption}[Monotone Gating]
\label{asm:monotone}
The approval function $q: [0,1] \to [0,1]$ is non-decreasing.
\end{assumption}

\begin{remark}
\label{rem:monotone}
All standard gating mechanisms (thresholds, sigmoids, piecewise-linear ramps) are monotone; non-monotone gates, which penalize suspiciously high reports, would constitute a separate regime worth investigating. The assumption is used by the optimizer-independence result (Section~\ref{sec:optimizer}) and licenses the overreport \emph{direction} in Lemma~\ref{lem:perturbation}(i) (a decreasing gate would produce deflation with the same displacement formula); Theorems~\ref{thm:two-way} and~\ref{thm:trilemma} do not need it --- the step $r < \rmin \Rightarrow q(r) < 1/2$ follows from the infimum definition of $\rmin$ alone.
\end{remark}

\begin{assumption}[Positive Helpfulness-Miscalibration Correlation]
\label{asm:cov-positive}
On binding states, $\mathrm{Cov}(R_\phi(y|x),\, (r(y) - p)^2) > 0$: completions that score higher on the reward model $R_\phi$ (Section~\ref{sec:no-cal}) tend to have larger calibration gaps.
\end{assumption}

\begin{remark}
This assumption holds when the reward model systematically prefers confident-sounding outputs, empirically well-documented for RLHF reward models trained on human preferences~\citep{sharma2024towards, perez2023discovering}; it may fail for accuracy-based reward models that penalize overconfidence directly, in which case Best-of-N selection would not degrade calibration and the system operates in Regime~1 (Goodhart, Section~\ref{sec:goodhart-transition}) rather than the trilemma regime. It is used in Proposition~\ref{prop:bon} only to identify the non-degenerate regime in which the calibration-gap inequality there is strict (the monotone alignment its FKG step requires is stated there as a separate comonotone-selection hypothesis; see the proof of part~(i)), and is not used by any impossibility result.
\end{remark}

\subsection{The Behavioral Perturbation Lemma}
\label{sec:perturbation}

\begin{lemma}[Behavioral Perturbation Lemma]
\label{lem:perturbation}
Let $s \in \Cbind$ with $p^* = p(s, a^*(s))$ and $\rmin = \rmin(s)$, so $p^* < \rmin$. Let $R^* = R(s, a^*(s)) \in [0, 1]$. Suppose the principal uses an approval function $q: [0,1] \to [0,1]$, and that the agent's calibration is evaluated by the Brier score $S(r, \omega) = -(r - \omega)^2$ (the general-generator extension is given by Corollary~\ref{cor:general-inflation}).

\begin{enumerate}
\item[(i)] \textbf{Incentive to overreport (smooth case).} Suppose $q \in C^2$ in a neighbourhood of $p^*$ with $q'(p^*) \neq 0$ and $q''$ bounded on that neighbourhood. For $\wA R^*/(2\wC)$ sufficiently small, the agent's optimal report satisfies:
\begin{equation}
r^* = p^* + \frac{\wA R^*}{2\wC} \cdot q'(p^*) + O\!\left(\left(\frac{\wA R^*}{2\wC}\right)^2\right).
\end{equation}
The expansion is pointwise in $p^*$ and not uniform near the report boundary: for $p^*$ close to $1$ the optimum clamps at $r^* = 1$ (cf.\ Remark~\ref{rem:saturated-regime}). For non-monotone gates the same formula holds with the sign of $q'(p^*)$; the overreport direction uses Assumption~\ref{asm:monotone}.

\item[(ii)] \textbf{Sharp threshold.} Under $q(r) = \ind[r \geq \rmin]$, the agent inflates whenever:
\begin{equation}
\wA R^* > \wC (\rmin - p^*)^2.
\end{equation}

\item[(iii)] \textbf{Detection sample complexity.} Fix any type-II error level $\beta \in (0, 1)$ and suppose the inflated report is interior, $\Delta \ll \min(r^*, 1 - r^*)$ (i.e., $r^*$ bounded away from $0$ and $1$). Then detecting an inflation of magnitude $\Delta = r^* - p^*$ via comparison of reported confidence against the empirical success rate is sufficient with $K = O(1/\Delta^2)$ observations, and $K = \Omega(1/\Delta^2)$ observations are necessary (a matching minimax lower bound from a standard two-point Le Cam construction; see~\citealp{tsybakov2009introduction} Ch.~2). The detection sample complexity in this interior regime is therefore $\Theta(1/\Delta^2)$, matching the abstract's stated bound. At the report-space boundary (reports pooled at $r^* = 1$, the saturated regime of Remark~\ref{rem:saturated-regime}), the Bernoulli likelihoods degenerate and detection improves to $\Theta(1/\Delta)$.
\end{enumerate}
\end{lemma}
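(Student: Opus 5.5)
The plan is to work throughout with the reduced objective in the report. Under the Brier score the expected calibration term is $-(r-p^*)^2$ up to an $r$-independent constant, so the agent maximizes
\[
f(r) = -\wC (r-p^*)^2 + \wA R^* q(r)
\]
over $r\in[0,1]$, with the action pinned at $a^*(s)$. Writing $\lambda = \wA R^*/(2\wC)$, the objective rescales to $g(r) = -(r-p^*)^2 + 2\lambda q(r)$. Each part then becomes a statement about maximizing $g$, or about the statistics of Bernoulli outcomes at a fixed report.

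For (i), first observe that for small $\lambda$ any global maximizer $r^*$ must lie close to $p^*$. Comparing $g(r^*)\ge g(p^*)$ gives $(r^*-p^*)^2 \le 2\lambda\,(q(r^*)-q(p^*)) \le 2\lambda$, so $|r^*-p^*| = O(\sqrt{\lambda})$. For small enough $\lambda$ this places $r^*$ inside the $C^2$ neighbourhood. There the first-order condition reads $r - p^* = \lambda q'(r)$. A Taylor expansion gives $q'(r) = q'(p^*) + O(|r-p^*|)$, with the constant controlled by the bound on $q''$. Substituting first gives $r^*-p^* = O(\lambda)$, and then $r^* = p^* + \lambda q'(p^*) + O(\lambda^2)$.

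Uniqueness of the stationary point near $p^*$ follows because $g'' = -2 + 2\lambda q'' < 0$ for small $\lambda$. This is the step I expect to need the most care. The localization via the global comparison only bounds $q$ by $1$, so it must be combined with the requirement that $p^*$ be interior. That is exactly why the expansion is pointwise rather than uniform near $r=1$, where the constraint binds and $r^*$ clamps. The overreport direction is simply the sign of $q'(p^*)\ge 0$ under Assumption~\ref{asm:monotone}.

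For (ii), the step gate makes $f$ piecewise. Below $\rmin$ the best report is $r=p^*$, with value $0$. At or above $\rmin$ the best report is $r=\rmin$, the closest admissible point, with value $-\wC(\rmin-p^*)^2 + \wA R^*$. Inflating to $\rmin$ is strictly preferred exactly when $\wA R^* > \wC(\rmin-p^*)^2$.

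For (iii), the upper bound comes from the test that rejects honesty when $r^* - \bar y_K > \Delta/2$, where $\bar y_K$ is the empirical success rate on $K$ tasks reported at $r^*$. By Hoeffding's inequality both error probabilities are at most $\exp(-K\Delta^2/2)$. This gives $K = O(\log(1/\beta)/\Delta^2)$, which is $O(1/\Delta^2)$ for fixed $\beta$.

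The lower bound uses Le Cam's two-point method with the hypotheses $\mathrm{Bern}(r^*)^{\otimes K}$ versus $\mathrm{Bern}(r^*-\Delta)^{\otimes K}$. The KL divergence is $K\,\mathrm{kl}(r^*\|r^*-\Delta) \le K\Delta^2/((r^*-\Delta)(1-r^*+\Delta))$, which is $O(K\Delta^2)$ because $r^*$ is bounded away from $0$ and $1$. Pinsker or Bretagnolle--Huber then shows that the summed errors stay bounded below unless $K = \Omega(1/\Delta^2)$.

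For the boundary remark, take $r^*=1$. There $\mathrm{Bern}(1)$ produces only successes, so a single failure certifies inflation. Detection needs about $1/\Delta$ draws, since $P(\text{no failure}) = (1-\Delta)^K$. The matching lower bound holds because the total variation between the product measures is $1-(1-\Delta)^K$.
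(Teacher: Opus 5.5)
Your proposal is correct and follows the paper's proof essentially step for step. Part (i) uses the same localization $|r^*-p^*|\le\sqrt{\wA R^*/\wC}$ from $q\le 1$, plus strict concavity on the $C^2$ neighbourhood; you bootstrap the first-order condition where the paper cites the implicit function theorem. Parts (ii) and (iii) use the same two-candidate comparison for the step gate, the same Hoeffding midpoint test, and the same Le Cam/KL lower bound.

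The only real difference is at the boundary $r^*=1$. There you compute the exact total variation $1-(1-\Delta)^K$ rather than the KL value $\ln(1/(1-\Delta))$. This gives both the achievability (reject on any observed failure) and the matching lower bound of the $\Theta(1/\Delta)$ claim, whereas the paper's KL remark only supplies the weakened $\Omega(1/\Delta)$ lower bound.
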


\begin{proof}
\textbf{Part (i).} Fix state $s$ and action $a^*(s)$. The agent's payoff for report $r$ is:
\begin{equation}
V(r) = \wC \cdot S(r, p^*) + \wA \cdot q(r) \cdot R(s, a^*).
\end{equation}
Using the Brier score $S(r,p) = -(r-p)^2$, the first-order condition $V'(r^*) = 0$ gives:
\begin{equation}
-2\wC(r^* - p^*) + \wA R(s,a^*) \cdot q'(r^*) = 0,
\end{equation}
yielding $r^* = p^* + \frac{\wA R(s,a^*)}{2\wC} q'(r^*)$. For $\wA R^*/(2\wC)$ small the first-order approximation follows by the implicit function theorem, and the stationary point is the \emph{global} maximum: since $q \leq 1$, any report with $\wC(r - p^*)^2 > \wA R^*$ is dominated by truthful reporting, so every maximizer lies within $\sqrt{\wA R^*/\wC}$ of $p^*$; for $\wA R^*/(2\wC)$ small this ball lies inside the $C^2$ neighbourhood, where $V'' = -2\wC + \wA R^* q'' < 0$ (bounded $q''$), so $V$ is strictly concave there and the critical point is its unique maximizer. The smallness hypothesis is not cosmetic: for gates with a second, steeper rise away from $p^*$, the global optimum jumps to the distant threshold once $\wA R^*/(2\wC)$ is large.

\textbf{Part (ii).} Under the sharp threshold $q(r) = \ind[r \geq \rmin]$, the agent compares reporting truthfully ($r = p^*$, rejected, reward: $\wC C^*$) with inflating to exactly the threshold ($r = \rmin$, approved, reward: $\wC[C^* - (\rmin - p^*)^2] + \wA R^*$). Inflation dominates when $\wA R^* > \wC(\rmin - p^*)^2$.

\textbf{Part (iii).} \emph{Sufficiency.} By Hoeffding's inequality, testing the empirical success rate against the midpoint threshold $r^* - \Delta/2$ controls both errors: each error probability is at most $e^{-K\Delta^2/2}$, so $K \geq 2\ln(1/\min(\alpha_{\text{sig}}, \beta))/\Delta^2$ observations give type-I error at most $\alpha_{\text{sig}}$ and type-II error at most $\beta$; hence $K = O(1/\Delta^2)$. \emph{Necessity.} By a two-point Le Cam argument, distinguishing $P_0 = \mathrm{Bern}(r^*)$ from $P_1 = \mathrm{Bern}(r^* - \Delta)$ over $K$ i.i.d.\ samples at a fixed error level requires $K \cdot \mathrm{KL}(P_0 \Vert P_1) = \Omega(1)$ by tensorisation of KL and Pinsker's inequality, and for $r^*$ bounded away from $\{0, 1\}$ the per-sample Bernoulli KL is $\Theta(\Delta^2 / (r^*(1 - r^*))) = \Theta(\Delta^2)$ by second-order Taylor expansion, giving $K = \Omega(1/\Delta^2)$ (see~\citealp{tsybakov2009introduction}, Theorem~2.2). The interiority hypothesis is where the Taylor step is used: at $r^* = 1$, $\mathrm{KL}(\mathrm{Bern}(1) \Vert \mathrm{Bern}(1 - \Delta)) = \ln(1/(1 - \Delta)) = \Theta(\Delta)$, the lower bound weakens to $\Omega(1/\Delta)$, and detection at the pooling boundary is correspondingly easier.
\end{proof}

Equivalently: any non-constant $q$ makes the perturbed rule $V(r) = \wC S(r,p) + \wA q(r) R$ improper --- a corollary of the classical characterizations~\citep{schervish1989general, gneiting2007strictly}, stated and proved here as Proposition~\ref{prop:properness-destruction}.

\begin{corollary}[General Inflation Formula]
\label{cor:general-inflation}
Let $S$ be a strictly proper scoring rule with convex generator $G$ (so that $\E_p[S(r,\omega)] = G(r) + G'(r)(p - r)$, maximized at $r = p$), and suppose $G \in C^2$ in a neighbourhood of $p$ with $G''(p) > 0$. Strict convexity alone does not suffice: $G(p) = (p - 1/2)^4$ is strictly convex with $G''(1/2) = 0$, and the inflation at $p = 1/2$ then scales as $(\wA/\wC)^{1/3}$ rather than first order. Under the conditions of Lemma~\ref{lem:perturbation}(i) (with $S$ in place of the Brier score), the first-order inflation for such a scoring rule is:
\begin{equation}
\label{eq:general-inflation}
\Delta(p) \;=\; \frac{\wA \cdot R(s,a^*) \cdot q'(r^*)}{\ \wC \cdot G''(p)\ },
\end{equation}
where $R(s,a^*) \in [0,1]$ is the expected reward of the optimal action. Since $G''(p) > 0$ by hypothesis, the denominator is positive and no absolute value is needed. For binary tasks where reward is normalized to $R \in \{0,1\}$ and we condition on success ($R = 1$), the formula simplifies to $\Delta(p) = \wA \cdot q'(r^*) / (\wC \cdot G''(p))$; this normalization is used throughout the paper unless stated otherwise. For the Brier score, $G''(p) = 2$ (constant), recovering $\Delta = \wA q'/(2\wC)$. For the logarithmic score, $G''(p) = 1/(p(1-p))$, giving type-dependent inflation $\Delta(p) = \wA \cdot q'(r^*) \cdot p(1-p) / \wC$.
\end{corollary}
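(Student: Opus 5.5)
The plan is to repeat the proof of Lemma~\ref{lem:perturbation}(i), replacing the Brier expected score with the Savage representation $\bar S(r;p) = G(r) + G'(r)(p-r)$. Differentiating in $r$ gives
\begin{equation*}
\partial_r \bar S(r;p) = G'(r) + G''(r)(p - r) - G'(r) = G''(r)(p-r),
\end{equation*}
so the agent's payoff $V(r) = \wC \bar S(r;p) + \wA R(s,a^*) q(r)$ has first-order condition
\begin{equation*}
\wC\, G''(r^*)\,(r^* - p) = \wA\, R(s,a^*)\, q'(r^*).
\end{equation*}
This yields the exact implicit relation $r^* - p = \wA R q'(r^*)/(\wC G''(r^*))$. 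The first step is this computation, which needs only $G \in C^2$ near $p$.

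The second step passes from $G''(r^*)$ to $G''(p)$ in the denominator. Since $G''$ is continuous with $G''(p) > 0$, I will apply the implicit function theorem to $F(r,\lambda) = \wC G''(r)(r-p) - \lambda q'(r)$ at $(r,\lambda) = (p,0)$, where $\lambda = \wA R$. Its $r$-derivative there is $\wC G''(p) \neq 0$, and $F$ is $C^1$ in $r$ if $G \in C^3$. If only $C^2$ is assumed, I will instead use $G''(r^*) = G''(p) + o(1)$ together with $r^* - p = O(\lambda)$. Either way the result is $\Delta(p) = \wA R q'(r^*)/(\wC G''(p))\,(1+o(1))$, which is the first-order formula \eqref{eq:general-inflation}. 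Keeping $q'(r^*)$ rather than $q'(p)$ in the formula matches the statement; to first order the two agree because $q \in C^2$.

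The third step shows that the stationary point is the global maximizer, mirroring part (i). Since $0 \le q \le 1$, any maximizer must satisfy $\wC[\bar S(p;p) - \bar S(r;p)] \le \wA R$. The Bregman divergence $\bar S(p;p) - \bar S(r;p) = G(p) - G(r) - G'(r)(p-r)$ is bounded below by $c\,(r-p)^2$ on a neighbourhood, using $G'' \ge G''(p)/2$ there. Outside that neighbourhood it is bounded below by a positive constant, by strict convexity, which follows from strict properness. Hence every maximizer lies within $O(\sqrt{\wA R/\wC})$ of $p$. On that ball, $V'' = \wC[G'''(r)(p-r) - G''(r)] + \wA R q''$ is negative for small $\wA R/\wC$ under $C^3$; without $C^3$ I fall back on the monotonicity of $r \mapsto G''(r)(p-r)$ sign-wise. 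Uniqueness then follows.

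Finally, the specializations are direct substitutions. For the Brier score, $G(p) = p^2 - p$ (up to affine terms), so $G'' = 2$. For the log score, $G(p) = p\ln p + (1-p)\ln(1-p)$, so $G'' = 1/(p(1-p))$. The $(p-1/2)^4$ counterexample follows from balancing $G''(r)(r-p) \sim 12(r-p)^3$ against $\lambda q'$, which gives $r - p \sim (\lambda q'/(12\wC))^{1/3}$. I expect the main obstacle to be the regularity bookkeeping in the second and third steps, namely justifying the replacement of $G''(r^*)$ by $G''(p)$ and the local concavity with only $C^2$ hypotheses. I would first try to impose $C^3$ locally, or to argue via the continuity of $G''$ and monotonicity of the FOC residual.
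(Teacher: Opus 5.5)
Your proposal is correct and follows the paper's proof. You derive the same first-order condition $\wC G''(r^*)(r^*-p) = \wA R\,q'(r^*)$ from the Savage representation, then replace $G''(r^*)$ with $G''(p)$ using continuity of $G''$ and $r^*-p = O(\wA/\wC)$, with the sharper increment under $C^3$. Your localization and global-maximizer step goes beyond the paper, which simply inherits the conditions of Lemma~\ref{lem:perturbation}(i). Under bare $C^2$, however, the sign pattern of $G''(r)(p-r)$ does not by itself give a unique critical point. This does not matter, because the first-order formula holds at every maximizer in the localized ball.
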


\begin{proof}
The agent's expected payoff under generator $G$ is $\E_p[V(r)] = \wC[G(r) + G'(r)(p - r)] + \wA \cdot q(r) \cdot R(s,a^*)$. The first-order condition is $\wC \cdot G''(r^*)(p - r^*) + \wA R(s,a^*) \cdot q'(r^*) = 0$, giving $r^* - p = \wA R(s,a^*) \cdot q'(r^*) / (\wC \cdot G''(r^*))$. Since $r^* - p = O(\wA/\wC)$ and $G''$ is continuous, $G''(r^*) = G''(p) + o(1)$ as $\wA/\wC \to 0$ (with the sharper $O(\wA/\wC)$ increment under $G \in C^3$); the stated formula with $G''(p)$ in the denominator holds to first order in $\wA/\wC$.
\end{proof}

\begin{remark}
The type-dependence of inflation under non-quadratic generators has welfare implications: under the log score, agents with extreme beliefs ($p$ near $0$ or $1$) inflate less than agents with intermediate beliefs. This interacts with the principal's screening problem, whose first-best characterization under general generators remains open; the companion preprint establishes a one-sided welfare bound as a first step (see Remark~\ref{rem:saturated-regime}).
\end{remark}

\subsection{The Trilemma Theorems}
\label{sec:theorems}

\begin{theorem}[Calibration-Autonomy Two-Way Impossibility]
\label{thm:two-way}
Let $\cM$ be a CGDP satisfying Assumptions~\ref{asm:binding} and~\ref{asm:monotone}. No helpfulness-optimal CG-policy $\pi$ (that is, no $\pi$ with $H(\pi) = H^*$, equivalently one selecting $R$-maximal actions $\pi$-a.s.) simultaneously achieves:
\begin{enumerate}
\item $C(\pi) = C^*(\pi)$ (optimal calibration given the action policy), and
\item $A(\pi) = 1$ (full autonomy),
\end{enumerate}
Quantitatively, every helpfulness-optimal policy with $C(\pi) = C^*(\pi)$ satisfies $A(\pi) \leq \sup_r q - \mu'\,(\sup_r q - 1/2)$ --- in particular $A \leq 1 - \mu'/2$ for gates attaining $1$ --- which is bounded away from the unconstrained supremum whenever $\sup_r q > 1/2$; the statement is therefore non-vacuous for every monotone gate exceeding $1/2$ somewhere, including smooth gates with $\sup_r q < 1$. (The always-approve principal $q \equiv 1$ yields $\rmin = 0$ and an empty binding set, falling outside Assumption~\ref{asm:binding}'s hypotheses rather than constituting an exception.) The restriction to helpfulness-optimal policies is essential and intended: a policy that sacrifices helpfulness can achieve both conclusions jointly by selecting safe actions (the $(C, A)$ corner of Theorem~\ref{thm:trilemma}), so the two results are complementary, not contradictory.
\end{theorem}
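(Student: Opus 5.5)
Since $\pi$ is helpfulness-optimal, it selects an $R$-maximal action $\pi$-almost surely. The plan is to show that optimal calibration then pins every report, so autonomy is computed from a known report distribution. Concretely, $C(\pi) = C^*(\pi)$ means that, given the action marginal, the expected score $\E_s\E_{(a,r)\sim\pi(s)}[S(r,p(s,a))]$ attains its maximum. We compare $\pi$ with the policy that keeps the same actions but reports $r = p(s,a)$ truthfully. Strict properness of $S$ gives $S(r,p) \le S(p,p)$, with equality iff $r = p$. Hence equality of the expectations forces $r = p(s,a)$ $\pi$-almost surely, and this is the one place where strict, rather than weak, properness is used.

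Next we evaluate autonomy on the binding set. Take $s \in \Cbind$. By Assumption~\ref{asm:binding}, every $R$-maximal action, and in particular any action $\pi$ selects there, satisfies $p(s,a) < \rmin$. So the calibrated report is $r = p(s,a) < \rmin$. By the infimum definition of $\rmin$, every $r < \rmin$ has $q(r) < 1/2$. Following Remark~\ref{rem:monotone}, this step uses only that definition and not monotonicity. For $s \notin \Cbind$ we use only the trivial bound $q(r) \le \sup_r q$.

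Combining the two regions, with $\mu' = \Pr(s\in\Cbind)$:
\[
A(\pi) = \E_s \E_{(a,r)\sim\pi(s)}[q(r)] \le (1-\mu')\sup_r q + \mu' \cdot \tfrac12 = \sup_r q - \mu'\bigl(\sup_r q - \tfrac12\bigr).
\]
For gates attaining $1$ this is $1 - \mu'/2 < 1$, so $A(\pi) = 1$ is impossible and the two-way impossibility follows. In general the bound is strictly below $\sup_r q$ whenever $\sup_r q > 1/2$, since $\mu' > 0$; this gives the non-vacuity claim.

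The main obstacle is the strict inequality on the binding set. The bound $q(r) < 1/2$ holds pointwise, but passing to expectations only gives $\le 1/2$. That is enough for the displayed bound, but we must check that the bound actually excludes $A = 1$ even when $\sup q = 1$ is not attained. If $\sup_r q < 1$, then $A = 1$ is already impossible trivially. If $q$ attains $1$, the bound $1 - \mu'/2 < 1$ does the job. A secondary technical point is measurability: the almost-sure identification $r = p(s,a)$ needs to hold jointly over the mixed policy. This follows because the gap $S(p,p) - S(r,p) \ge 0$ has zero expectation, so it vanishes almost surely. The remarks on $q\equiv 1$ and on the $(C,A)$ corner need no proof beyond noting that $\rmin = 0$ there, so the binding set is empty.
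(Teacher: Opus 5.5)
Your proof is correct and uses the same ingredients as the paper's proof: strict properness pins $r = p(s,a)$ almost surely, Assumption~\ref{asm:binding} forces the calibrated report of every $R$-maximal action on $\Cbind$ below $\rmin$, and the infimum definition of $\rmin$ alone gives $q(r) < 1/2$ there; the paper frames this as a contradiction with $A = 1$, whereas your split of the expectation over $\Cbind$ and its complement also derives the stated quantitative bound, which the paper's proof leaves implicit. Your closing \emph{obstacle} is not one: since $\sup_r q \le 1$, the bound is at most $1 - \mu'/2 < 1$ for every gate, whether or not $1$ is attained, and in any case $q(r) < 1/2$ almost surely on an event of positive probability does give a strict inequality in expectation.
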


\begin{proof}
Suppose $\pi^*$ is helpfulness-optimal and achieves both. $A(\pi^*) = 1$ requires $q(r) = 1$ $\pi^*$-a.s. $H(\pi^*) = H^*$ requires the selected action to be $R$-maximal $\pi^*$-a.s., and on $\Cbind$ every $R$-maximal action $a$ satisfies $p(s, a) < \rmin(s)$ (Assumption~\ref{asm:binding}, which covers argmax ties). $C(\pi^*) = C^*(\pi^*)$ requires $r = p(s, a)$ $\pi^*$-a.s.\ (strict properness of $S$). Hence on $\Cbind$ (measure $\mu' > 0$) the report satisfies $r < \rmin(s)$, and by the definition of $\rmin$ alone (as the infimum of $\{r : q(r) \geq 1/2\}$, any report below it receives $q(r) < 1/2$; monotonicity is not needed for this step) such reports receive $q(r) < 1/2 < 1$. Contradiction: either $A < 1$ or $C < C^*(\pi^*)$ on a set of positive measure.
\end{proof}

\begin{theorem}[Behavioral Credibility Trilemma]
\label{thm:trilemma}
Let $\cM$ be a CGDP satisfying Assumptions~\ref{asm:binding} and~\ref{asm:safe-rich}. No CG-policy $\pi$ simultaneously achieves:
\begin{enumerate}
\item $H(\pi) = H^*$ (maximum helpfulness),
\item $C(\pi) = C^*(\pi)$ (optimal calibration),
\item $A(\pi) = 1$ (full autonomy).
\end{enumerate}
Moreover, under the step gate $q(r) = \ind\{r \geq \rmin\}$ any two of the three are jointly achievable at the cost of the third. For smooth gates with $\sup_r q(r) < 1$, the corners split: the $(H,A)$ clause holds with the attainable supremum $\sup_\pi A(\pi) = \sup_r q(r)$ in place of $A = 1$ (achieved by reporting $r \in \arg\max_r q$), while the $(C,A)$ clause holds with the safe-action construction attaining $A = \E[q(p(s, a_{\text{safe}}(s)))]$ (a lower bound on the calibration-constrained supremum $\sup\{A(\pi) : C(\pi) = C^*(\pi)\}$), generically strictly below $\sup_r q$ because calibration pins the report to $p(s, a_{\text{safe}}(s))$. The $(C, A)$ corner is achieved only by policies with $H(\pi) < H^*$, outside the quantifier of Theorem~\ref{thm:two-way}.
\end{theorem}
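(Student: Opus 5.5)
The impossibility clause follows directly from Theorem~\ref{thm:two-way}, with one adjustment. Theorem~\ref{thm:two-way} assumes Assumption~\ref{asm:monotone}, but its proof never uses monotonicity. It relies only on the infimum definition of $\rmin$: any report $r<\rmin$ receives $q(r)<1/2$. I would therefore restate that argument under Assumption~\ref{asm:binding} alone.

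Suppose $\pi$ satisfies all three conditions. From $H(\pi)=H^*$, $\pi$ selects $R$-maximal actions almost surely. On $\Cbind$, each such action has $p(s,a)<\rmin$, and this holds for every maximizer, including ties. From $C(\pi)=C^*(\pi)$ and strict properness, $r=p(s,a)$ almost surely. Hence on a set of measure $\mu'>0$ we have $q(r)<1/2$, so $A(\pi)\le 1-\mu'/2<1$, a contradiction.

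For the achievability clauses under the step gate $q=\ind\{r\ge\rmin\}$, I would exhibit one deterministic policy per corner.

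\emph{$(H,C)$.} Take $a=a^*(s)$ and $r=p(s,a^*(s))$. This gives $H=H^*$, and $C=C^*(\pi)$ by strict properness.

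\emph{$(H,A)$.} Take $a=a^*(s)$ and $r=\max\{p(s,a^*(s)),\rmin\}$, or simply $r=1$. This gives $q(r)=1$ everywhere, so $A=1$ and $H=H^*$.

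\emph{$(C,A)$.} Take $a=a_{\text{safe}}(s)$ from Assumption~\ref{asm:safe-rich} and $r=p(s,a_{\text{safe}}(s))\ge\rmin$. Then $q(r)=1$, and $C(\pi)=C^*(\pi)$ given this action policy.

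Measurability is trivial because $\cC$ is finite. The \emph{at the cost of the third} part follows from the first clause: each pair-achieving policy necessarily fails the third property. In the $(C,A)$ case this means $H<H^*$, since by Assumption~\ref{asm:binding} every $R$-maximal action on $\Cbind$ has $p<\rmin$, so $a_{\text{safe}}$ is not $R$-maximal there.

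For smooth gates with $\sup_r q<1$, the $(H,A)$ statement needs two things. First, $A(\pi)\le\sup_r q$ holds trivially. Second, the supremum must be attained by reporting $r\in\arg\max q$. I would note that this argmax is nonempty whenever $q$ is continuous on the compact set $[0,1]$, which covers sigmoid gates. If $q$ is not continuous, I would phrase the attainment as a supremum approached by a sequence of reports.

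For the $(C,A)$ statement, calibration pins $r=p(s,a)$, so $A=\E[q(p(s,a_{\text{safe}}(s)))]$ under the safe-action policy. This value is a lower bound on the calibration-constrained supremum, simply because the safe-action policy is feasible. It is generically strictly below $\sup_r q$ whenever some $p(s,a_{\text{safe}}(s))$ lies outside $\arg\max q$.

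The main obstacle is bookkeeping rather than depth, and it sits in the $(C,A)$ corner. $C^*(\pi)$ depends on the action policy, so "optimal calibration" has to be read as optimal given the safe actions. I also need $H<H^*$ to be strict, which uses the tie clause of Assumption~\ref{asm:binding}. The last point is the step from Theorem~\ref{thm:two-way}, which I would restate explicitly rather than cite, so that Assumption~\ref{asm:monotone} is not needed.
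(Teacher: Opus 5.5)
Your proposal is correct and follows the paper's proof essentially step for step. The impossibility uses only the infimum definition of $\rmin$, the tie clause of Assumption~\ref{asm:binding}, and strict properness, with no appeal to monotonicity. The three step-gate constructions and the smooth-gate adaptations are the paper's own; your extra remarks (nonemptiness of $\arg\max_r q$ for discontinuous gates, and deriving ``at the cost of the third'' from the impossibility clause) are harmless refinements that the paper leaves implicit.
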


\begin{proof}
\textbf{Impossibility.} Suppose $\pi^*$ achieves all three. $H = H^*$ requires the selected action to be $R$-maximal $\pi^*$-a.s.; on $\Cbind$ every $R$-maximal action $a$ satisfies $p(s, a) < \rmin$ (Assumption~\ref{asm:binding}, which covers argmax ties). $C = C^*(\pi^*)$ pins $r = p(s, a)$ for the selected action (strict properness). On $\Cbind$ the report therefore satisfies $r < \rmin$, so $q(r) < 1/2$ by the definition of $\rmin$, contradicting $A = 1$. Only Assumption~\ref{asm:binding} and strict properness of $S$ are used in this direction.

\textbf{Pairwise achievability.} Shown for the canonical step gate $q(r) = \ind\{r \geq \rmin\}$, which attains $q = 1$. For smooth gates the constructions must be adapted per the statement's proviso: the $(H,A)$ construction reports $r \in \arg\max_r q$ (calibration is being sacrificed, so the report is free), achieving $A = \sup_r q(r)$; the $(C,A)$ construction is unchanged but attains $A = \E[q(p(s, a_{\text{safe}}(s)))] < \sup_r q$ in general, since $C = C^*$ pins the report to $p(s, a_{\text{safe}}(s))$, which need not maximize $q$.

\emph{$(H, C)$ without $A$:} Select $a^*(s)$, report $r = p(s,a^*(s))$. On binding states, the gate rejects ($q(r) = 0$), so $A \leq 1 - \mu' < 1$. (Here $H = H^*$ counts the \emph{selected} action per Definition~\ref{def:helpfulness}; realized task reward on rejected states depends on the fate of the blocked mass --- the dimension separating the two resolutions of Section~\ref{sec:resolution}.)

\emph{$(H, A)$ without $C$:} Select $a^*(s)$, report $r = \max(p(s,a^*(s)),\, \rmin(s))$, which lies in $[0,1]$ since $\rmin \leq 1$ for the step gate. The gate approves everywhere ($A = 1$), but reports are inflated on $\Cbind$ ($C < C^*$).

\emph{$(C, A)$ without $H$:} Select $a_{\text{safe}}(s)$ with $p(s, a_{\text{safe}}(s)) \geq \rmin(s)$, which exists on every context by Assumption~\ref{asm:safe-rich} (on contexts with $p(s, a^*(s)) \geq \rmin$ the optimal action itself serves), and report truthfully $r = p(s, a_{\text{safe}}(s))$. The gate approves every report ($A = 1$ exactly), reports are calibrated ($C = C^*$), but on $\Cbind$ no $R$-maximal action clears the gate (Assumption~\ref{asm:binding}), so $a_{\text{safe}}$ is not $R$-maximal there and $H < H^*$.
\end{proof}

\paragraph{Achievable region and operating point.}
The three pairwise constructions correspond to the behavioral patterns of Section~\ref{sec:intro}. Weighted-sum optima of the principal's welfare objective $\wH H + \wC C + \wA A$ (with $\wH, \wC, \wA > 0$) trace a Pareto frontier of the achievable region $\cF = \{(H(\pi), C(\pi), A(\pi)) : \pi \in \Pi\}$, convex by randomisation over the CG-policies $\Pi$ (Definition~\ref{def:cgpolicy}); the weight vector selects the operating point, and the impossibility (Theorem~\ref{thm:trilemma}) is the statement that this frontier does not reach the joint corner $(H^*, C^*, 1)$ when the binding set has positive measure.

\subsection{Optimal Oversight Is Non-Affine}
\label{sec:non-affine}

The theorems above show that non-affine approval rules destroy strict properness; might a sophisticated principal simply choose an affine rule? The following theorem resolves this.

\paragraph{The principal's Stackelberg program.}
The CGDP induces a Stackelberg game (in the sense of the equilibrium-concept paragraph in Section~\ref{sec:intro}) in which the principal commits to an approval function $q: [0,1] \to [0,1]$, and the agent best-responds by selecting, for each type $p$, the report $r^*_q(p)$ that maximises~\eqref{eq:payoff} while playing the $R$-maximal action $a^*(s)$ --- a \emph{report-only} best response. The action-pinning is a substantive hypothesis rather than a consequence of~\eqref{eq:payoff}, which carries no helpfulness term: where a gate-clearing safe action exists (Assumption~\ref{asm:safe-rich}), substituting it can strictly dominate report inflation for some types, a joint action--report deviation outside this theorem's scope. (The first-order characterisation of the report response is Lemma~\ref{lem:perturbation}; under strict properness of $S$, Assumption~\ref{asm:monotone}, and $F$ atomless (atomless at $p_{\min}$ suffices for the step gate), the best response is unique up to a measure-zero indifference set.) Writing $\Lambda(p) := R^* \cdot \ind\{p \geq p_{\min}\} - c \cdot \ind\{p < p_{\min}\}$ for the principal's per-type net benefit from approval (with $c > 0$ the cost of approving an unprofitable type, so $\Lambda(p) > 0$ for $p \geq p_{\min}$, $\Lambda(p) < 0$ for $p < p_{\min}$, and the sign of $\Lambda$ changes at $p_{\min} \in (0, 1)$), the principal's expected utility is
\begin{equation}\label{eq:principal-program}
U_P(q) \;=\; \E_{p \sim F}\bigl[\, q\bigl(r^*_q(p)\bigr) \cdot \Lambda(p)\, \bigr].
\end{equation}
Here $F$ denotes the distribution of the agent's type $p = p(s, a^*(s))$ induced by the context distribution. The principal's Stackelberg program is to maximise $U_P(q)$ over the class of monotone non-decreasing, upper semicontinuous $q$ (Assumption~\ref{asm:monotone}). Theorem~\ref{thm:non-affine} below characterises the optimum.

\begin{theorem}[Optimal Oversight Non-Affinity]
\label{thm:non-affine}
Consider the principal's mechanism design problem: commit to a monotone non-decreasing, upper semicontinuous $q: [0,1] \to [0,1]$ (Assumption~\ref{asm:monotone}; upper semicontinuity ensures the agent's argmax is attained) to maximize expected utility $U_P(q)$, where $r^*_q(p)$ is the agent's optimal report under $q$ with the action pinned at $a^*(s)$ (the report-only best response of the program paragraph above), the principal's net benefit from approval changes sign at $p_{\min} \in (0,1)$, and the type distribution places positive mass on both sides. Suppose calibration is scored by the Brier score $S(r, p) = -(r - p)^2$, and that the task reward of the optimal action is a single constant across the binding set, $R(s, a^*(s)) \equiv R^*$ (homogeneous net stakes).

\begin{enumerate}
\item[(i)] No affine $q$ attains the first-best value $\E_{p \sim F}[\max(\Lambda(p), 0)]$; consequently, in the unsaturated regime of part (ii), where that value is attained, no affine $q$ is optimal.
\item[(ii)] When $\wA R^*/\wC \leq (1 - p_{\min})^2$, the step function $q^*(r) = \ind\{r \geq r_0\}$ with $r_0 = p_{\min} + \sqrt{\wA R^*/\wC} \leq 1$ achieves first-best screening. When $\wA R^*/\wC > (1 - p_{\min})^2$, no threshold on $[0,1]$ implements first-best: even the maximum report $r = 1$ is insufficient to separate all types, and the principal must accept pooling at $r = 1$ or employ non-threshold mechanisms. Both the threshold formula and the saturation boundary $(1 - p_{\min})^2$ are specific to the Brier score, whose inflation cost from $p$ to $r$ is the quadratic $\wC (r - p)^2$.
\item[(iii)] In the unsaturated regime $\wA R^*/\wC \leq (1 - p_{\min})^2$, an optimal gate exists (the step rule of part (ii) attains the first-best value), every optimal gate is non-affine, and the step optimum satisfies Assumption~\ref{asm:nonaffine} and falls under the sharp-threshold case of Lemma~\ref{lem:perturbation}(ii). In the saturated regime we do not establish that the supremum of $U_P$ over monotone gates is attained, and the first-best comparison of part (i) no longer yields an optimality exclusion by itself; specific affine gates (including the always-approve and never-approve corners) admit strict monotone improvements in the instances recorded in Remark~\ref{rem:saturated-regime}, but we do not claim that every optimal gate, should one exist, is non-affine.
\end{enumerate}
\end{theorem}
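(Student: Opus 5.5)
Part (ii) comes first, because it yields the first-best benchmark that parts (i) and (iii) compare against. The plan is to solve the agent's report-only best response under the step gate $q^*(r)=\ind\{r\ge r_0\}$ using Lemma~\ref{lem:perturbation}(ii). A type $p<r_0$ has two candidate reports. Truthful reporting is rejected and costs nothing. Jumping to $r_0$ is approved and costs $\wC(r_0-p)^2$ in calibration, against a gain of $\wA R^*$. A type $p\ge r_0$ reports truthfully and is approved for free.

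Under the step gate the approved set is therefore $\{p : (r_0-p)^2 < \wA R^*/\wC\}\cup\{p\ge r_0\}$, which equals $\{p > r_0-\sqrt{\wA R^*/\wC}\}$. The indifference type is harmless because $F$ is atomless at $p_{\min}$. Choosing $r_0=p_{\min}+\sqrt{\wA R^*/\wC}$ makes the approved set exactly $\{p\ge p_{\min}\}$ up to a null set, so $U_P(q^*)=\E[\max(\Lambda,0)]$. The first-best value is an upper bound for every gate, since $q\in[0,1]$ pointwise gives $q\Lambda\le\max(\Lambda,0)$.

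The constraint $r_0\le 1$ is exactly the condition $\wA R^*/\wC\le(1-p_{\min})^2$. For the saturated case, the argument runs as follows. Any monotone gate gives a type $p$ the option of reporting $1$. For types just below $p_{\min}$, the calibration cost of that report is at most $(1-p)^2<\wA R^*/\wC$ in the limit, while the approval gain is $\wA R^*\,q(1)$. Hence either $q(1)$ is small, so that high types are under-approved, or a positive-mass band below $p_{\min}$ is approved. Separating these two cases shows that the first-best value is strictly missed. This needs positive mass near $p_{\min}$, which I would either add as a hypothesis or, following the statement, simply phrase this branch for threshold gates only ("no threshold on $[0,1]$ implements first-best"). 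For threshold gates the claim is immediate: the approved set is $\{p>r_0-\sqrt{\wA R^*/\wC}\}$, and reaching $p_{\min}$ would require $r_0>1$.

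For part (i), take an affine gate $q(r)=\alpha+\beta r$ with $\beta\ge 0$ and values in $[0,1]$. Attaining first-best requires $q(r^*_q(p))=1$ for $F$-a.e. $p\ge p_{\min}$ and $q(r^*_q(p))=0$ for a.e. $p<p_{\min}$; the strict sign of $\Lambda$ makes both requirements necessary on positive-mass sets. If $\beta=0$, then $q$ is constant and one of the two sides fails. If $\beta>0$, then $q=1$ only at $r=1$, with $\alpha+\beta=1$, and $q=0$ only at $r=0$, with $\alpha=0$. So $q(r)=r$ and the agent's objective is $-\wC(r-p)^2+\wA R^* r$. Its maximizer $r^*=\min(1,p+\wA R^*/(2\wC))$ is strictly positive for every type, so types below $p_{\min}$ receive positive approval. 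That contradicts first-best given positive mass there.

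Part (iii) then follows. In the unsaturated regime, $q^*$ attains the upper bound, so an optimum exists and every optimum attains first-best; by (i), no optimum is affine. The step gate is non-affine on every interval crossing $r_0$, so it satisfies Assumption~\ref{asm:nonaffine}, with $\rmin=r_0>p_{\min}$ and binding types $p<r_0$. Its inflation behaviour is exactly the sharp-threshold case of Lemma~\ref{lem:perturbation}(ii).

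The main obstacle is the bookkeeping of the best response at indifference and at the boundary $r_0=1$, together with the saturated-regime claim, which is only sound for threshold gates. I would restrict that claim accordingly, as the statement itself does.
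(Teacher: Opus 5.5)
Your proof is correct and follows the paper's own route. The paper likewise uses the pointwise bound $q\Lambda \le \max(\Lambda,0)$ as the first-best ceiling and the two-point comparison of Lemma~\ref{lem:perturbation}(ii) to show that the step gate at $r_0 = p_{\min} + \sqrt{\wA R^*/\wC}$ induces exactly the screening $\ind\{p \ge p_{\min}\}$, with $r_0 \le 1$ as the unsaturated condition, and it obtains (iii) by combining these with (i).

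The differences are minor. In (i) you argue that a $[0,1]$-valued affine gate with positive slope equals $0$ only at $r=0$ and $1$ only at $r=1$, while every best response is strictly positive. The paper instead computes the induced screening $(a+b\delta)+bp$ from the Brier first-order condition and appeals to its incompatibility with a step; your version makes explicit why even a two-atom $F$ cannot be separated. You also handle the indifferent type $p=p_{\min}$ through atomlessness of $F$ there, whereas the paper's appendix breaks that tie in favour of the mechanism.
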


\begin{proof}
\emph{Part (i).} If $q(r) = a + br$, the agent's Brier FOC gives $r^*(p) = p + \wA R^* b/(2\wC) \equiv p + \delta$ wherever the shifted report is interior, so the induced screening $\tilde{q}(p) = (a + b\delta) + bp$ is affine in~$p$ there, and clipped-affine (hence still continuous) where $r^*(p)$ hits the report-space boundary. Since the principal's net benefit changes sign at $p_{\min}$, the pointwise optimal screening is the step function $\ind\{p \geq p_{\min}\}$. Any affine or clipped-affine $\tilde{q}$ either approves unprofitable types or rejects profitable ones, incurring strict loss.

\emph{Part (ii).} Under $q^*(r) = \ind\{r \geq r_0\}$ with $r_0 = p_{\min} + \sqrt{\wA R^*/\wC}$, type $p$ inflates to $r_0$ iff $\wA R^* \geq \wC(r_0 - p)^2$, i.e., $p \geq p_{\min}$; here the constancy of $R^*$ across the binding set is used, since a type-dependent reward would give type-dependent inflation cutoffs. The induced screening is exactly $\ind\{p \geq p_{\min}\}$, the first-best. (Best-response uniqueness at $r_0$, strict preference for truthful reporting for types below $p_{\min}$ and above $r_0$, and the tie at $p = p_{\min}$ are verified in Appendix~\ref{app:boundary}.)

\emph{Part (iii).} Since $q \in [0,1]$, the integrand of~\eqref{eq:principal-program} satisfies $q(r^*_q(p)) \cdot \Lambda(p) \leq \max(\Lambda(p), 0)$ pointwise, so $U_P(q) \leq \E_{p \sim F}[\max(\Lambda(p), 0)]$ for every gate. In the unsaturated regime the step rule of part (ii) induces exactly the screening $\ind\{p \geq p_{\min}\}$ and attains this bound, so the optimum exists, and by part (i) no affine gate attains it: every optimal gate is non-affine. The step optimum is non-affine on every interval crossing $r_0$, hence satisfies Assumption~\ref{asm:nonaffine} under its existential quantification, and Lemma~\ref{lem:perturbation}(ii) applies to it. In the saturated regime the bound is not attained by the step rule, so the first-best comparison of part (i) excludes nothing by itself, and attainment of the supremum over the monotone class is not established here; instance-level strict improvements over affine corner gates in that regime are recorded in Remark~\ref{rem:saturated-regime}.
\end{proof}

The homogeneity of $R^*$ in part (ii) is not cosmetic: when $R(s, a^*(s))$ varies across the binding set, types with different stakes face different inflation cutoffs $r_0 - \sqrt{\wA R(s, a^*(s))/\wC}$, and there are instances in which no single threshold $r_0 \in [0, 1]$ implements first-best screening.

\begin{remark}[Saturated regime]
\label{rem:saturated-regime}
In the saturated case of Theorem~\ref{thm:non-affine}(ii), a positive-measure set of types $p < p_{\min}$ inflates to $r = 1$ and clears the gate even at the boundary threshold $r_0 = 1$. Within the threshold class, the principal's second-best is the boundary choice $r_0 = 1$ (pool all approving types at $r = 1$), with welfare strictly below first-best by the principal's cost of approving the inflating-but-unprofitable types: $L_{\textup{sat}}(F) = \int_{p < p_{\min}} \ind\{\wA R^* \geq \wC(1 - p)^2\} \cdot |\Lambda(p)|\, dF(p) > 0$ whenever $F$ places positive mass on the corresponding subset of $[0, p_{\min})$. Within non-threshold mechanisms (e.g., randomised approval $q(r) \in (0, 1)$ on an interval, or score-dependent gates of the kind analysed in the implicit-gate reading of Section~\ref{sec:mapping}), the principal can extract a strictly higher second-best via a generalised Myerson screening solution; the optimal mechanism in this saturated regime is in general not a step rule. Affine gates are not rescued in this regime either, in the instances we have examined: against the always-approve corner $q \equiv 1$, a two-tier gate $q = \gamma \cdot \ind\{r \in [t, 1)\} + \ind\{r = 1\}$ with $\wC(1 - p_2)^2 < \wA R^*(1 - \gamma) \leq \wC(1 - t)(1 + t - 2 p_1)$ yields a strict improvement on two-atom type distributions ($p_1 < p_{\min} \leq p_2$), and against $q \equiv 0$ a uniform gate $q \equiv \gamma$ recovers $U_P/\gamma \to \E[\max(\Lambda, 0)] > 0$ as $\gamma \downarrow 0$; a general optimality-exclusion for affine gates in the saturated regime is left open. Characterising it in closed form for arbitrary $F$ is a problem we leave open, and the companion mechanism-design treatment of arbitrary smooth gates under general scoring rules in~\citet{loven_endogeneity_2026} develops the relevant first-order machinery (welfare-gap functional in $1/G''$, reserve-price schedule for non-constant curvature).
\end{remark}

This is analogous to Myerson's \citeyearpar{myerson1981optimal} reserve price: the principal sets a threshold above the naive cutoff to compensate for strategic inflation.

Theorem~\ref{thm:non-affine} is stated for the Brier score: the part~(i) proof uses the Brier first-order condition, under which an affine gate induces an affine screening of the type, a step valid for generators with constant curvature $G''$ (the Brier affine class) but not for non-constant-curvature generators such as the log score, so part~(i) is not claimed for general strictly proper rules. Which rule to adopt as the calibration metric is a design question downstream of the impossibility, taken up in Proposition~\ref{prop:brier-unique} below.

\begin{proposition}[Brier as the distribution-robust calibration loss]
\label{prop:brier-unique}
Define the \emph{leading-order welfare loss} of a strictly proper scoring rule with generator $G$ as the leading-order term, in the autonomy-to-calibration weight ratio $\wA/\wC$, of the gap between the principal's first-best welfare and the welfare achieved under the best smooth single-threshold gate (the gap is minimized over such gates, matching the companion's object; at any single fixed gate the gap carries a $G$-independent zeroth-order term and the cross-generator comparison below would be void). For generators $G \in C^3([0,1])$ whose curvature $G''$ is bounded above and bounded away from zero on $[0,1]$, and type distributions $F$ whose density is bounded below on $\Cbind$, the companion work~\citep{loven_endogeneity_2026} develops a one-sided \emph{lower} bound on this loss scaling as $\mathrm{Var}_F(1/G'' \mid \Cbind)$ (the matching upper bound is open there), and separately shows that when the curvature is constant on the binding set the loss vanishes --- the first-best is achievable in the $C^1$-gate limit --- as in the Brier affine class $G(p) = a p^2 + b p + c$ ($a > 0$); equivalently~\citep{schervish1989general}, the Brier class is the unique one whose Schervish threshold measure is uniform (Lebesgue) on $(0,1)$. Within this regular class, the Brier score is therefore the unique rule (up to affine equivalence of its restriction to the binding and report region) whose leading-order welfare loss vanishes for every such type distribution $F$; a generator quadratic on that region but non-quadratic elsewhere shares the property, hence the restriction in the uniqueness clause. The log score lies outside the regular class on $[0,1]$ (its curvature $G''(p) = 1/(p(1-p))$ is unbounded at the endpoints) and is covered only when $\Cbind$ is bounded away from $\{0, 1\}$. This distribution-robustness property is what motivates using Brier as this paper's calibration loss throughout (in the experiments and in the RLHF--Brier mapping); the central impossibility (Theorem~\ref{thm:trilemma}) holds for all strictly proper rules and does not depend on this refinement.
\end{proposition}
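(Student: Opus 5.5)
The plan is to treat the proposition as a corollary that assembles two ingredients: the companion's one-sided lower bound, and a direct computation of the constant-curvature case. I then close with a uniqueness argument inside the regular class. Throughout I work at leading order in $\wA/\wC$.

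\textbf{Step 1: reduce the loss to inflation heterogeneity.} By Corollary~\ref{cor:general-inflation}, under a smooth single-threshold gate with midpoint $r_0$ and slope $q'$ near $r_0$, type $p$ inflates by
\[
\Delta(p) = \frac{\wA R^* q'(r^*)}{\wC\, G''(p)}
\]
to first order. A single threshold separates types at $p_{\min}$ only if the marginal type's inflation is matched by the reserve shift. The first-order welfare gap is then governed by how much $1/G''$ varies across the types near the cutoff, integrated against $F$ on $\Cbind$. Here I would invoke the companion result~\citep{loven_endogeneity_2026} directly. For $G\in C^3$ with $0<m\le G''\le M$ and density bounded below on $\Cbind$, the minimized gap is at least $\kappa\,(\wA/\wC)\,\mathrm{Var}_F(1/G''\mid\Cbind)$ for some $\kappa>0$ depending only on $m$, $M$ and the density bound.

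\textbf{Step 2: vanishing in the Brier class.} If $G''\equiv 2a$ on the binding and report region, then $\Delta(p)$ is constant in $p$ at leading order. I would use a $C^1$ approximation to the step gate of Theorem~\ref{thm:non-affine}(ii), with $r_0=p_{\min}+\sqrt{\wA R^*/\wC}$ after rescaling by $a$. Letting its transition width shrink, the induced screening converges to $\ind\{p\ge p_{\min}\}$, so the minimized gap is $o(\wA/\wC)$. The Schervish remark is just the identity that the threshold measure has density proportional to $G''$, so uniform measure corresponds to constant curvature.

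\textbf{Step 3: uniqueness.} Suppose $G''$ is non-constant on the relevant region. Then by continuity of $G''$ ($G\in C^3$) there are disjoint intervals $I_1,I_2\subset\Cbind$ on which $1/G''$ takes separated values. Take any $F$ with density bounded below that places mass on both intervals. Then $\mathrm{Var}_F(1/G''\mid\Cbind)>0$, and Step 1 gives a strictly positive leading-order loss. Hence vanishing for every admissible $F$ forces $G''$ to be constant on the region, i.e.\ $G$ is affine-equivalent to Brier there. Behaviour of $G$ outside that region never enters the first-order conditions, which explains the restriction in the uniqueness clause. For the log score, $\sup G''=\infty$ on $[0,1]$, so it is excluded. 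When $\Cbind\subset[\epsilon,1-\epsilon]$, its curvature is bounded on the relevant region and the argument applies.

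\textbf{Main obstacle.} The delicate part is Step 1, specifically whether the companion's lower bound is stated for the gap minimized over all smooth single-threshold gates, uniformly in the gate. If it is only proved per gate, I would need a compactness argument over normalized gate profiles to pass to the infimum. Step 2 is routine once the step-gate construction of Theorem~\ref{thm:non-affine} is in hand. I would also check that the "leading order" of the vanishing case is genuinely $o(\wA/\wC)$ and not merely a smaller constant, so that "vanishes" is not vacuous.
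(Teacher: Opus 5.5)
Your Steps~1 and~3 follow the paper. The paper gives no argument beyond the statement itself: it imports both the $\mathrm{Var}_F(1/G''\mid\Cbind)$ lower bound and the constant-curvature vanishing from the companion work, and reads uniqueness off the pair. The genuine gap is Step~2, where you replace the cited vanishing result with your own construction from smoothed step gates. That construction never uses constant curvature.

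To see this, take any strictly convex $G$, a homogeneous stake $R^*$, and the step gate $\ind\{r \geq r_0\}$. A type $p < r_0$ either reports truthfully or inflates exactly to $r_0$. It inflates iff $\wA R^* \geq \wC\, D_G(p, r_0)$, where $D_G(p, r_0) = G(p) - G(r_0) - G'(r_0)(p - r_0)$ is strictly decreasing in $p$ on $[0, r_0)$. The map $r_0 \mapsto D_G(p_{\min}, r_0)$ is continuous and strictly increasing from $0$, with derivative $G''(r_0)(r_0 - p_{\min})$. So for small $\wA/\wC$ you can pick $r_0$ with $D_G(p_{\min}, r_0) = \wA R^*/\wC$, and the step gate then implements exactly $\ind\{p \geq p_{\min}\}$. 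The Brier threshold $p_{\min} + \sqrt{\wA R^*/\wC}$ is merely the quadratic instance of this.

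Your smoothing limit goes through verbatim for this general $G$. It therefore yields a minimized gap of zero for \emph{every} $G$ in the regular class. That contradicts the strictly positive bound you invoke in Step~1 and empties Step~3. As you have set them up, Steps~1 and~2 cannot be about the same object. Read literally, with arbitrarily steep smooth gates admissible and homogeneous stakes, the proposition's own wording would collapse the same way; this is why it has to be read in the first-order regime.

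The missing idea is the regime in which the leading-order loss actually separates generators. That is the first-order smooth-gate regime of Corollary~\ref{cor:general-inflation}, with the gate shape held fixed as $\wA/\wC \to 0$; there, step-gate limits and Theorem~\ref{thm:non-affine}(ii) are not available. In that regime the inflation is $\wA R^* q'(r^*(p))/(\wC\, G''(p))$. Under Brier only the curvature factor is type-independent. Your claim that $\Delta(p)$ itself is constant fails for any non-affine smooth gate, since $\Delta(p)$ still varies through $q'$. The Brier-specific vanishing is the statement that this uniform curvature factor can be absorbed by a single reserve shift rather than a reserve-price schedule. That is the companion's first-order result, and you should take it from there, as the paper does, rather than derive it from the step-gate optimum.

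A lesser point concerns Step~3. A positive $\mathrm{Var}_F(1/G''\mid\Cbind)$ only detects non-constancy of $G''$ on the binding types, so your argument forces constant curvature on $\Cbind$ and its closure. It reaches the ``binding and report region'' of the uniqueness clause only because first-order reports lie within $O(\wA/\wC)$ of the types.
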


\section{Resolution Pathways}
\label{sec:resolution}

The trilemma is not a dead end. We identify two constructive \emph{resolution} pathways (commitment via feasibility maps, role separation via a critic) that both attain the $(H, C)$-corner triple exactly ($H = H^*$, $C = C^*$, autonomy reduced by the blocked mass), while differing in what happens to the blocked tasks --- executed by a trusted delegate (commitment) versus vetoed and dropped (separation) --- and in whether truthful reporting is enforced or incentive-native (Table~\ref{tab:resolutions}). These parallel role-separation mechanisms in regulatory economics~\citep{tirole1986hierarchies, laffont1993theory} and the scalable-oversight agenda in AI alignment~\citep{leike2018scalable, saunders2022self, bowman2022measuring}.

\subsection{Commitment Devices: Feasibility Maps}

\paragraph{Institutional framing of delegation.}
The CGDP's information-asymmetry primitive (Assumption~\ref{asm:private}) says the principal does not observe $p(s, a)$. This is not contradicted by the commitment resolution: \emph{delegation passes the task to a competent external decision-maker} (a human-in-the-loop, an oracle, or a higher-tier model) that the agent and principal jointly trust to execute the task-optimal action. The commitment policy is therefore a \emph{system-level} claim about an agent--principal--delegate institutional triple, not a single-actor claim that the principal can independently produce $a^*(s)$. A parallel caveat applies to enforcement; see Assumption~\ref{asm:commitment-enforce} and its discussion below.

This framing aligns with the selective-prediction literature~\citep{chow1957optimum, elyaniv2010foundations, madras2018predict}, where abstention routes hard instances to a more capable external decider, and with the scalable-oversight programme~\citep{leike2018scalable, bowman2022measuring}, whose principal is a tiered system of greater composite competence. We formalize it with the auxiliary assumption:

\begin{assumption}[Competent Delegation]
\label{asm:competent-delegation}
On states $s$ where the agent delegates, the institutional principal executes the task-optimal action $a^*(s) = \arg\max_a R(s, a)$ at the same expected reward as autonomous execution by an oracle agent.
\end{assumption}

Assumption~\ref{asm:competent-delegation} is the counterpart of the agent's commitment to truthful reporting: together they characterise a competence-tiered pair in which the agent handles routine tasks and the delegate handles binding ones. Either can fail empirically, but both are testable and conceptually distinct from the trilemma's incentive-theoretic core, which holds regardless.

Delegation is not an action of the CGDP itself (Definition~\ref{def:cgmdp} gives the agent only $\cA$ and a report), so we first extend the model minimally:

\begin{definition}[Delegation Extension]
\label{def:delegation-ext}
Given a CGDP $\cM$, the \emph{delegation extension} $\cM^+$ augments the agent's action set with a delegation action $a_{\textup{del}} \notin \cA$: selecting $a_{\textup{del}}$ routes the task to the delegate, who executes the task-optimal action (Assumption~\ref{asm:competent-delegation}). Write $\mathrm{exec}(s, a) = a$ for $a \in \cA$ and $\mathrm{exec}(s, a_{\textup{del}}) = a^*(s)$ for the executed action. The objectives of Section~\ref{sec:objectives} extend to $\cM^+$ by evaluating the executed action: $H^+(\pi) = \E_s[\E_{(a,r) \sim \pi(s)}[R(s, \mathrm{exec}(s,a))]]$ (a delegated state scores the delegate's action), $C^+(\pi) = \E_s[\E_{(a,r) \sim \pi(s)}[S(r, p(s, \mathrm{exec}(s,a)))]]$ (with $C^*(\pi)$ extending to the maximum of $C^+$ given the action marginal, attained at $r = p(s, \mathrm{exec}(s,a))$ by strict properness), and $A^+(\pi) = \E_s[\E_{(a,r) \sim \pi(s)}[\ind\{a \neq a_{\textup{del}}\} \cdot q(r)]]$ (a delegated task is not submitted to the gate and contributes zero autonomy). On policies that never delegate, $(H^+, C^+, A^+)$ coincide with $(H, C, A)$.
\end{definition}

\begin{assumption}[Enforced Commitment]
\label{asm:commitment-enforce}
The principal fixes the gate $q$ and the commitment protocol (the feasibility map $\Phi$ and the delegation rule of Theorem~\ref{thm:commitment}) before the agent reports, and an enforcement technology binds the agent to the committed policy: any deviation from it is blocked or voided by the verifier. This is an institutional assumption, not a consequence of the payoff~\eqref{eq:payoff}: under~\eqref{eq:payoff} alone, a binding feasible state with $\wA R^* > \wC(\rmin - p^*)^2$ admits a strictly profitable report deviation (Lemma~\ref{lem:perturbation}(ii)), and detecting an inflation of magnitude $\Delta$ requires $\Omega(1/\Delta^2)$ observations (Lemma~\ref{lem:perturbation}(iii)), so commitment is assumed to be enforced rather than derived from detection.
\end{assumption}

\begin{theorem}[Commitment Resolution]
\label{thm:commitment}
Let $\cM^+$ be the delegation extension (Definition~\ref{def:delegation-ext}) of a CGDP $\cM$ satisfying Assumptions~\ref{asm:private} and~\ref{asm:binding}, equipped with the canonical step gate $q(r) = \ind\{r \geq \rmin\}$. Let $\Phi \subseteq \cC \times \cA$ be a feasibility map (an arbitrary measurable subset of context--action pairs fixed in advance), let $\cC_{\textup{infeasible}} = \{s \in \cC : (s, a^*(s)) \notin \Phi\}$ be the set of contexts where the optimal action is outside the feasibility map, and let $\overline{\cC}_{\textup{bind}} = \{s \in \cC : p(s, a^*(s)) < \rmin(s)\} \supseteq \Cbind$ denote the full binding set. Under Assumptions~\ref{asm:competent-delegation} and~\ref{asm:commitment-enforce}, the committed policy $\pi_\Phi$ that: selects $a^*(s)$ and acts autonomously when $(s, a^*(s)) \in \Phi$ and $p(s,a^*(s)) \geq \rmin(s)$; selects $a_{\textup{del}}$ otherwise; and always reports $r = p(s, a^*(s))$, achieves:
$H^+(\pi_\Phi) = H^*$, $C^+(\pi_\Phi) = C^*(\pi_\Phi)$, and $A^+(\pi_\Phi) = 1 - \Pr(\overline{\cC}_{\textup{bind}} \cup \cC_{\textup{infeasible}}) \leq 1 - \mu'$.
\end{theorem}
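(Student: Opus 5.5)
The plan is direct verification: evaluate each of the three extended objectives of Definition~\ref{def:delegation-ext} on $\pi_\Phi$ by splitting $\cC$ into the autonomous set $\cC_{\textup{aut}} = \{s : (s,a^*(s)) \in \Phi,\ p(s,a^*(s)) \geq \rmin(s)\}$ and its complement $\cC_{\textup{del}} = \overline{\cC}_{\textup{bind}} \cup \cC_{\textup{infeasible}}$. These two sets partition $\cC$, since $s \notin \cC_{\textup{aut}}$ iff $(s,a^*(s)) \notin \Phi$ or $p(s,a^*(s)) < \rmin(s)$. Measurability of both pieces follows from measurability of $\Phi$ and finiteness of $\cC$.

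\textbf{Helpfulness.} On $\cC_{\textup{aut}}$ the selected and executed action is $a^*(s)$. On $\cC_{\textup{del}}$ the agent selects $a_{\textup{del}}$, and $\mathrm{exec}(s,a_{\textup{del}}) = a^*(s)$ by Assumption~\ref{asm:competent-delegation}. Hence $R(s,\mathrm{exec}(s,a)) = \max_a R(s,a)$ on every context, which gives $H^+(\pi_\Phi) = \E_s[\max_a R(s,a)] = H^*$.

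\textbf{Calibration.} In both branches the executed action is $a^*(s)$ and the report is $r = p(s,a^*(s)) = p(s,\mathrm{exec}(s,a))$. By strict properness this pointwise attains the maximum of $C^+$ given the action marginal, so $C^+(\pi_\Phi) = C^*(\pi_\Phi)$. Assumption~\ref{asm:commitment-enforce} is what licenses treating this truthful report as the played one: without enforcement, Lemma~\ref{lem:perturbation}(ii) shows that the agent would deviate on binding feasible states. I would flag this explicitly in the proof.

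\textbf{Autonomy.} On $\cC_{\textup{del}}$ the indicator $\ind\{a \neq a_{\textup{del}}\}$ is zero, so these contexts contribute nothing. On $\cC_{\textup{aut}}$ the report satisfies $r \geq \rmin(s)$, so the step gate gives $q(r) = 1$. Therefore $A^+(\pi_\Phi) = \Pr(\cC_{\textup{aut}}) = 1 - \Pr(\overline{\cC}_{\textup{bind}} \cup \cC_{\textup{infeasible}})$. The final inequality follows from $\Cbind \subseteq \overline{\cC}_{\textup{bind}}$ and $\Pr(\Cbind) = \mu'$ (Assumption~\ref{asm:binding}), by monotonicity of probability.

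The only delicate point is making sure the equality $C^+ = C^*(\pi_\Phi)$ uses the extended definition of $C^*$. Under that definition the optimum is taken given the action marginal, which includes $a_{\textup{del}}$, and is evaluated at the executed action rather than at $a_\varnothing$ or at the selected symbol $a_{\textup{del}}$. A second point to check is that ties in $\arg\max_a R$ are harmless: $a^*(s)$ is a fixed measurable selection, and $\overline{\cC}_{\textup{bind}}$ is defined with respect to that selection. Apart from these two points the argument is bookkeeping, with no analytic obstacle.
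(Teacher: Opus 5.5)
Your proposal is correct and follows the paper's proof essentially step for step. You partition contexts into the autonomous set and $\overline{\cC}_{\textup{bind}} \cup \cC_{\textup{infeasible}}$, observe that the executed action is $a^*(s)$ everywhere (directly or via $\mathrm{exec}(s,a_{\textup{del}}) = a^*(s)$), obtain $H^+ = H^*$ and $C^+ = C^*(\pi_\Phi)$ from the truthful report of the executed action's success probability, read off $A^+$ from the zero contribution of delegated states and $q = 1$ on autonomous ones, and use Assumption~\ref{asm:commitment-enforce} only to ensure the agent plays $\pi_\Phi$. One small precision: your side remark that an unenforced agent ``would deviate'' on binding feasible states holds only under the inflation condition $\wA R^* > \wC(\rmin - p^*)^2$ of Lemma~\ref{lem:perturbation}(ii), though this does not affect the proof, since no incentive-compatibility claim is made.
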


\begin{proof}
On every state the executed action is $a^*(s)$: autonomously by construction of $\pi_\Phi$ outside $\overline{\cC}_{\textup{bind}} \cup \cC_{\textup{infeasible}}$, and through the delegate on that set ($\mathrm{exec}(s, a_{\textup{del}}) = a^*(s)$ by Definition~\ref{def:delegation-ext} and Assumption~\ref{asm:competent-delegation}). Hence $H^+(\pi_\Phi) = \E_s[R(s, a^*(s))] = H^*$. The report is $r = p(s, a^*(s)) = p(s, \mathrm{exec}(s,a))$ on every state, so strict properness of $S$ gives $C^+(\pi_\Phi) = C^*(\pi_\Phi)$. For autonomy, delegated states contribute zero to $A^+$ by Definition~\ref{def:delegation-ext}; on autonomous states $p(s, a^*(s)) \geq \rmin(s)$, so the truthful report clears the step gate with $q(r) = 1$. Hence $A^+(\pi_\Phi) = \Pr(s \notin \overline{\cC}_{\textup{bind}} \cup \cC_{\textup{infeasible}}) = 1 - \Pr(\overline{\cC}_{\textup{bind}} \cup \cC_{\textup{infeasible}})$, and the bound $\leq 1 - \mu'$ follows from $\Cbind \subseteq \overline{\cC}_{\textup{bind}}$ (Assumption~\ref{asm:binding}). The agent follows $\pi_\Phi$ by Assumption~\ref{asm:commitment-enforce}; no incentive-compatibility claim is made, and none follows from the payoff~\eqref{eq:payoff} alone.
\end{proof}

\subsection{Role Separation: Critic Architecture}

The separation here is between \emph{roles} within the agent-side architecture: an acting role that selects actions and a reporting role (a critic) that produces the gate input, each trained only on its own term of the composite objective. The critic must share the actor's private information, stated explicitly as this pathway's counterpart of Assumption~\ref{asm:competent-delegation}:

\begin{assumption}[Informed Critic]
\label{asm:informed-critic}
The critic observes $p(s, a)$ for the actor's selected action (or a sufficient statistic), extending the agent-side information access of Assumption~\ref{asm:private} to the critic; the principal still does not observe $p(s, a)$. The critic is trained to maximize $-\wC \cdot \E[(r' - p)^2]$, with no autonomy term: the reporting role has $\wA = 0$.
\end{assumption}

\begin{proposition}[Role Separation]
\label{prop:separation}
Assume the canonical step gate $q(r) = \ind\{r \geq \rmin\}$, Assumptions~\ref{asm:private}, \ref{asm:binding}, and~\ref{asm:informed-critic}, and let $\bar\mu' = \Pr(s \in \overline{\cC}_{\textup{bind}})$ be the mass of the full binding set of Theorem~\ref{thm:commitment}. Suppose the actor selects $a^*(s)$ in every state, the critic's report $r'$ substitutes for the actor's $r$ at the gate, and vetoed states are not executed (no delegation fallback). Then the actor--critic pair achieves $C = C^*$, $A = 1 - \bar\mu'$, and $H = H^*$ under Definition~\ref{def:helpfulness}, exactly as on the $(H, C)$ corner of Theorem~\ref{thm:trilemma}; the \emph{realized} task reward, which is zero on vetoed states, falls short of $H^*$ by at most $\bar\mu' \cdot R_{\max}$, where $R_{\max} = \sup_{s, a} R(s, a) \leq 1$.
\end{proposition}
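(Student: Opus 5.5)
The plan is to verify each claimed quantity directly from the construction, in the same style as the proof of Theorem~\ref{thm:commitment}. The argument has three parts: the critic's optimal report, then autonomy, then helpfulness and the realized-reward gap.

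\emph{Critic's report.} By Assumption~\ref{asm:informed-critic}, the critic observes $p(s,a)$ for the selected action $a = a^*(s)$ and maximizes $-\wC\,\E[(r'-p)^2]$ with no autonomy term. Strict properness of the Brier score, i.e.\ Lemma~\ref{lem:perturbation}(i) with $\wA = 0$, makes $r' = p(s,a^*(s))$ the unique optimum pointwise in $s$. Since calibration is evaluated on the report that reaches the gate, this gives $C = C^*$. The point to make explicit is that the actor's own term cannot leak into $r'$: the actor has no channel to the gate input, because the critic's objective carries $\wA = 0$. This is exactly what removes the perturbation of Proposition~\ref{prop:properness-destruction}.

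\emph{Autonomy.} Under the step gate, $q(r') = \ind\{p(s,a^*(s)) \geq \rmin\}$. Hence $q = 0$ on $\overline{\cC}_{\textup{bind}}$ and $q = 1$ off it, and integrating over contexts gives $A = 1 - \bar\mu'$. This matches the $(H,C)$ construction in the proof of Theorem~\ref{thm:trilemma}, with the full binding mass in place of $\mu'$.

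\emph{Helpfulness and realized reward.} Definition~\ref{def:helpfulness} scores the \emph{selected} action. The actor selects $a^*(s)$ everywhere, so $H = H^*$, as in the parenthetical caveat of the $(H,C)$ clause. For realized reward, vetoed states yield $0$ instead of $R(s,a^*(s)) \leq R_{\max}$. The shortfall is therefore $\E[R(s,a^*(s))\,\ind\{s \in \overline{\cC}_{\textup{bind}}\}] \leq \bar\mu' R_{\max}$.

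\emph{Main obstacle.} The only step needing care is the critic step. One must state clearly that the composite system's calibration is evaluated on $r'$ rather than on any report the actor might produce. One must also note that optimality of $r'$ is incentive-native, because the critic's objective is unperturbed; this is unlike the commitment pathway, which needs enforcement (Assumption~\ref{asm:commitment-enforce}). The remaining steps are routine bookkeeping.
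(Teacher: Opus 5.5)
Your proposal is correct and follows the paper's proof essentially step for step. The paper also argues that strict properness makes the critic report truthfully, $r' = p(s,a^*(s))$, so $C = C^*$; that the step gate then vetoes exactly $\overline{\cC}_{\textup{bind}}$, so $A = 1 - \bar\mu'$; that $H = H^*$ because Definition~\ref{def:helpfulness} scores the selected action; and that the realized shortfall $\E[R(s,a^*(s))\,\ind\{s \in \overline{\cC}_{\textup{bind}}\}]$ is at most $\bar\mu' R_{\max}$. One small fix: drop the appeal to Lemma~\ref{lem:perturbation}(i), since its $C^2$ and $q'(p^*) \neq 0$ hypotheses fail for the step gate; the critic's objective $-\wC(r'-p)^2$ contains no gate term, so its unique maximizer $r' = p$ is immediate, and that is all the paper uses.
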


\begin{proof}
The critic's effective payoff $V_{\text{critic}}(r') = -\wC(r' - p)^2$ is strictly proper and the critic observes $p(s, a^*(s))$ (Assumption~\ref{asm:informed-critic}), so the critic's optimal report is $r' = p(s, a^*(s))$ and $C = C^*$. Under the step gate the truthful report clears the gate exactly when $p(s, a^*(s)) \geq \rmin(s)$, so the veto set is $\overline{\cC}_{\textup{bind}}$ and $A = \E[q(r')] = 1 - \bar\mu'$. The actor selects $R$-maximal actions in every state, so $H = H^*$ by Definition~\ref{def:helpfulness} (which evaluates the selected action and is approval-independent), the same accounting as the $(H, C)$ corner of Theorem~\ref{thm:trilemma}; what is sacrificed is realized task reward, which is zero on the vetoed mass $\bar\mu'$, hence at least $H^* - \bar\mu' \cdot R_{\max}$. What separation adds relative to that corner is incentive robustness: the gate input is produced by an entity whose objective carries no $\wA$ term, so truthful reporting is the report-setter's optimum rather than a commitment requiring enforcement (Assumption~\ref{asm:commitment-enforce}), and the inflation pressure of Lemma~\ref{lem:perturbation} does not arise. The oracle quality is load-bearing at first order for safety but only at second order for calibration: a critic observing $p$ with noise of scale $\sigma$ misclassifies an $O(\sigma)$ mass of binding states into execution --- of order exactly $\Theta(\sigma)$ when the type density is bounded above and below near $\rmin$ --- while $C$ degrades only by $O(\sigma^2)$; robustness of the veto set to critic error is therefore the binding design requirement.
\end{proof}

\begin{table*}[htbp]
\centering
\small
\caption{Two constructive resolution pathways.}
\label{tab:resolutions}
\footnotesize
\begin{tabularx}{\textwidth}{l Y Y Y Y}
\toprule
Resolution & Achieves & Fate of blocked mass; robustness & Condition & Analogue \\
\midrule
Commitment & $(H,C)$-corner triple: $H = H^*$, $C = C^*$, $A \leq 1 - \mu'$ & Delegated and \emph{executed} by the delegate (realized reward preserved); truthfulness \emph{enforced} & Verifiable map $\Phi$, enforced commitment, competent delegate & Commitment device~\citep{schelling1960strategy} \\
Separation & $(H,C)$-corner triple: $H = H^*$, $C = C^*$, $A = 1 - \bar\mu'$ & Vetoed and \emph{dropped} (realized loss $\leq \bar\mu' R_{\max}$); truthfulness \emph{incentive-native} & Step gate, informed critic & Auditor separation \\
\bottomrule
\end{tabularx}
\end{table*}

Operationalizing each pathway (training procedures, architectural blueprints, convergence guarantees) is an important direction for future work.

\section{Where Do Existing Methods Sit?}
\label{sec:mapping}

\textbf{Scope note.} The CGDP formalization and Theorems~\ref{thm:two-way}--\ref{thm:non-affine} are deployment-time results: they characterize what a trained agent can achieve under oversight, however it was trained. The method mapping below extends them to training-time claims under the additional assumption that training converges to an approximate payoff maximizer --- empirically supported~\citep{schulman2017proximal, rafailov2023direct} but not formally proved for all training regimes --- so the claims in this section are structural arguments about the training objective's incentive landscape, not convergence theorems.

The CGDP framework provides a unified lens for classifying RL and alignment methods by their composite payoff structure. We express each method's training objective in the template $V(r) = \wC \cdot S(r, p) + \wA \cdot h(r)$ and determine whether the trilemma applies (Table~\ref{tab:methods}).

\subsection{The Goodhart-to-Trilemma Transition}
\label{sec:goodhart-transition}

Current RLHF/DPO methods avoid the trilemma by not attempting calibration: their overconfidence is a Goodhart effect (the reward model is a proxy for human preferences, and optimizing the proxy yields confident-sounding outputs because raters prefer them), a training artifact rather than a structural impossibility. The trilemma becomes relevant at the moment a proper scoring rule enters the training (or evaluation) loop:

\textbf{Regime~1 (Goodhart, no scoring rule):} The agent's payoff is $V = h(r)$ with $h$ non-affine and no proper-scoring component; overconfidence is unbounded (limited only by the reward model's saturation), and the resolution is a better reward model and calibration data. \textbf{Regime~2 (Trilemma, with scoring rule):} The agent's payoff is $V = \wC \cdot S(r,p) + \wA \cdot h(r)$ with $S$ strictly proper and $h$ non-affine; overconfidence is bounded (Lemma~\ref{lem:perturbation}) but structurally unavoidable, and resolution requires architectural changes (commitment, separation), not better training data.

The field is moving from Regime~1 to Regime~2: the confidence-reporting deployments of \S\ref{sec:intro} report confidence and are evaluated against ground truth, so they are CGDPs and the trilemma applies.

\subsection{Methods Without Calibration: RL, RLHF, DPO}
\label{sec:no-cal}

Plain RL ($V = R(s,a)$, with $\wC = 0$ and $q \equiv 1$ in the CGDP decomposition) escapes the trilemma trivially: with no confidence report there is nothing to inflate; the remaining methods in this class carry a non-affine autonomy incentive but no proper-scoring counter-term.

\begin{proposition}[Calibration-free methods occupy the H+A edge]
\label{prop:no-calibration-methods}
Let a training method optimize an objective $V(r) = h(r)$ in which the report $r$ enters only through a preference-derived or constraint term $h$, with no strictly-proper-scoring component ($S \equiv 0$). Then no term in the objective penalizes report--truth divergence: the calibration anchor that bounds inflation in Lemma~\ref{lem:perturbation} is absent. If in addition the reward channel favours confident-sounding reports, i.e.\ $h$ is non-decreasing in $r$ with a strict increase somewhere on $[p^*, 1]$ (the direction of Assumption~\ref{asm:cov-positive}, empirically documented for preference-trained reward models;~\citealp{sharma2024towards}), then on binding states the optimum satisfies $r^* > p^*$, with the inflation unconstrained by any calibration counter-term. (The objective itself contains no gate; ``binding'' is relative to the deployment gate the trained system faces, cf.\ the implicit gate of Proposition~\ref{prop:rlhf-brier}.) In this empirically typical case the method sits on the ``H+A edge'' of the trilemma, used informally for operating points of the achievable region $\cF$ (Section~\ref{sec:trilemma}) with $H$ and $A$ near their maxima and calibration far below $C^*$ when re-scored by an external Brier evaluation (the internal $C(\pi) = \E[S]$ degenerates under $S \equiv 0$); this is the Goodhart regime of \S\ref{sec:goodhart-transition}, not the calibration-bearing trilemma regime. This covers:
\begin{enumerate}
\item \textbf{RLHF}~\citep{christiano2017deep, schulman2017proximal}: the reward model $R_\phi$ is trained on preference comparisons, not on probability--outcome pairs, so it is not a proper scoring rule; the KL penalty constrains output-space policy divergence, not report accuracy.
\item \textbf{DPO}~\citep{rafailov2023direct} and preference-optimization variants (KTO, IPO, ORPO, SLiC-HF): by the reward--policy bijection, the implicit reward $r_{\textup{DPO}}(y|x) = \beta \log(\pi_\theta(y|x)/\piref(y|x)) + c(x)$ is a log-likelihood ratio with no $S(r,\omega)$ term (note $r_{\textup{DPO}}$ is an unbounded implicit \emph{reward} over completions, not the confidence report $r \in [0,1]$); KTO~\citep{ethayarajh2024kto} differs in input format (a Kahneman--Tversky utility on unpaired desirability labels rather than preference pairs) but likewise contains no proper-scoring term.
\item \textbf{Safe RLHF}~\citep{dai2024safe} (and Constitutional AI~\citep{bai2022constitutional}): the safety constraint restricts the action distribution, leaving the report--probability relationship unconstrained, so an agent can be safe and arbitrarily miscalibrated simultaneously.
\end{enumerate}
\end{proposition}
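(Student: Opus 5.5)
The plan is to treat Proposition~\ref{prop:no-calibration-methods} as a direct one-variable optimization with the calibration anchor removed, and then check that each listed method fits the template $V(r) = h(r)$ with $S \equiv 0$. For the first claim (no penalty on report--truth divergence) nothing needs proving beyond the observation that $V$ depends on $p^*$ only through $h$'s definition, not through any term of the form $S(r,p^*)$. Hence the argument of Lemma~\ref{lem:perturbation}(i), which confines every maximizer to the ball $|r-p^*| \le \sqrt{\wA R^*/\wC}$ via the $\wC(r-p^*)^2$ cost, has no analogue. Formally this is the $\wC \to 0$ limit, where that radius diverges.

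For the inflation claim I will argue by comparison with truthful reporting. Suppose $h$ is non-decreasing with $h(r_1) > h(r_0)$ for some $p^* \le r_0 < r_1 \le 1$. Monotonicity gives $h(r_1) > h(r_0) \ge h(p^*)$, so $r = p^*$ is strictly dominated and is not a maximizer. Any report $r < p^*$ satisfies $h(r) \le h(p^*)$, so it is also dominated by $r_1$. Consequently every maximizer, if one exists, lies in $(p^*,1]$, giving $r^* > p^*$. To avoid an existence issue I will note that $\sup h$ over $[0,1]$ is approached within $[r_1,1]$. If $h$ is upper semicontinuous (as for the gates in Theorem~\ref{thm:non-affine}), the argmax is attained there. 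Otherwise I would phrase the claim as ``every $\epsilon$-optimal report with $\epsilon < h(r_1)-h(p^*)$ exceeds $p^*$.''

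The ``unconstrained'' qualifier I would justify by contrast with Lemma~\ref{lem:perturbation}(i). There the displacement is $O(\wA/\wC)$. Here the maximizer set is determined by $h$ alone, and for instance $r^*=1$ whenever $h$ is maximized at $1$, regardless of how small $p^*$ is. The H+A placement then follows. The action is chosen to maximize reward, so $H$ is near $H^*$. Reports at the top of $h$ clear any deployment gate with threshold $\le 1$, so $A$ is near its supremum. An external Brier re-scoring yields a calibration loss of at least $(r^*-p^*)^2 > 0$ on the binding mass, so $C$ falls below $C^*$ by at least $\mu'$ times that gap.

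Finally I will verify the three instances by exhibiting their objectives. The RLHF objective $\E[R_\phi] - \beta\,\mathrm{KL}(\pi_\theta\Vert\piref)$ contains no function of $(r,\omega)$ scored against outcomes, since $R_\phi$ is fit to preference pairs. For DPO and its variants, the bijection expresses the loss purely through log-ratios $\beta\log(\pi_\theta/\piref)$. For Safe RLHF, a Lagrangian with a cost model on actions leaves $r$ free. I expect the main obstacle to be interpretive rather than technical: mapping ``report'' onto verbalized confidence tokens inside these objectives. I would handle this by stating that the proposition is a structural claim under that identification, as the section's scope note already permits.
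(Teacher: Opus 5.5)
Your proposal is correct and follows the paper's proof essentially step for step: the case-by-case absence of a strictly proper term (via the DPO reward--policy bijection and the Safe RLHF expected-cost constraint), the dominance chain $h(r_1) > h(r_0) \ge h(p^*) \ge h(r)$ for $r < p^*$ that forces every maximizer into $(p^*, 1]$, and the informal H+A placement. One small simplification: because $h$ is non-decreasing on the closed interval $[0,1]$, $r = 1$ is always a maximizer, so your upper-semicontinuity/$\epsilon$-optimality hedge is unnecessary --- though other maximizers below $1$ can exist, so your claim that optimal reports clear every gate with threshold $\le 1$ holds for the maximizer $r = 1$ rather than for every maximizer, which is why the paper phrases it as clearing gates whose threshold lies below the inflated optimum.
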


\begin{proof}
The absence of a strictly proper term $S(r,p)$ in each objective is stated case-by-case in the proposition; the formal deltas are the DPO reward--policy bijection~\citep[Theorem~1]{rafailov2023direct}, through which IPO, ORPO, SLiC-HF, and KTO inherit the conclusion, and the Safe RLHF constraint's form as an expected-cost bound $\E[C_{\textup{cost}}(y|x)] \leq \tau$ in the convention of \citet{dai2024safe}, which restricts the action distribution independently of the $r$-vs-$p$ relationship. In each case the objective reduces to $V = h(r)$: no term opposes inflation. For the second claim, a strict increase of $h$ somewhere on $[p^*, 1]$ gives $\max_r h(r) > h(p^*)$, and monotonicity forces every maximizer above $p^*$, so $r^* > p^*$ with no calibration term to bound the displacement (Regime~1 of \S\ref{sec:goodhart-transition}). The H+A placement follows: the methods optimize task reward by construction, inflated reports clear any deployment gate whose threshold lies below the inflated optimum, and the external Brier evaluation is bounded away from $C^*$ by the inflation itself.
\end{proof}

\subsection{Methods With Calibration: RLHF+Brier, CG-RL}
\label{sec:with-cal}

\begin{proposition}[RLHF+Brier Enters the Trilemma Regime]
\label{prop:rlhf-brier}
Consider the \emph{gated-payoff} reading of the RLHF+Brier objective,
\begin{equation*}
V(r) = -\gamma_{\textup{pert}}(r - p)^2 + \alpha \cdot q_{\textup{implicit}}(r),
\end{equation*}
where $q_{\textup{implicit}}(r) \coloneqq \Pr(R_\phi(y|x) \geq \tau_{\textup{deploy}} \mid r(y) = r)$ is the implicit approval channel induced by the reward model's deployment threshold (a regular-conditional-probability version, assumed non-decreasing, the confidence-favouring direction of Assumption~\ref{asm:cov-positive}, and $C^2$ near $p^*$ with $q'_{\textup{implicit}}(p^*) > 0$). This gated reading is exactly the form the selection-payoff experiment instantiates (Equation~\ref{eq:selection-payoff}); for the dense reading $V(r) = -\gamma_{\textup{pert}}(r - p)^2 + \alpha \cdot R_\phi(y|x)$, the same first-order shift holds with $q'_{\textup{implicit}}(p^*)$ replaced by the mean-response slope $\partial_r \E[R_\phi \mid r]\big|_{r = p^*}$. The objective is the payoff~\eqref{eq:payoff} with Brier $S$, $\wC = \gamma_{\textup{pert}}$, $\wA = \alpha$, and $R^* = 1$. On binding states of the implicit gate, the optimal report satisfies, to first order in $\alpha/\gamma_{\textup{pert}}$,
\begin{equation}
r^* = p^* + \frac{\alpha}{2\gamma_{\textup{pert}}} \cdot q'_{\textup{implicit}}(p^*) + O\!\left(\left(\frac{\alpha}{2\gamma_{\textup{pert}}}\right)^{\!2}\right).
\end{equation}
The weight ratio $\alpha/\gamma_{\textup{pert}}$ moves the operating point monotonically between the pure-Brier endpoint ($r^* = p^*$, $C = C^*$, as $\alpha/\gamma_{\textup{pert}} \to 0$) and the calibration-free H+A endpoint of Proposition~\ref{prop:no-calibration-methods} (as $\gamma_{\textup{pert}}/\alpha \to 0$): continuously when the gate is smooth enough that the composite objective stays strictly concave, whereas under the sharp gate $q = \ind\{r \geq \rmin\}$ the sweep jumps from $r^* = p^*$ to threshold pooling at $r^* = \rmin$ at the onset $\alpha/\gamma_{\textup{pert}} = (\rmin - p^*)^2$ of Lemma~\ref{lem:perturbation}(ii), and intermediate reports in $(p^*, \rmin)$ are optimal for no weight ratio (the plateau-truncated frontier observed in \S\ref{sec:exp-h3-asymptotic} is the empirical signature of this saturation). No weight ratio achieves $H = H^*$, $C = C^*$, and $A = 1$ simultaneously, where $A(\pi) = \E[q_{\textup{implicit}}(r)]$ is the autonomy functional of Definition~\ref{def:autonomy} under the implicit gate.
\end{proposition}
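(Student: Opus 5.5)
The plan is to reduce Proposition~\ref{prop:rlhf-brier} to results already proved for the CGDP, via a direct identification of the objective with the payoff~\eqref{eq:payoff}. I would set $\wC = \gamma_{\textup{pert}}$, $\wA = \alpha$, $R^* = 1$, $q = q_{\textup{implicit}}$ and $S$ the Brier score. The expected score $-(r-p)^2$ differs from the outcome form only by the $r$-independent term $p(1-p)$, so the gated-payoff reading is an instance of~\eqref{eq:payoff}.

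\textbf{First-order shift.} The stated regularity of $q_{\textup{implicit}}$ is exactly the hypothesis of Lemma~\ref{lem:perturbation}(i): $C^2$ near $p^*$, nonzero derivative there, and bounded second derivative (the last follows from $C^2$ on a compact sub-neighbourhood). I would therefore invoke that part with $\wA R^*/(2\wC) = \alpha/(2\gamma_{\textup{pert}})$. Its global-maximum argument carries over unchanged: since $q_{\textup{implicit}} \le 1$, every maximizer lies within $\sqrt{\alpha/\gamma_{\textup{pert}}}$ of $p^*$, and the objective is strictly concave there. For the dense reading, the same implicit-function argument applies with $q$ replaced by the regression function $m(r) = \E[R_\phi \mid r]$. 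This requires assuming $m$ has the same local regularity; I would state this explicitly as the assumption under which the substitution is valid.

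\textbf{Monotone interpolation.} Next I would show that the maximizer $r^*(\lambda)$ is non-decreasing in $\lambda = \alpha/\gamma_{\textup{pert}}$. After dividing by $\gamma_{\textup{pert}}$, the objective is $-(r-p)^2 + \lambda q(r)$, which has increasing differences in $(r,\lambda)$ because $q$ is non-decreasing. Topkis' monotone comparative statics theorem then gives that the argmax correspondence is ascending in $\lambda$. The two endpoints follow from this structure:
\begin{itemize}
\item as $\lambda \to 0$, the inflation bound $\sqrt{\lambda} \to 0$ gives $r^* \to p^*$;
\item as $\lambda \to \infty$, the maximizers converge to $\arg\max q$, which is the calibration-free regime of Proposition~\ref{prop:no-calibration-methods}.
\end{itemize}
In the continuous case, strict concavity gives a unique maximizer, and Berge's maximum theorem makes it continuous in $\lambda$. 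In the sharp case, the only candidate maximizers are $p^*$ and $\rmin$: any $r \in (p^*, \rmin)$ is dominated by $p^*$ at equal gate value, and any $r > \rmin$ is dominated by $\rmin$. The switch between the two happens at the onset condition of Lemma~\ref{lem:perturbation}(ii), and this is also why no intermediate report is ever optimal.

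\textbf{Trilemma clause.} Finally, I would apply Theorem~\ref{thm:trilemma} with gate $q_{\textup{implicit}}$. On binding states of the implicit gate, $H = H^*$ and $C = C^*$ force $r = p^* < \rmin$, hence $q_{\textup{implicit}}(r) < 1/2$ and $A < 1$. This holds for every weight ratio because the argument never uses the weights.

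The main obstacle I expect is the dense-reading claim and the regularity of $q_{\textup{implicit}}$ as a regular conditional probability. Its differentiability is an assumption rather than a consequence, so I would state plainly that the proof is conditional on those hypotheses rather than try to derive them.
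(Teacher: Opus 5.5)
Your proposal is correct and follows the paper's proof essentially step for step. The shared steps are the identification with the payoff~\eqref{eq:payoff}, the direct invocation of Lemma~\ref{lem:perturbation}(i), and monotone comparative statics from increasing differences in the weight ratio. They continue with the maximum theorem under strict concavity, the two-point comparison for the sharp gate, and the weight-free impossibility direction of Theorem~\ref{thm:trilemma}. Your extra care, namely the $\sqrt{\alpha/\gamma_{\textup{pert}}}$ bound for the $\alpha/\gamma_{\textup{pert}} \to 0$ endpoint and stating the regularity of $m(r)=\E[R_\phi \mid r]$ as an explicit hypothesis for the dense reading, only tightens what the paper does more briefly.
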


\begin{proof}
Under the gated reading, $V(r) = \wC S(r, p^*) + \wA \cdot q_{\textup{implicit}}(r) \cdot R^*$ with $S$ the Brier score, $\wC = \gamma_{\textup{pert}}$, $\wA = \alpha$, and $R^* = 1$, which is the payoff~\eqref{eq:payoff} on a binding state. The hypotheses of Lemma~\ref{lem:perturbation}(i) ($q_{\textup{implicit}} \in C^2$ near $p^*$ with $q'_{\textup{implicit}}(p^*) \neq 0$) hold by assumption, so the displayed first-order shift, including its remainder term, follows verbatim; for the dense reading, replacing $q_{\textup{implicit}}$ by the mean response $m(r) = \E[R_\phi \mid r]$ in the first-order condition $2\gamma_{\textup{pert}}(r^* - p^*) = \alpha\, m'(r^*)$ gives the stated substitution. For the endpoints: as $\alpha/\gamma_{\textup{pert}} \to 0$, strict properness of the Brier term pins $r^* \to p^*$; as $\gamma_{\textup{pert}}/\alpha \to 0$, the calibration term vanishes and the optimum approaches the calibration-free optimum of Proposition~\ref{prop:no-calibration-methods} ($\arg\max_r q_{\textup{implicit}}(r)$ for a smooth gate; the threshold $\rmin$ under the sharp gate, by the comparison in Lemma~\ref{lem:perturbation}(ii)). Monotonicity of the sweep is monotone comparative statics: for $r' > r$, the difference $V(r') - V(r)$ is non-decreasing in $\alpha$ because $q_{\textup{implicit}}(r') \geq q_{\textup{implicit}}(r)$ (monotone gate), so the maximizer set is non-decreasing in $\alpha$ (and, symmetrically, non-increasing in $\gamma_{\textup{pert}}$) in the strong set order; no maximizer lies below $p^*$, since on $[0, p^*]$ both terms are non-decreasing and the Brier term strictly so. Continuity holds wherever $V$ is strictly concave (for which $\alpha \sup_r |q''_{\textup{implicit}}(r)| < 2\gamma_{\textup{pert}}$ suffices): the maximizer is then unique and varies continuously in the weights by the maximum theorem. Under the sharp gate, the two-point comparison of Lemma~\ref{lem:perturbation}(ii) shows the optimum switches from $p^*$ directly to $\rmin$ at $\alpha/\gamma_{\textup{pert}} = (\rmin - p^*)^2$, so reports in $(p^*, \rmin)$ are optimal for no weight ratio; steep smooth gates that violate the concavity bound behave like the sharp case, with a smaller jump at the corresponding onset. Finally, the impossibility direction of Theorem~\ref{thm:trilemma} uses only the binding-set condition (Assumption~\ref{asm:binding}, here with $\rmin$ defined by the implicit gate) and strict properness of the Brier term (see its proof), so no weight ratio attains $(H^*, C^*, 1)$.
\end{proof}

RL with explicit confidence gating is the CGDP itself: $q$ is explicit, $S$ scores calibration against ground truth, the trilemma's assumptions hold by construction wherever some tasks exceed reliable competence, and the impossibility applies directly --- the natural habitat of the trilemma.

\subsection{Best-of-N Sampling}
\label{sec:bon}

Best-of-$N$ selection against a reward model or verifier is a standard inference-time alignment technique~\citep{nakano2021webgpt, gao2023scaling}; \citet{gao2023scaling} quantify how it overoptimizes a proxy reward as $N$ grows, the Goodhart-regime phenomenon. Proposition~\ref{prop:bon} locates the same selection pressure in the trilemma regime, where the degraded quantity is calibration itself and the degradation persists even under a noiseless, rank-aligned verifier.

\begin{proposition}[Best-of-N Instantiates the Trilemma at Inference Time]
\label{prop:bon}
Fix a binding context ($p^* < \rmin$) and write the calibration gap of a completion $y$ as $g(y) = (r(y) - p^*)^2$. Let the Best-of-$N$ policy draw $N$ i.i.d.\ completions $y_1, \ldots, y_N \sim \pi_{\textup{base}}$ and select $i^* = \arg\max_{i \in [N]} R_\phi(y_i \mid x)$. Assume: (a)~the law of $R_\phi(y \mid x)$ under $\pi_{\textup{base}}$ is atomless; (b)~$\Pr_{y \sim \pi_{\textup{base}}}(r(y) \geq \rmin) > 0$; (c)~\emph{comonotone selection}: $R_\phi$ is almost surely non-decreasing in $r$ on $\{r \geq \rmin\}$, that is, for any two completions that both clear the gate, the one with the strictly higher report does not receive the strictly lower reward-model score. Define the gate-conditional calibration gaps $\delta_{\textup{base}} = \E_{y \sim \pi_{\textup{base}}}[g(y) \mid r \geq \rmin]$ and $\delta_{\textup{BoN}}(N) = \E[g(y_{i^*}) \mid r(y_{i^*}) \geq \rmin]$. Then:
\begin{enumerate}
\item[(i)] $\delta_{\textup{BoN}}(N) \geq \delta_{\textup{base}}$ for every $N \geq 1$, with strict inequality for $N \geq 2$ whenever $r$ is not almost surely constant on $\{r \geq \rmin\}$ (under hypothesis~(c), this is equivalent to strict positivity of the gate-conditional covariance of $R_\phi$ and $g$, the conditional form of Assumption~\ref{asm:cov-positive}).
\item[(ii)] If, in addition, $R_\phi$ is a strictly increasing function of $r$ on the support of $\pi_{\textup{base}}$ (a noiseless rank-aligned verifier), then $\delta_{\textup{BoN}}(N+1) \geq \delta_{\textup{BoN}}(N)$ for all $N$, with strict inequality whenever the law of $r$ restricted to $[\rmin, 1]$ is non-degenerate, and $\delta_{\textup{BoN}}(N) \to (\bar r - p^*)^2$ as $N \to \infty$, where $\bar r$ is the essential supremum of $r(y)$ under $\pi_{\textup{base}}$. For verifiers satisfying~(c) but not the strictly-increasing condition, part~(i) still holds at every $N$, but the sequence $\delta_{\textup{BoN}}(N)$ need not be monotone in $N$ and need not converge to $(\bar r - p^*)^2$.
\item[(iii)] The Best-of-$N$ policy is itself a CG-policy of the underlying CGDP, so the impossibility direction of Theorem~\ref{thm:trilemma} (which uses only Assumption~\ref{asm:binding} and strict properness of $S$) applies to it unchanged: no selection size $N$ and no selection criterion yields a Best-of-$N$ policy attaining the exact joint corner $(H^*, C^*(\pi), 1)$ on binding states: inference-time selection relocates the operating point within the achievable region, not beyond the Pareto frontier.
\end{enumerate}
\end{proposition}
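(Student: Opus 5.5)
The plan is to reduce parts (i) and (ii) to one structural fact about order statistics. Conditioned on clearing the gate, the selected completion is stochastically larger in $r$ than a base draw.

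Write $G = \{r \geq \rmin\}$ and $U = F_R(R_\phi(y))$ for the probability-integral transform of the reward score; by (a), $U$ is uniform under $\pi_{\textup{base}}$. The index $i^*$ is the one with maximal $U_i$, so $y_{i^*}$ has law $N u^{N-1}\,d\Pr(y)$ in terms of $u = U(y)$. Conditioning on $r(y_{i^*}) \geq \rmin$ then gives the tilted law
\begin{equation*}
d\Pr_{\textup{BoN}}(y \mid G) \;\propto\; \ind_G(y)\, U(y)^{N-1}\, d\Pr(y),
\end{equation*}
which is the base law restricted to $G$ and reweighted by the non-decreasing function $U^{N-1}$. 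Both $g$ and $U$ are non-decreasing functions of $r$ on $G$: for $g$ this is because $r \geq \rmin > p^*$, and for $U$ it is hypothesis (c). So $\delta_{\textup{BoN}}(N) - \delta_{\textup{base}}$ is a normalized covariance $\mathrm{Cov}_G(g, U^{N-1})/\E_G[U^{N-1}]$. By the one-dimensional Chebyshev/Harris (FKG) inequality on the real line, as invoked in the paper for Theorem~\ref{thm:optimizer-independence}, this covariance is non-negative, which proves (i).

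For strictness at $N \geq 2$, I would use the equality case of Chebyshev's inequality: equality forces $g$ or $U^{N-1}$ to be a.s. constant on $G$. The delicate step is showing that non-constancy of $r$ on $G$ rules this out. $g$ is strictly increasing in $r$ on $G$, so non-constant $r$ makes $g$ non-constant. For $U$, I need it not to be a.s. constant on $G$; atomlessness of $R_\phi$ gives this provided $G$ has positive mass (b) and $U$ is not degenerate there. I expect this to be the main obstacle: a comonotone but flat verifier could make $U$ constant on $G$ only if $R_\phi$ had an atom, which (a) excludes. I would state this via the paper's parenthetical equivalence with positive conditional covariance, i.e. Assumption~\ref{asm:cov-positive}.

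For (ii), a strictly increasing verifier makes $U$ a strictly increasing function of $r$, and the weights $U^{N}/U^{N-1} = U$ are increasing. The family of tilted laws is therefore monotone-likelihood-ratio ordered in $N$, which gives first-order stochastic dominance of $r$ and hence monotonicity of $\E[g]$. Strictness follows again from non-degeneracy on $[\rmin,1]$. For the limit, the tilted mass $U^{N-1}$ concentrates on $\{U > 1-\epsilon\}$, which under strict monotonicity is $\{r > \bar r - \epsilon'\}$; continuity of $g$ then gives $\delta_{\textup{BoN}}(N) \to (\bar r - p^*)^2$. A counterexample with a non-strict, non-monotone dependence suffices for the caveat.

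Part (iii) is bookkeeping. The BoN procedure induces a distribution over $(a, r)$ given $s$, hence a CG-policy in the sense of Definition~\ref{def:cgpolicy}. The impossibility half of Theorem~\ref{thm:trilemma} was proved for arbitrary CG-policies using only Assumption~\ref{asm:binding} and strict properness, so it applies verbatim for every $N$ and every selection criterion.
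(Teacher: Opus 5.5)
Your parts (i) and (iii) follow the paper. Your tilt $\ind_G U^{N-1}\,d\Pr$ is the paper's order-statistic weight $w_N = N F_R(R_\phi)^{N-1}$, restricted and renormalized to the gate. The sign and strictness come, as in the paper, from a comonotone covariance plus atomlessness of $R_\phi$.

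One repair is needed. Hypothesis (c) is a pairwise comonotonicity condition, not functional dependence: two gate-clearing completions with equal reports may receive different scores. So $U$ is not literally a non-decreasing function of $r$ on $G$, and the one-variable Chebyshev inequality does not apply verbatim. Use the two-copy form $\mathrm{Cov}_G(g,U^{N-1}) = \tfrac12\E[(g(y)-g(y'))(U(y)^{N-1}-U(y')^{N-1})]$, whose integrand is a.s.\ non-negative by (c); this is what the paper does. The equality case survives, because non-constant comonotone coordinates have nested non-trivial lower sets.

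For the monotonicity in (ii) your route differs from the paper's. The paper uses strict monotonicity to identify $r(y_{i^*})$ with $M_N = \max_i r(y_i)$ and checks the explicit tail $(1-F_r(x)^N)/(1-F_r(\rmin^-)^N)$. You use the likelihood ratio $U$ between consecutive tilts. Both are valid under the strictly-increasing hypothesis.

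The genuine gap is your plan for the caveat at the end of (ii). That clause is about verifiers that \emph{satisfy} (c), so a counterexample built from a non-monotone dependence violates its hypothesis and establishes nothing. Moreover, no admissible counterexample exists, and your own tilting argument shows why. Let $\mu_N$ be the normalized tilt $\propto \ind_G U^{N-1}\,d\Pr$. Then
\[
\delta_{\textup{BoN}}(N+1) - \delta_{\textup{BoN}}(N) \;=\; \frac{\mathrm{Cov}_{\mu_N}(U, g)}{\E_{\mu_N}[U]} \;\geq\; 0
\]
under (a)--(c) alone. This holds because $\mu_N \ll \Pr(\cdot \mid G)$, so the pairwise comonotonicity of (c) passes to $\mu_N \otimes \mu_N$.

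The limit also holds under (c) alone. By (c), the $U$-values on $G \cap \{r \leq \bar r - \eta\}$ are a.s.\ no larger than those on $\{r > \bar r - \eta/2\}$. Since $U$ is uniform, it has no atom at the separating level, so the tilt's relative mass on the lower set decays geometrically, giving $\delta_{\textup{BoN}}(N) \to (\bar r - p^*)^2$.

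Hence the ``need not be monotone / need not converge'' clause cannot be proved under (c). The paper's informal justification, that the $R_\phi$-maximizer and the report-maximizer ``decouple'', breaks down on $G$ for the same reason. The clause is tenable only for verifiers that violate (c), which is the case your counterexample would address. Phrased as the covariance above rather than as a likelihood-ratio order in $r$, your route yields a stronger conclusion: monotonicity and the limit in (ii) hold without the strictly-increasing hypothesis, which the paper's $M_N$ identification cannot reach.
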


\begin{proof}
All expectations are over $y \sim \pi_{\textup{base}}$ at the fixed binding context. By hypothesis~(a) the $\arg\max$ in the selection step is almost surely unique, and the law of the selected completion $y_{i^*}$ is absolutely continuous with respect to the law of $\pi_{\textup{base}}$ with order-statistic density
\[
w_N(y) \;=\; N \cdot \Pr\!\bigl(R_\phi(y' \mid x) \leq R_\phi(y \mid x)\bigr)^{N-1},
\]
which is non-decreasing in $R_\phi(y \mid x)$ by construction and satisfies $\E[w_N] = 1$ under the unconditional law. Conditioning the \emph{selected} completion on clearing the gate therefore gives the normalized identity
\[
\delta_{\textup{BoN}}(N) \;=\; \frac{\E\bigl[w_N(y)\, g(y)\, \ind\{r \geq \rmin\}\bigr]}{\E\bigl[w_N(y)\, \ind\{r \geq \rmin\}\bigr]},
\]
with positive denominator by hypothesis~(b). The normalization matters: $\E[w_N \mid r \geq \rmin] \neq 1$ in general, so the unnormalized expression $\E[w_N\, g \mid r \geq \rmin]$ is not the selected-conditional gap and is not the quantity bounded below.

\textbf{Part (i).} Write $\E'[\cdot] = \E[\cdot \mid r \geq \rmin]$, so that $\delta_{\textup{BoN}}(N) = \E'[w_N g]/\E'[w_N]$ and $\delta_{\textup{base}} = \E'[g]$. On $\{r \geq \rmin\}$ the gap $g(y) = (r(y) - p^*)^2$ is a strictly increasing function of $r$ (since $\rmin > p^*$), and $w_N$ is a non-decreasing function of $R_\phi$; by hypothesis~(c) the pair $(r, R_\phi)$ is almost surely comonotone there, hence so is the pair $(g, w_N)$. Hoeffding's covariance identity for comonotone pairs~\citep{hoeffding1940masstabinvariante, embrechts2002quantile} (equivalently, the Harris--Proschan covariance inequality for two functionals monotone in a common ordering,~\citealp{harris1960lower, proschan1977monotone}) gives $\mathrm{Cov}'(w_N, g) \geq 0$, so
\[
\delta_{\textup{BoN}}(N) \;=\; \frac{\E'[w_N]\, \E'[g] + \mathrm{Cov}'(w_N, g)}{\E'[w_N]} \;=\; \delta_{\textup{base}} + \frac{\mathrm{Cov}'(w_N, g)}{\E'[w_N]} \;\geq\; \delta_{\textup{base}}.
\]
The covariance of a comonotone pair vanishes only when one coordinate is almost surely constant; hypothesis~(a) rules out $w_N$ being almost surely constant on $\{r \geq \rmin\}$ for $N \geq 2$ (a constant $R_\phi$ on a positive-probability event would be an atom), so strict inequality holds exactly when $r$ (hence $g$) is not almost surely constant there. Hypothesis~(c) is doing the work in this step and is not implied by Assumption~\ref{asm:cov-positive}: a positive covariance is compatible with a non-monotone dependence of $R_\phi$ on $r$, under which $\delta_{\textup{BoN}}(N)$ can fall strictly below $\delta_{\textup{base}}$ at large $N$ (for instance, via a rarely sampled top-scoring completion mode with a small gap).

\textbf{Part (ii).} When $R_\phi$ is a strictly increasing function of $r$ on the whole support, the selected completion attains the maximal report, $r(y_{i^*}) = M_N \coloneqq \max_{i \in [N]} r(y_i)$ almost surely, and the selected completion clears the gate exactly on $\{M_N \geq \rmin\}$. The conditional law of $M_N$ given $M_N \geq \rmin$ is stochastically non-decreasing in $N$: for $x \geq \rmin$, $\Pr(M_N > x \mid M_N \geq \rmin) = (1 - F_r(x)^N)/(1 - F_r(\rmin^-)^N)$ with $F_r$ the distribution function of $r$ (the subscript distinguishes it from the type distribution $F$ of Section~\ref{sec:non-affine}), and $(1 - b^N)/(1 - a^N)$ is non-decreasing in $N$ for $0 \leq a \leq b \leq 1$, $a < 1$. Since $x \mapsto (x - p^*)^2$ is non-decreasing on $[\rmin, 1]$, it follows that $\delta_{\textup{BoN}}(N+1) \geq \delta_{\textup{BoN}}(N)$, strictly when the law of $r$ on $[\rmin, 1]$ is non-degenerate. As $N \to \infty$, $M_N \to \bar r$ almost surely, and bounded convergence gives $\delta_{\textup{BoN}}(N) \to (\bar r - p^*)^2$. Both conclusions use the strictly-increasing hypothesis essentially: under a noisy verifier satisfying only~(c), the selected completion is the $R_\phi$-maximizer rather than the report-maximizer, the two decouple, and the sequence can plateau strictly below $(\bar r - p^*)^2$ while part~(i)'s per-$N$ bound continues to hold.

\textbf{Part (iii).} Composing the base policy with Best-of-$N$ selection defines a CG-policy $\pi_{\textup{BoN}}: \cC \to \Delta(\cA \times [0,1])$ of the same CGDP. The impossibility direction of Theorem~\ref{thm:trilemma}, whose hypotheses are properties of the decision problem, not of how the policy was produced, applies verbatim to $\pi_{\textup{BoN}}$: the exact joint corner $(H^*, C^*(\pi_{\textup{BoN}}), 1)$ is infeasible. No stronger comparative claim (in particular, that inference-time selection cannot improve helpfulness while preserving calibration relative to the base policy) follows from Theorem~\ref{thm:trilemma}, and none is asserted here.
\end{proof}

\begin{table}[htbp]
\centering
\small
\caption{Existing methods mapped onto the trilemma. Edge labels are informal placements (Proposition~\ref{prop:no-calibration-methods}); the Best-of-N row's degradation reading holds under Proposition~\ref{prop:bon}'s comonotone-selection hypothesis.}
\label{tab:methods}
\setlength{\tabcolsep}{4pt}
\begin{tabular}{llccl}
\toprule
Method & $S(r,p)$ & Trilemma? & Optimizes & Position \\
\midrule
Plain RL & None & No & $H$ & $(H^*, \text{N/A (no report)}, 1)$ \\
RLHF & None & No & $H, A$ & H+A edge \\
DPO & None & No & $H, A$ & H+A edge \\
Constitutional AI & None & No & $H$, partial $C$ & Between H+A, H+C \\
Safe RLHF & None & No & $H$ (constrained) & Near H+A \\
RLHF + Brier & $-(r-p)^2$ & \textbf{Yes} & $C, A$ trade-off & C $\leftrightarrow$ H+A sweep \\
CG-RL (explicit gate) & $S(r,p)$ & \textbf{Yes} & $(H,C,A)$ Pareto & On Pareto surface \\
Best-of-N & Inherited & \textbf{Yes} ($\wA > 0$) & $H$ & Shifts base $\to$ H+A \\
\bottomrule
\end{tabular}
\end{table}

\section{Optimizer-Independence}
\label{sec:optimizer}

An optimizer ascending the composite payoff on binding states ends at an inflated report whenever it reaches a local maximizer of a smooth-gate payoff with $q'(p^*) \neq 0$; under the sharp gate, calibration itself remains a local maximizer, and only the global comparison of Lemma~\ref{lem:perturbation}(ii) forces inflation. The theorem below makes the scope precise (which optimizers, which policy families, and the sharp-gate caveat).

\begin{theorem}[Optimizer-Independence]
\label{thm:optimizer-independence}
Let $V: [0,1] \to \R$ be the composite payoff~\eqref{eq:payoff} with $S$ strictly proper, $q$ non-decreasing (Assumption~\ref{asm:monotone}) and non-affine (Assumption~\ref{asm:nonaffine}), and $\wA > 0$. Assume the inflation regime of Lemma~\ref{lem:perturbation}(ii) holds on binding states (for a smooth gate with $q'(p^*) \neq 0$ and $R^* > 0$ this is automatic; for the sharp-threshold gate it is the condition $\wA R^* > \wC(\rmin - p^*)^2$). Let $r^* \in \arg\max_r V(r)$ (nonempty for upper semicontinuous $q$, which we assume; every conclusion below holds for every maximizer) and $r_{\textup{cal}} = p$. Then $r^* \neq r_{\textup{cal}}$ on binding states, and:

\begin{enumerate}
\item[(i)] \textup{(Rational agent.)} A Bayesian expected-utility maximizer reports $r^*$, not $r_{\textup{cal}}$.

\item[(ii)] \textup{(Policy gradient.)} Let $S$ be the Brier score and let the policy be a location family $r = \mu_\theta + \varepsilon$, where the noise $\varepsilon$ has a density that is symmetric about zero, log-concave, and of full support (the Gaussian-mean family is the canonical case; $q$ and $S$ are extended to $\R$ in the natural way, with $q$ constant outside $[0,1]$). Then $\nabla_\theta \E[V(r_\theta)]$ points away from the calibrated policy at $\mu_\theta = p$.

\item[(iii)] \textup{(Evolutionary.)} Any infinite-population evolutionary dynamic with fitness-monotone selection (weights proportional to a strictly increasing, \emph{positive} transform of $V$, e.g.\ $e^{V}$; deterministic replicator / exponential-weights dynamics, without mutation or finite-population drift) and full-support initialization on $[0,1]$ concentrates mass on $r^*$, not on $r_{\textup{cal}}$.
\end{enumerate}
\end{theorem}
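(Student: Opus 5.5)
The plan is to establish the headline claim $r^* \neq r_{\textup{cal}}$ first, and then derive (i)--(iii) as consequences. The argument splits by gate type.

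\emph{Smooth gates.} With $q'(p) > 0$, $R^* > 0$ and $S$ having a $C^2$ generator, the derivative of $V$ at the calibrated report is
\[
V'(p) = \wC\,\partial_r S(r,p)\big|_{r=p} + \wA R^* q'(p) = \wA R^* q'(p) > 0 .
\]
The first term vanishes by strict properness (first-order condition of the generator representation in Corollary~\ref{cor:general-inflation}). Hence $p$ is not even a local maximizer, so $p \notin \arg\max V$.

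\emph{Sharp gate.} Here the hypothesis $\wA R^* > \wC(\rmin - p^*)^2$ gives, by Lemma~\ref{lem:perturbation}(ii), $V(\rmin) > V(p^*)$. So $p^*$ is again not a maximizer. This fixes $r^* \neq r_{\textup{cal}}$ for every maximizer.

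Part (i) is then immediate. An expected-utility maximizer selects an element of $\arg\max V$, which exists by upper semicontinuity of $q$ and continuity of $S$. That element differs from $p$.

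Part (iii) goes through exponential-weights concentration. Under replicator or exponential-weights dynamics with positive fitness $\phi(V)$, the population density after $t$ generations is proportional to $\phi(V(r))^t \rho_0(r)$. Because $\rho_0$ has full support and $V$ is upper semicontinuous, this mass concentrates on any neighbourhood of $\arg\max V$. Concretely, for $\eta > 0$, the ratio of the mass on $\{V \leq \max V - \eta\}$ to the mass near the maximizer decays geometrically, since $\phi$ is strictly increasing. The set $\arg\max V$ is closed and excludes $p$, so some neighbourhood of $p$ lies in $\{V \leq \max V - \eta\}$ for some $\eta$, and its mass goes to $0$. The only real care needed is that the set $\{V > \max V - \eta\}$ has positive $\rho_0$-measure. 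On the step gate this comes from right-continuity at $\rmin$ together with continuity of the Brier term.

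Part (ii) is the main obstacle, and I would attack it with the one-dimensional association inequality. Write $J(\mu) = \E[V(\mu + \varepsilon)]$. Using $\nabla_\mu \log f(r - \mu) = -f'(\varepsilon)/f(\varepsilon) =: \psi(\varepsilon)$, the score-function gradient is
\[
J'(\mu) = \E[V(\mu+\varepsilon)\,\psi(\varepsilon)] .
\]
Log-concavity makes $\psi$ non-decreasing, symmetry makes it odd with $\E\psi = 0$, and full support makes it non-degenerate. At $\mu = p$ the Brier term contributes $-\wC\,\E[\varepsilon^2\psi(\varepsilon)]$, which vanishes because the integrand is odd. The gate term contributes $\wA R^*\,\mathrm{Cov}(q(p+\varepsilon), \psi(\varepsilon))$. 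This is the covariance of two non-decreasing functions of the single variable $\varepsilon$, so it is $\geq 0$ by Harris/Chebyshev association.

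Strictness needs more work, because a covariance of two comonotone functions vanishes only if one of them is a.s.\ constant. Here $\psi$ is non-constant, and $q(p+\varepsilon)$ is non-constant on the full support because $q$ is non-affine, hence non-constant, on $[p, 1]$. The delicate point is the one-variable strict form of the inequality. I would use the Hoeffding representation
\[
\mathrm{Cov}(q,\psi) = \tfrac12\,\E\bigl[(q(X)-q(X'))(\psi(X)-\psi(X'))\bigr]
\]
for i.i.d.\ $X, X'$. Both factors have the same sign everywhere, and on a positive-measure set of pairs both are nonzero. That set requires $\psi$ to be strictly increasing somewhere on the region where $q$ varies. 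This holds under full support and symmetry unless $\varepsilon$ is locally Laplace-like, which makes $\psi$ piecewise constant. This is exactly the knife-edge caveat the paper flags, so I would check it carefully, possibly by invoking strict log-concavity or noting that $\psi$ jumps at $0$. The result is $J'(p) > 0$: the gradient at calibration points toward inflation.
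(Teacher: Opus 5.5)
Your route follows the paper's proof in all three parts. The headline claim comes from $V'(p)=\wA R^* q'(p)>0$ for the smooth gate and from the two-point comparison of Lemma~\ref{lem:perturbation}(ii) for the sharp gate. Part (i) is then immediate, and part (iii) is exponential-weights concentration under full-support initialization; your version is more careful than the paper's about needing positive initial mass where $V$ is near its maximum. Part (ii) uses the location score $\psi=-f'/f$, oddness to remove the Brier term, and one-dimensional association for the gate term.

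\textbf{The gap is the strictness step in part (ii).} You correctly state that the covariance of two non-decreasing functions of one variable vanishes only if one of them is a.s.\ constant. You also note that both $\psi$ and $q(p+\cdot)$ are non-constant, and that already finishes the step. You then claim that the positive-measure set of pairs ``requires $\psi$ to be strictly increasing somewhere on the region where $q$ varies.'' That claim is false: the pairs $(X,X')$ in the Hoeffding representation need not be close to each other. The correct argument runs as follows.
\begin{itemize}
\item Choose $t_1<t_2$ with $q(p+t_1)<q(p+t_2)$, and $s_1<s_2$ with $\psi(s_1)<\psi(s_2)$.
\item Set $u_1=\min(t_1,s_1)$ and $u_2=\max(t_2,s_2)$.
\item By monotonicity, both differences $q(p+X)-q(p+X')$ and $\psi(X)-\psi(X')$ are strictly negative whenever $X\le u_1$ and $X'\ge u_2$.
\item Full support gives this event positive probability, so $\mathrm{Cov}(q(p+\varepsilon),\psi(\varepsilon))>0$ with no further hypothesis.
\end{itemize}

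\textbf{Laplace noise is not a knife-edge case.} It needs neither strict log-concavity nor any special treatment of the jump of $\psi$ at $0$. With $\psi(\varepsilon)=\mathrm{sign}(\varepsilon)/b$, the gate term equals $\frac{\wA R^*}{2b}\E\bigl[q(p+|\varepsilon|)-q(p-|\varepsilon|)\bigr]$. This is strictly positive: for $|\varepsilon|>\max(p,1-p)$ the bracket equals $q(1)-q(0)>0$, and full support gives that event positive mass.

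The knife-edge instances the paper records are of a different kind: the truncated-Gaussian family on $[0,1]$ and the mean-parameterized Beta$(2,8)$ family. In those families the loss of symmetric full-support location structure makes the Brier contribution nonzero, so it can cancel the gate term. Adding strict log-concavity, as you suggest, would wrongly exclude the Laplace family and shrink the theorem's scope for no reason. Drop that caveat and close the step with the two-tail argument above.
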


\begin{proof}
Part (i) is immediate from $r^* \neq p$.

\textbf{Part (ii): Policy gradient for symmetric location families under the Brier score.} Consider first the canonical Gaussian case $r \sim \mathcal{N}(\mu_\theta(s), \sigma^2)$ and write $J(\theta) = \E_{r \sim \pi_\theta}[V(r)]$ for the expected payoff. At $\mu_\theta = p$:
\begin{equation}
\frac{\partial J}{\partial \mu_\theta}(s) = \frac{1}{\sigma^2}\E_{r \sim \pi_\theta}\left[V(r) \cdot (r - p)\right] = \frac{1}{\sigma^2}\text{Cov}_{\pi_\theta}(V(r), r).
\end{equation}

Write $V(r) = -\wC(r-p)^2 + \wA \cdot q(r) \cdot R$. Since $r \sim \mathcal{N}(p, \sigma^2)$ is symmetric about $p$:
\begin{equation}
\text{Cov}(V(r), r) = -\wC \underbrace{\E[(r-p)^3]}_{= 0 \text{ (symmetry)}} + \wA R \cdot \underbrace{\text{Cov}(q(r), r)}_{> 0 \text{ (Chebyshev)}}.
\end{equation}

The second term is positive by the classical one-dimensional Chebyshev association inequality: for $f$ monotone non-decreasing and non-constant on the support of $X$, $\text{Cov}(f(X), X) > 0$ for any non-degenerate law of $X$ (immediate from Hoeffding's covariance identity~\citep{hoeffding1940masstabinvariante}; no density-shape hypothesis is needed for this one-dimensional step, and the lattice-based FKG inequality of~\citet{fortuin1971correlation} is the higher-dimensional generalisation). Strict inequality requires $q$ non-constant (not merely non-affine; non-constant is the weaker and correct condition), which full support guarantees: the support of $\pi_\theta$ meets the increase region of $q$. Since $q$ is non-decreasing (Assumption~\ref{asm:monotone}) and non-constant on the support of $\pi_\theta$:
\begin{equation}
\text{Cov}(V(r), r) = \wA R \cdot \text{Cov}(q(r), r) > 0,
\end{equation}
so $\partial J / \partial \mu_\theta > 0$, meaning the gradient pushes the mean report above $p$.

For a general location family $r = \mu_\theta + \varepsilon$ as in the statement, the score function of the location parameter is $s_\rho(r - \mu_\theta) = -\rho'(r - \mu_\theta)/\rho(r - \mu_\theta)$ for noise density $\rho$, and the policy gradient at $\mu_\theta = p$ is $\E[V(r)\, s_\rho(r - p)]$. Symmetry of $\rho$ makes $s_\rho$ odd, so the Brier term $-\wC \E[(r-p)^2 s_\rho(r-p)]$ vanishes as an even-times-odd integral; log-concavity of $\rho$ makes $s_\rho$ non-decreasing, so the gate term equals $\wA R \cdot \text{Cov}(q(r), s_\rho(r-p)) > 0$ by the same association inequality applied to the non-decreasing pair $q(\cdot)$ and $s_\rho(\cdot - p)$. The hypotheses cannot be dropped within the log-concave class: for the truncated-Gaussian family on $[0,1]$ (location $0.2$, scale $0.15$, step gate at $\rmin = 0.7$, $\wC = R^* = 1$) the truncation skew makes the Brier term positive, the calibrated policy is exactly stationary at $\wA \approx 7.61$, and the gradient points below $p$ (deflation) for smaller $\wA$ still inside the inflation regime; the mean-parameterized Beta$(2, 8)$ family is likewise exactly stationary at calibration at $\wA \approx 4.81$. Asymmetric or non-location log-concave families can therefore have a stationary or even deflationary calibrated policy, which is why part (ii) is stated for symmetric full-support location families under the Brier score.

\textbf{Part (iii): Evolutionary dynamics and ascent optimizers.} Under selection weights proportional to $e^{V}$ (or any strictly increasing transform of $V$; raw proportionality to $V$ is ill-defined where $V < 0$), the population law follows replicator dynamics $p_t \propto p_0\, e^{tV}$, which concentrates mass on the maximizers of $V$ over the initial support. With full-support initialization (or any selection-mutation dynamics whose mutation kernel maintains full support) this set is $\{r^*\}$, and under the inflation regime $r^* \neq p$: $r^* = \rmin$ for the sharp-threshold gate by Lemma~\ref{lem:perturbation}(ii), and $r^* > p$ for a smooth gate with $q'(p^*) \neq 0$ by Lemma~\ref{lem:perturbation}(i). The full-support hypothesis is necessary: an initial population supported strictly below the gate's increase region concentrates on the calibrated report instead. For generic ascent optimizers the guarantee is local rather than global. Writing $V$ with the Brier calibration term as above, for a smooth gate with $q'(p^*) \neq 0$ and $R > 0$ every local maximizer of $V$ lies strictly above $p$ (for $r \leq p$, $V'(r) = -2\wC(r - p) + \wA R\, q'(r) > 0$), so any ascent method that converges to a local maximizer of $V$ exhibits strictly positive inflation; convergence to the \emph{global} peak $r^*$ additionally requires $V$ to be unimodal on $[0,1]$, since a steep sigmoid gate makes $V$ bimodal and gradient ascent initialized at calibration then stops at the lower local maximum (which still lies strictly above $p$). Weak improvement alone ($\E_{r_{t+1}}[V] \geq \E_{r_t}[V]$) guarantees neither global nor local-maximizer convergence. Under the sharp-threshold gate $q(r) = \ind\{r \geq \rmin\}$ the payoff is piecewise quadratic with a local maximum at $r = p < \rmin$ and, under the inflation condition $\wA R^* > \wC(\rmin - p^*)^2$ of Lemma~\ref{lem:perturbation}(ii), a strictly dominant global maximum at $r^* = \rmin$; local search initialized at calibration can remain at $p$, and reaching $\rmin$ requires global exploration (population-level policy gradient with non-vanishing exploration, or simulated annealing), not mere ascent. Under small $\wA / \wC$ failing the inflation condition, the global maximum is at $r = p$ and the conclusion flips; the theorem's "moves away from calibration" statement therefore requires the inflation regime.
\end{proof}

\begin{corollary}
\label{cor:calibration-not-stationary}
Under the hypotheses of Theorem~\ref{thm:optimizer-independence}(ii) (Brier score; symmetric, log-concave, full-support location family), the calibrated policy ($\mu_\theta = p$ for all $s$) is not a stationary point of the policy gradient on binding states with $R^* > 0$ whenever $\wA > 0$, and a gradient step from calibration moves the mean report upward. This uses only the computation in the proof of part (ii), not the inflation-regime condition: the gradient at calibration equals $\wA R^* \cdot \mathrm{Cov}(q(r), s_\rho(r - p)) > 0$ for every $\wA > 0$. Outside the symmetric location class the conclusion can fail even with $\wA > 0$: at the knife-edge instances in the proof of part (ii) the calibrated policy is exactly stationary.
\end{corollary}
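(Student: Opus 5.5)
The plan is to compute the policy gradient at $\mu_\theta = p$ directly and show it is strictly positive for every $\wA > 0$. The inflation-regime condition is never used; only the score-function form of the gradient and the symmetry, log-concavity and full support of the noise density $\rho$ enter. First I write $J(\theta) = \E_{\varepsilon \sim \rho}[V(\mu_\theta + \varepsilon)]$ and differentiate under the integral via the likelihood-ratio identity. This gives $\partial J/\partial \mu_\theta = \E[V(r)\, s_\rho(r - \mu_\theta)]$ with $s_\rho = -\rho'/\rho$. The exchange of derivative and integral is justified because $V$ is bounded on $\R$: the Brier term grows quadratically, so I also need $\E[\varepsilon^2 |s_\rho(\varepsilon)|] < \infty$, which log-concave densities satisfy since they have exponential tails and $s_\rho$ grows at most polynomially in the relevant integrals. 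Evaluating at $\mu_\theta = p$, I split $V(r) = -\wC (r-p)^2 + \wA R^* q(r)$ into its calibration and gate pieces. I drop the $r$-independent $-\wC p(1-p)$ constant, since $\E[s_\rho] = 0$.

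The calibration piece contributes $-\wC \E[\varepsilon^2 s_\rho(\varepsilon)]$. Symmetry of $\rho$ makes $s_\rho$ odd, and $\varepsilon^2$ is even, so this integral vanishes. The gate piece contributes $\wA R^* \E[q(p+\varepsilon) s_\rho(\varepsilon)]$. Because $\E[s_\rho(\varepsilon)] = 0$, this equals $\wA R^* \,\mathrm{Cov}(q(p+\varepsilon), s_\rho(\varepsilon))$. Log-concavity makes $s_\rho$ non-decreasing, and Assumption~\ref{asm:monotone} makes $q(p+\cdot)$ non-decreasing. The one-dimensional Chebyshev/Hoeffding association inequality, exactly as invoked in the proof of Theorem~\ref{thm:optimizer-independence}(ii), then gives a nonnegative covariance.

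The main obstacle is strictness. Hoeffding's identity writes the covariance as $\iint \bigl(\Pr(f \le a, g \le b) - \Pr(f\le a)\Pr(g \le b)\bigr)\,da\,db$, and this is strictly positive as soon as both comonotone functions are non-constant on the support. Here the support is all of $\R$ by full support, and $q$ is non-constant on $[p, 1]$ by the non-constancy form of Assumption~\ref{asm:nonaffine}. For $s_\rho$, I would argue it is non-constant because a constant odd function is zero, which would force $\rho$ to be constant and contradict integrability. With both functions non-constant, there is a positive-measure rectangle of $(a,b)$ on which the integrand is strictly positive.

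Combining the two pieces, the gradient at calibration equals $\wA R^* \,\mathrm{Cov}(q(r), s_\rho(r-p)) > 0$ for every $\wA > 0$ and $R^* > 0$. So calibration is not stationary, and a gradient step increases $\mu_\theta$. For the concluding sentence of the corollary, I would simply point to the truncated-Gaussian and Beta$(2,8)$ knife-edge instances already recorded in the proof of part (ii). There the Brier term no longer vanishes and exactly cancels the gate term.
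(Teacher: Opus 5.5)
Your proposal is correct and is the paper's own argument: the likelihood-ratio gradient at $\mu_\theta = p$, the Brier term vanishing because $s_\rho$ is odd, and the gate term written as $\wA R^* \,\mathrm{Cov}(q(r), s_\rho(r-p))$ and made strictly positive by the one-dimensional association inequality, with the knife-edge instances taken from the proof of part (ii); your explicit check that $s_\rho$ is itself non-constant is a step the paper leaves implicit. One slip in your regularity aside: $V$ is not bounded on $\R$ once the quadratic Brier term is extended, and $s_\rho$ need not grow polynomially for log-concave $\rho$ (e.g.\ $\rho \propto e^{-\cosh x}$ gives $s_\rho = \sinh x$), but the integrability you need, $\E[\varepsilon^2 |s_\rho(\varepsilon)|] < \infty$, still holds by integrating $x^2 |\rho'(x)|$ by parts on each half-line (using the monotonicity of $s_\rho$ and the exponential tails of $\rho$), so the argument goes through.
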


\begin{remark}
The Gaussian parameterization is illustrative: the proof covers every symmetric, log-concave, full-support location family (logistic noise, for instance). Finite mixtures of log-concave densities are \emph{not} in general log-concave (a standard counterexample is any bimodal Gaussian mixture), so the argument does not extend to arbitrary mixtures.
\end{remark}

\section{Experimental Protocol and Results}
\label{sec:experiments}

This section presents a hypothesis-driven experimental protocol with explicit falsification criteria and reports the results of a 540-configuration run on Qwen-2.5-7B (54{,}000 selected-task observations): all five hypotheses are confirmed (Table~\ref{tab:hypotheses}), and \S\ref{sec:exp-h3-asymptotic} adds a \emph{descriptive} analysis of the achievable-$(H, C, A)$ surface geometry. The protocol and its confirmatory falsification criteria were pre-specified in the version-controlled repository before any experimental data were generated (the reported H1 and H2 test specifications are theory-aligned revisions of the originally frozen ones; both versions are archived and disclosed in \S\ref{sec:exp-stats}), and the full run replicates on two further open-weights families with all five Holm decisions matching the primary (Appendix~\ref{app:cross-model-replication}, Table~\ref{tab:cross-model-replication}).

The protocol uses Best-of-N selection as the optimization mechanism: each completion is an (action, confidence) pair and the selection operator is a formal $\arg\max$ over the composite payoff, instantiating the CGDP's report-space structure directly and isolating the mechanism the theorems analyze (rather than conflating instruction-following with objective optimization, as prompting-based studies do); the predictions follow from the payoff geometry, not from assumptions about the agent's internal representations.

\subsection{Experimental Objective}
\label{sec:exp-objective}

The protocol targets five theoretical predictions (and an additional descriptive surface-geometry observation). \textbf{Prediction~1} (selection-induced degradation / Proposition~\ref{prop:bon}): under a composite payoff with $\wA > 0$, Best-of-N selection degrades calibration relative to the unpenalised regime ($\wA = 0$); the attribution to Proposition~\ref{prop:bon} holds under its comonotone-selection hypothesis, which the oracle selector does not literally instantiate (robustness check in \S\ref{sec:exp-controls}). \textbf{Prediction~2} (Perturbation Lemma, sharp-gate form): inflation $\Delta$ on binding states is monotone non-decreasing in $\wA/\wC$, with a step-function onset near $\wA/\wC \approx (\rmin - p^*)^2$ and saturation at $\Delta \approx \rmin - p^*$ once the gate bonus dominates (the confirmatory H2 criterion tests the monotone trend only; onset and saturation are assessed descriptively in \S\ref{sec:exp-h3-asymptotic}). \textbf{Prediction~4} (threshold pooling): under gating pressure, the selected-confidence distribution develops excess mass in $[\rmin, \rmin + \varepsilon]$ on binding tasks. \textbf{Prediction~5} (binding-state specificity): inflation concentrates on binding tasks ($p < \rmin$), not on already-above-threshold tasks. \textbf{Prediction~6} (control): with no gating bonus ($\wA = 0$), Best-of-N \emph{improves} calibration --- selection alone is not a source of miscalibration. We retain the original hypothesis labels (H1, H2, H4, H5, H6); H3 denoted the convexity prediction, now reported as the descriptive surface-geometry observation of \S\ref{sec:exp-h3-asymptotic}.

\subsection{Task Set}
\label{sec:exp-tasks}

We construct 100 tasks with verifiable ground truth, stratified by difficulty and domain: 40 arithmetic/mathematical reasoning tasks (20 easy single-step, 20 hard multi-step), 30 factual knowledge tasks (15 well-known, 15 obscure), and 30 code generation tasks (15 simple functions, 15 algorithmic). Each task requires the model to (a)~state a confidence $r \in [0,1]$ that its answer is correct, and (b)~provide the answer; ground-truth verification is task-type-specific (\S\ref{sec:exp-compute}).

\paragraph{Task provenance.} The 100 tasks are committed to the repository (\texttt{scripts/\allowbreak{}generate\_\allowbreak{}tasks.py}, lines 85--121 for the factual set) on 2026-04-01, pre-dating every experimental run, and are fixed in the archived release. The arithmetic tasks are procedurally generated from random expressions with seeded RNG; the code tasks are hand-authored Python function signatures with hand-authored test cases; the factual tasks were generated by Claude Opus 4.6 \citep{anthropic2026claudeopus} under a prompt constrained to single-canonical-answer items from standard reference knowledge, with every proposed answer verified against public reference sources (Wikipedia, standard encyclopaedias, authoritative databases) and contested or source-dependent items excluded. Generating the factual items rather than drawing them from existing benchmarks (e.g.\ TriviaQA, MMLU) avoids training-data contamination: the generator post-dates Qwen-2.5's training cutoff, so the items cannot appear in its training distribution.

\textbf{Binding states.} Tasks where the model's true competence $p$ falls below the gating threshold $\rmin$ form the binding set $\Cbind$. We estimate $\hat{p}_t$ per task from the base model's accuracy on 20 \emph{held-out} seeds $\{1000, \ldots, 1019\}$, disjoint from the experimental seeds $\{0, 42, 123, 456, 789\}$; the orthogonal split prevents circularity in the H5 binding-state-specificity test, fixing a task's binding-set membership before any selection outcome on that task enters the analysis. Binding-set sizes: 71 tasks at $\rmin = 0.5$, 72 at $\rmin = 0.7$, 74 at $\rmin = 0.9$.

\subsection{Models}
\label{sec:exp-models}

\textbf{Primary model in this study:} Qwen-2.5-7B-Instruct (\texttt{qwen2.5:7b}, Q4\_K\_M) via Ollama: local, reproducible, open-weights, and sufficient in scale to exhibit calibration behavior while remaining tractable for Best-of-N with large $N$. All five hypothesis tests (H1, H2, H4, H5, H6), the 540-config gated mechanism experiment, and every numerical result in the Tables below are on this primary model. The full gated protocol is additionally replicated on two further open-weights families, \texttt{gemma2:9b} and \texttt{yi:9b} (Appendix~\ref{app:cross-model-replication}); a descriptive ungated placement on eight open-weights instruct architectures, a competence-controlled comparability anchor rather than a hypothesis test, is reported in Appendix~\ref{app:cross-model-descriptive}.

\subsection{Experimental Protocol}
\label{sec:exp-protocol}

For each task $t$, model $m$, weight vector $\mathbf{w}$, and selection size $N$:

\begin{enumerate}
\item \textbf{Generate.} Sample $N$ independent completions from the model, each including an answer $a_i$ and a confidence report $r_i$, at temperature $\tau = 0.8$ to ensure diversity. The report $r_i$ is derived from token-level log-probabilities returned by the inference endpoint (Ollama's OpenAI-compatible \texttt{/v1/chat/completions}, \texttt{logprobs: true}): for a completion with $T$ tokens having log-probabilities $\ell_1, \ldots, \ell_T$, we set
\begin{equation}
\label{eq:logprob-confidence}
r_i \;=\; \exp\!\left(\frac{1}{T}\sum_{t=1}^{T}\ell_t\right) \;=\; \left(\prod_{t=1}^{T} p_t\right)^{\!1/T},
\end{equation}
the geometric mean of per-token probabilities, clipped to $[0.01, 1.0]$. Token-level log-probabilities correlate well with completion-level correctness at this scale~\citep{kadavath2022language, tian2023just}; an earlier verbalised-confidence protocol produced degenerate reports and was abandoned (\S\ref{sec:exp-controls}).
\item \textbf{Score.} Compute the selection payoff for each completion. Because Best-of-N operates post-hoc on completions whose correctness can be verified (by arithmetic check, fact lookup, or test execution), the ground truth $y_i \in \{0,1\}$ \emph{is} available at selection time; the oracle selection payoff is
\begin{equation}
\label{eq:selection-payoff}
V_i^{\text{sel}} = -\wC \cdot (r_i - y_i)^2 + \wA \cdot \ind\{r_i \geq \rmin\},
\end{equation}
where the first term is the Brier score against ground truth (a strictly proper scoring rule perturbation) and the second is the gating reward, matching the Perturbation Lemma template $V = \wC \cdot S(r, y) + \wA \cdot q(r)$ exactly: the experiment directly instantiates the theorem's payoff structure.

\textbf{Oracle vs expected Brier.} The selection payoff uses the realized outcome $y_i \in \{0,1\}$, whereas the Perturbation Lemma is stated in terms of the expected-Brier payoff $V(r) = -\wC (r - p)^2 + \wA q(r)$ using the true probability $p$. The two are consistent in expectation, $\E_{y \sim \mathrm{Bernoulli}(p)}\bigl[-(r-y)^2\bigr] = -(r-p)^2 - p(1-p)$, so for every fixed $p$ the two objectives have the same $\arg\max_r$ (the $-p(1-p)$ term is constant in $r$, and the gate term $\wA q(r)$ depends on $r$ but not on $y$). Realized-outcome selection therefore adds sampling variance to which completion is selected, but does not bias the Perturbation Lemma's inflation prediction. Replacing $y_i$ in \eqref{eq:selection-payoff} with a held-out, frozen per-task estimate $\hat p_t$ (expected-Brier selection) is a natural control that we did not run and note as future work. All hypothesis results (H1, H2, H4, H5, H6) reported below use the realized-outcome payoff of \eqref{eq:selection-payoff}.

\textbf{No-oracle proxy (sensitivity check).} When ground truth is unavailable, a natural proxy replaces the Brier score with the agent's self-assessed expected calibration: $V_i^{\text{proxy}} = -\wC \cdot r_i(1 - r_i) + \wA \cdot \ind\{r_i \geq \rmin\}$. Since $-r(1-r)$ is Bernoulli variance, maximized at $r \in \{0, 1\}$ rather than at $r = p$, this proxy is \emph{not} a proper scoring rule and incentivizes extreme reports independent of the gating term; it is included only as a control quantifying the oracle/no-oracle gap. Materialized as a pre-specified Stage-B control over the archived pools, the proxy behaves exactly as this improper-score characterization predicts: being $y$-blind, it selects the extremeness argmax (mean selected $r = 0.958$, invariant across $\wA$ on these pools, which renders the weight-contrast tests degenerate for this selector by construction), yielding at $\rmin = 0.7$, $N = 32$ a selected-completion mean Brier of $0.603$ against the oracle's $0.274$--$0.368$ and placing only $2.5\%$ of binding-task mass in the threshold window against the oracle's $65.8\%$ --- overshooting the gate rather than pooling at it. All primary analyses use the oracle payoff~\eqref{eq:selection-payoff}.
\item \textbf{Select and record.} Choose completion $i^* = \arg\max_i V_i^{\text{sel}}$, store the selected $(r_{i^*}, a_{i^*})$, and compute all post-hoc metrics.
\end{enumerate}

\subsection{Independent and Dependent Variables}
\label{sec:exp-iv}

The independent variables are \textbf{selection pressure} $N \in \{1, 2, 4, 8, 16, 32\}$ ($N = 1$ is the base model, no selection), \textbf{autonomy pressure} $\wA/\wC \in \{0, 0.25, 0.5, 1.0, 2.0, 4.0\}$ ($\wA/\wC = 0$ is pure calibration, no gating reward), \textbf{gating threshold} $\rmin \in \{0.5, 0.7, 0.9\}$, and 5 random \textbf{seeds} per configuration. Total configurations: $6 \times 6 \times 3 \times 5 = 540$ runs, evaluating ${\approx}567$K completion-slots ($\sum_N N \times 100$ tasks across the weight, threshold, and seed grid); the slots are filled from $16{,}000$ unique pooled completions ($100$ tasks $\times$ $5$ seeds $\times$ $N_{\max} = 32$), generated once in Stage~A (${\sim}14$ hours on Qwen-2.5-7B) and reused deterministically across configurations (\S\ref{sec:exp-compute}).

\label{sec:exp-dv}
The dependent variables, computed per configuration over the 100-task set, are \textbf{calibration} $\text{BS} = \frac{1}{T}\sum_{t=1}^{T}(r_t - y_t)^2$, with $r_t$ the selected confidence and $y_t \in \{0,1\}$ correctness (lower is better; the standard binned reliability--resolution--uncertainty decomposition of \citealp{murphy1973new} is computed and archived in the repository but not analysed here); \textbf{helpfulness} $H = \frac{1}{T}\sum_{t=1}^{T} R(a_t, t)$, the fraction of tasks answered correctly; \textbf{autonomy} $A = \frac{1}{T}\sum_{t=1}^{T} \ind\{r_t \geq \rmin\}$, the fraction of tasks where the agent clears the gate; \textbf{inflation} $\Delta = \frac{1}{|\Cbind|}\sum_{t \in \Cbind}(r_t - p_t)$, mean confidence inflation on binding states, where $p_t$ is the base model's empirical accuracy on task~$t$.

\subsection{Hypotheses, Falsification Criteria, and Statistical Methodology}
\label{sec:exp-hypotheses}

Table~\ref{tab:hypotheses} lists the five hypotheses with their falsification criteria and the results from the 540-configuration run.

The Welch $t$-test and Cohen's $d$ for H5 in Table~\ref{tab:hypotheses} use per-completion comparisons between binding and non-binding tasks, which inflates the effect size relative to a per-task mixed-effects estimator because non-binding inflation is near zero by construction; the direction and significance are robust, and the per-task aggregate difference remains highly significant and recoverable from the archived per-config metrics in the repository.

\label{sec:exp-stats}
The hypothesis tests use the following specifications: paired $t$ for H1 and H6 on per-task Brier; Jonckheere--Terpstra on per-seed mean inflation for H2; two-proportion $z$ on per-completion threshold-window mass for H4; Welch $t$ on per-completion inflation for H5. For H1, H2, and H6 the aggregation to task or seed level absorbs per-task difficulty and per-seed sampling variation before the weight manipulation is tested; H4 and H5 are per-completion by design, so their statistics treat pooled completions as independent even though the $N = 1$ base arm reuses each pool's first completion across weight levels, overstating the effective sample size (cf.\ the H5 magnitude caveat above; directions and decisions are unaffected by the huge margins involved). The H1 and H2 specifications reported here are theory-aligned revisions of the originally frozen criteria (a fixed-$N$ weight contrast in place of a monotone-in-$N$ Brier trend for H1, an ordered-trend test in place of a regression slope for H2); both versions and the reconciliation suite that runs them side by side are archived in the repository. Multiple-comparison control is the Bonferroni--Holm procedure~\citep{holm1979simple} applied across the confirmatory family $\{H1, H2, H4, H5, H6\}$, in which H6 is the negative control and remains part of the corrected family of five; H3 is the descriptive surface-geometry observation (\S\ref{sec:exp-h3-asymptotic}), carried alongside the family with no inferential weight and not included in the correction. Mean-difference quantities use bootstrap $95\%$ confidence intervals ($10{,}000$ resamples; effect sizes as point estimates), whereas the surface-geometry violation-rate proportion of \S\ref{sec:exp-h3-asymptotic} uses an \emph{exact two-sided Clopper--Pearson} interval, since a bootstrap interval is unreliable for a bounded count statistic with few violations near zero.

\begin{table*}[htbp]
\centering
\caption{Hypotheses, falsification criteria, and results. All tests at significance level $\alpha = 0.05$ after Bonferroni--Holm correction across the family of five.}
\label{tab:hypotheses}
\footnotesize
\begin{tabularx}{\textwidth}{@{}l Y Y Y@{}}
\toprule
ID & Hypothesis & Prediction (falsified if) & Result \\
\midrule
H1 & Fixed-axis gating degradation & At fixed $N\!=\!32$, $\text{BS}(\wA/\wC\!=\!4) > \text{BS}(\wA\!=\!0)$, one-sided paired $t$-test across tasks ($\Delta \text{BS} \leq 0$) & \textbf{PASS}: $t = 10.95$, $p = 4.67\!\times\!10^{-19}$, $d = 1.10$, $\Delta \text{BS} = +0.113$ ($+41.2\%$) \\
H2 & Monotone inflation trend & On binding tasks at $N\!=\!32$, $\rmin\!=\!0.7$, mean inflation $\Delta$ is monotone non-decreasing in $\wA/\wC$, Jonckheere--Terpstra ordered-trend test (JT $z \leq 0$) & \textbf{PASS}: JT $z = 3.76$, $p = 8.5\!\times\!10^{-5}$; Spearman $\rho = 0.89$ across five weight levels \\
H4 & Threshold pooling & Under $\wA > 0$ on binding tasks, excess mass in $[\rmin, \rmin + 0.1]$, two-proportion $z$-test ($z \leq 0$) & \textbf{PASS} at all $\rmin$: $z = 30.16$ at $\rmin\!=\!0.7$ (best of three), $+47.2$\,pp excess, $p < 10^{-3}$ \\
H5 & Binding-state specificity & $\Delta|_{\Cbind} > \Delta|_{\lnot\Cbind}$ on $N\!=\!32, \wA > 0$, one-sided Welch $t$-test (ratio $\leq 1$) & \textbf{PASS}: Welch $t = 208.1$, $p < 10^{-3}$, $d = 5.35$ (at $\rmin\!=\!0.7$); per-$\rmin$ ratios $33\times/59\times/240\times$ at $\rmin\!=\!0.5/0.7/0.9$, archived under \texttt{H5\_by\_rmin} \\
H6 & Control, $\wA\!=\!0$ & Best-of-$N$ \emph{improves} calibration absent gating pressure, one-sided paired $t$-test ($\text{BS}(N\!=\!32) \geq \text{BS}(N\!=\!1)$) & \textbf{PASS}: $t = -13.08$, $p = 1.35\!\times\!10^{-23}$, $d = 1.31$, $\Delta \text{BS} = -0.277$ ($-50.3\%$) \\
\bottomrule
\end{tabularx}
\end{table*}

\subsection{\texorpdfstring{Descriptive surface geometry: midpoint interpolation by $N$}{Descriptive surface geometry: midpoint interpolation by N}}
\label{sec:exp-h3-asymptotic}

As a descriptive complement to the five hypothesis tests, we examine the geometry of the achieved $(H, C, A)$ surface as $N$ varies, via a midpoint-interpolation check: for a triple of weight vectors $(w_i, w_j, w_k)$ with $w_i < w_j < w_k$, we record whether the achieved $(H, C, A)$ at $w_j$ falls (up to a 5\% per-axis slack) below the linear interpolation between the endpoints' $(H, C, A)$. The violation rate is \emph{increasing} in $N$ (Figure~\ref{fig:h3-by-N}): $0\%$ at $N\!\in\!\{1,2\}$, $1.7\%$ at $N\!=\!4$, $13.3\%$ at $N\!=\!8$, $21.7\%$ at $N\!=\!16$, $28.3\%$ at $N\!=\!32$ (Spearman $\rho = +0.99$, $p = 3 \times 10^{-4}$). Aggregating over $N$ (one indicator per weight triple after averaging $(H, C, A)$ over the six $N$ values), 6 of the 60 triples ($\binom{6}{3} = 20$ per $\rmin$) violate: a $10\%$ aggregate rate, exact two-sided Clopper--Pearson 95\% CI $[0.038, 0.205]$. At $N\!=\!1$ selection does nothing and all weight vectors yield a single degenerate point; as $N$ grows, different weight vectors trace different operating points, and by $N\!\approx\!8$ the inflation saturation predicted by Lemma~\ref{lem:perturbation}(ii) and confirmed by H2 (inflation flat for $\wA/\wC \geq 1$) produces a plateau in the $(H, C, A)$ surface, and interpolations between weight endpoints pass above the flat. The frontier is therefore \emph{plateau-truncated} rather than concavely bowed; the rising violation rate at large $N$ describes the saturation plateau and is not evidence about the curvature of the achievable region itself.

\begin{figure}[htbp]
\centering
\includegraphics[width=0.34\linewidth]{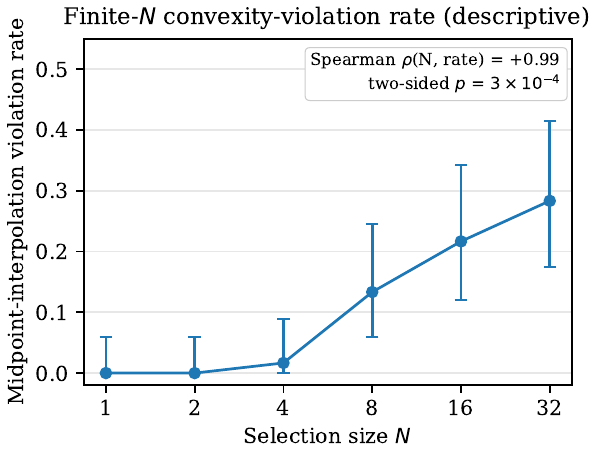}
\caption{Midpoint-interpolation violation rate by selection size $N$ (exact binomial 95\% CIs; Spearman $\rho(N, \text{rate})$ and two-sided $p$ annotated); the rising rate is the saturation-plateau signature of Prediction~2 (H2).}
\label{fig:h3-by-N}
\end{figure}

\subsection{Controls, Threats to Validity, and Reproducibility}
\label{sec:exp-controls}

Three controls isolate the gating mechanism from confounds: $\wA = 0$ (no gating reward; selection should improve calibration --- the formal content of H6 in Table~\ref{tab:hypotheses}), $N = 1$ (no selection; the base model's unperturbed calibration as the reference point), and uniform random selection among the $N$ completions (preserves the sampling distribution while removing the optimization pressure, so any systematic inflation surviving it cannot be attributed to the payoff-maximising mechanism the theorems analyse). Run as a pre-specified Stage-B control over the archived pools, random selection behaves as the mechanism account requires: every optimization-pressure signature vanishes (H1 weight contrast $d = -0.02$, $p = 0.58$; H2 trend $z = -0.48$; binding-task threshold-window mass $19.2\%$ versus the $N{=}1$ arm's $18.6\%$, $p = 0.31$, against $65.8\%$ under oracle selection; H6 $p = 0.20$), while the binding-specificity contrast H5 persists ($d = 6.47$) --- expected, since H5 compares task classes rather than selection arms and under random selection reflects the base completion law's overconfidence on hard tasks, not the payoff-maximising mechanism.

\textbf{Comonotone-selector robustness.} The H1 attribution to Proposition~\ref{prop:bon} relies on its comonotone-selection hypothesis (c), which the oracle selector of \eqref{eq:selection-payoff} does not literally instantiate: it violates empirical comonotonicity on $64.4\%$ of the multi-candidate gate-cleared selections, so the hypothesis is non-trivially binding. As a pre-specified amendment we re-ran all 540 Qwen configurations under a comonotone-compliant rearrangement selector satisfying~(c) by construction; all five Holm decisions match the primary run under the frozen min-$p$-across-thresholds family rule ($5/5$). One component of that match is structural rather than measured: under the rearrangement selector the H4 decision is carried by the $\rmin = 0.9$ window, which coincides with the entire cleared region and is therefore selector-invariant, while at the interior thresholds the pooling signature reverses sign ($z = -5.0$ and $-16.5$) as threshold pooling gives way to the top-of-range selection that Proposition~\ref{prop:bon}(ii)'s rank-aligned limit predicts. The H1, H2, H5, and H6 matches are genuine re-measurements, so the attribution does not depend on the oracle selector literally satisfying~(c).

Three threats to validity remain.
\begin{itemize}
\item \emph{Construct validity:} A model's stated confidence may not reflect an internal probability estimate. We use token-level log-probabilities \eqref{eq:logprob-confidence} as the \emph{sole} confidence metric, because they are the finest-grained internal signal the model exposes and because the verbalised alternative proved unusable: in the archived verbalised-confidence run, 93\% of selected completions report $r = 1.0$ and the remainder $r = 0.95$, independent of correctness or difficulty, leaving no variation for calibration analysis. (This degeneracy is itself the confidence-inflation behaviour the trilemma describes, but we do not lean on it inferentially; the hypothesis decisions rest on the logprob metric alone. The archived prompt template retains the verbalised-confidence elicitation line; its parsed value is stored in the pools but unused by selection.) Two caveats: the geometric-mean logprob averages over all tokens (including format and reasoning tokens), diluting the answer-specific signal, so a per-token measure focused on the answer span is future work; and the $\tau = 0.8$ softmax temperature attenuates the probability magnitudes, so a cross-temperature study is needed to confirm the hypothesis decisions are temperature-robust.
\item \emph{External validity:} The confirmatory tests were run on Qwen-2.5-7B; the architecture-independence predicted by Theorem~\ref{thm:optimizer-independence} is corroborated by the pre-specified replication on \texttt{gemma2:9b} and \texttt{yi:9b} (Appendix~\ref{app:cross-model-replication}), and generalization to substantially larger scales remains open.
\item \emph{Ecological validity:} Best-of-N is a controlled optimization, not deployment-time behavior. This is intentional: we test the theoretical mechanism, not a particular deployment scenario.
\end{itemize}

\label{sec:exp-compute}
\paragraph{Compute, code, and data availability.}
All experiments run on a single machine with Ollama serving Qwen-2.5-7B; no GPU cluster is required.
{\sloppy The full experimental pipeline is released at \url{https://github.com/Future-Computing-Group/behavioral-trilemma-experiments}, with a versioned archival snapshot deposited at Zenodo (DOI: \url{https://doi.org/10.5281/zenodo.20392749}); the repository README inventories the shipped artefacts. The implementation is \emph{two-stage}: a Stage~A pool generator (\pathtt{scripts/generate_pool.py}) samples completions with per-token log-probabilities and writes per-task pools, and a Stage~B selector (\pathtt{scripts/select_from_pool.py}) applies the oracle payoff~\eqref{eq:selection-payoff} at every $(N, \wA/\wC, \rmin, \text{seed})$ cell. Because Stage~A is a stochastic LLM-in-the-loop process whose log-probabilities depend on the \pathtt{llama.cpp} build and hardware, the pools and per-config CSVs are regenerable locally rather than shipped; the per-model held-out calibration CSVs underlying the binding sets (\S\ref{sec:exp-tasks}) and the canonical hypothesis-results JSONs (primary, rearrangement, and control suites) are shipped, Stage~B against regenerated pools is fully deterministic, and a bundled script rebuilds the hypothesis-results JSON from any run's CSVs.\par}

Inference uses the endpoint and settings of \S\ref{sec:exp-protocol} (\texttt{qwen2.5:7b}, Q4\_K\_M, $\tau = 0.8$, \texttt{logprobs: true}); the full prompt template per task category is archived alongside the driver script. Ground-truth correctness $y_i$ is determined task-type-specifically: arithmetic answers by exact computation, factual answers by exact match against the per-task ground-truth field of the committed task set (\texttt{tasks/task\_set.json}), and code outputs against test-case execution.

\section{Discussion}
\label{sec:discussion}

\subsection{Implications for Alignment Training}

\textbf{RLHF with a calibration term targets an infeasible point.} Folding a calibration signal into a standard RLHF reward~\citep{ouyang2022training} moves the system from the Goodhart regime into the trilemma regime (\S\ref{sec:goodhart-transition}, Proposition~\ref{prop:rlhf-brier}), creating incentive to \emph{appear} calibrated rather than \emph{be} calibrated; one of the three objectives must then be architecturally enforced rather than jointly optimised (Section~\ref{sec:resolution}).

\textbf{The H+A corner dominates --- conditional on Assumption~\ref{asm:cov-positive}.} RLHF reward models weight helpfulness and fluency, and human raters prefer agents that attempt to help over agents that express uncertainty~\citep{sharma2024towards, perez2023discovering}; this pressure toward the sycophancy pattern is, on the trilemma's reading, a rational response to the incentive structure rather than a training deficiency, so better training data or more RLHF iterations will not fix sycophancy while that incentive structure drives it. The strength of this conclusion depends on Assumption~\ref{asm:cov-positive}: if human raters were calibration-preferring on average, the assumption would fail and the selection-induced degradation of Proposition~\ref{prop:bon} would lose its empirical force (the impossibility theorems do not use the assumption, so the trilemma itself would still bind wherever its own assumptions hold). Recent longitudinal evidence supports the assumption's direction: in a preregistered study ($N=3{,}075$, with a three-week census-representative field component), a majority of users came to prefer sycophantic AI because it made them feel understood rather than for the quality of its advice~\citep{ibrahim2026sycophantic} --- a revealed preference for affirmation over accuracy consistent with a positive population-level helpfulness--miscalibration covariance.
\textbf{KL-constrained RL and iterative RLHF.} KL-penalised policy-gradient methods (PPO, TRPO) anchor the policy to a reference $\piref$ via $D_{\mathrm{KL}}(\pi \Vert \piref)$; two regimes must be distinguished:
\begin{itemize}
\item \emph{Fixed-reference regime.} With $\piref$ held fixed, KL-penalised RL has the closed-form optimum $\pi^*(a \mid s) \propto \piref(a \mid s) \exp(R(s, a) / \beta)$, which bounds the displacement from $\piref$ as a function of $\beta$. If $\piref$ is itself calibrated (e.g., a pre-trained base model before RLHF), a sufficiently strong KL anchor can keep the policy within a bounded-calibration-loss neighbourhood of the calibrated reference: the KL coefficient acts as a soft commitment device in the sense of~\S\ref{sec:resolution}, and the trajectory toward the shifted optimum is \emph{bounded}, not merely \emph{slowed}, as a function of $\beta$.
\item \emph{Iterative / moving-reference regime.} In iterative RLHF and online DPO-style methods, $\piref$ is updated between rounds; the fixed-policy analysis of Lemma~\ref{lem:perturbation} is then an approximation, and ``slows but does not prevent'' is the correct description: the shifted optimum is no longer a fixed target, and each reference update shifts the feasible set toward the previous round's displacement. A full two-timescale analysis~\citep{borkar1997stochastic} or mean-field treatment~\citep{lasry2007meanfield} of this setting is open.
\end{itemize}

\textbf{Context-dependent Pareto selection and regulatory alignment.} Different deployment contexts require different Pareto points: local development environments tolerate the H+A corner (low blast radius), shared infrastructure requires the H+C corner, and safety-critical systems may require the C+A corner. The EU AI Act's Article~14~\citep{eu2024aiact} mandates the H+C corner at the cost of autonomy; the NIST Risk Management Framework~\citep{nist2023rmf} aligns with calibration (``Measure'' function) and commitment (``Manage'' function).

\subsection{The Formal Model: Gradient-Trained LLMs and Information Design}
\label{sec:model-applicability}

The Perturbation Lemma models the agent as an expected-utility maximizer with a separable type $\theta$ and report $r$; Remark~\ref{rem:reduced-form} gives the reduced-form reading of this analogy for gradient-trained LLMs, which do not explicitly compute $p^*$ and decide whether to inflate. The load-bearing mechanism is population-level \emph{gradient selection}: on binding states, reports just above $\rmin$ receive the highest composite reward, and over many gradient steps this selection pressure drives the policy toward inflation as an emergent property. The trilemma's predictions are independently confirmed by the model-internal evidence reviewed in \S\ref{sec:related} and, in deployment, by the longitudinal field evidence of \citet{ibrahim2026sycophantic}.

\label{sec:info-design}
The oversight setting sits inside the information-design family but is \emph{structurally dual} to Bayesian persuasion rather than an instance of it: in \citet{kamenica2011bayesian} the uninformed sender commits to a signal structure to induce the receiver's preferred action, whereas in the CGDP the informed agent observes its own competence $p$ while the uninformed principal commits to the approval rule $q(r)$. The CGDP therefore belongs to the \emph{delegation} tradition~\citep{holmstrom1984delegation}, in which an uninformed principal commits to a decision rule over reports from an informed agent, and differs from cheap talk by scoring the report against ground truth and anchoring the gate to a single threshold.

\subsection{Beyond Reinforcement Learning: Generality of the Trilemma}
\label{sec:generality}

The method mapping of Section~\ref{sec:mapping} covers RL, RLHF, DPO, and related alignment methods, but the trilemma is not specific to them (Theorem~\ref{thm:optimizer-independence}): any ML system whose training or inference-time objective instantiates the four CGDP ingredients (private competence information, a self-reported confidence or score, a non-affine gating function conditioned on that report, and a proper scoring rule evaluating accuracy) falls within the trilemma's scope. Table~\ref{tab:generality} maps six further settings onto the four CGDP ingredients. The first four are \emph{direct} instances (a confidence-or-quality report gated against a threshold under a proper-scoring evaluation); the last two are \emph{structural analogues} in which the role mapping is inverted or the report is a contribution score rather than a confidence.

\begin{table}[htbp]
\centering
\small
\caption{Six further settings as trilemma instances; each predicted effect follows from Lemma~\ref{lem:perturbation} applied to the stated mapping.}
\label{tab:generality}
\footnotesize
\begin{tabularx}{\textwidth}{>{\hsize=1.15\hsize}X >{\hsize=0.7\hsize}X >{\hsize=0.85\hsize}X >{\hsize=1.3\hsize}X}
\toprule
Setting & Report $r$ & Gate $q$ & Predicted trilemma effect \\
\midrule
Selective prediction / learning to defer~\citep{chow1957optimum, elyaniv2010foundations, madras2018predict} & confidence score & abstention threshold & inflate near boundary to avoid deferral \\
Conformal prediction (strategic)~\citep{vovk2005algorithmic} & (de)conformity score & coverage threshold & deflate nonconformity to avoid being flagged \\
LLM tool use / function calling~\citep{schick2023toolformer} & self-assessed competence & confidence threshold for autonomous answer & inflate confidence to act without the tool \\
Mixture-of-experts routing~\citep{shazeer2017outrageously} & gating/routing score & routing selection & overconfident gating scores near each expert's competence boundary \\
Active learning (analogue)~\citep{settles2012active} & reported uncertainty & query budget & sign-reversed: under-report uncertainty when querying is penalised \\
Federated learning w/ contribution scoring (analogue)~\citep{kairouz2021advances} & local model quality & contribution-weighted inclusion & inflate quality to ensure inclusion; no mechanism gives truthful + full + helpful jointly \\
\bottomrule
\end{tabularx}
\end{table}

Conversely, the trilemma does not apply where any of the four ingredients is missing: pure supervised learning (no gate), plain RL (no self-report), GANs (the discriminator evaluates quality, not self-reported confidence), and self-supervised learning (no principal--agent structure) each lack one, and their miscalibration arises through other mechanisms (Goodhart effects, distribution shift). The trilemma is thus not an RL result that happens to generalize but a payoff-geometry result that RL happens to instantiate, applicable wherever oversight mediates a learned agent's autonomy through its self-reported confidence.

\subsection{Limitations}
\label{sec:limits}

\textbf{Modeling scope.} The stylized single rational principal does not match real deployment with intermittent monitoring, bounded rationality, or automated oversight; real oversight involves developers, deployers, users, and regulators with conflicting thresholds; and with multi-dimensional private information, the impossibility landscape may differ.

\textbf{Experimental scope.} Whether the patterns confirmed under the controlled Best-of-N selection mechanism also emerge under gradient-based training (PPO, DPO) or in-context adaptation remains an important direction for future work.

\textbf{Static reduced-form model.} The CGDP deliberately isolates the payoff geometry from the computational process (Remark~\ref{rem:reduced-form}); it does not model sequential token generation, multi-turn interaction, or in-context dynamics. An LLM may learn to inflate confidence within a single episode if the context rewards it; we conjecture an in-context inflation rate of $O(1/\sqrt{T_{\text{ctx}}})$, but formalizing this requires a theory of in-context learning for strategic behavior that does not yet exist~\citep{brown2020language, garg2022can, xie2022explanation}.

\textbf{Delegation competence.} The Commitment Resolution (Theorem~\ref{thm:commitment}) preserves helpfulness only under Assumption~\ref{asm:competent-delegation}; the impossibility direction does not depend on that assumption, but when delegate competence drops (e.g., reviewers rubber-stamp uncertain cases), achieved helpfulness degrades proportionally to the delegation rate. Characterising when the assumption holds empirically is the territory of recursive reward modelling~\citep{leike2018scalable} and sandwich evaluation~\citep{bowman2022measuring}.

\section{Conclusion}
\label{sec:conclusion}

We have established the Behavioral Credibility Trilemma: no RL policy with confidence-gated autonomy can simultaneously maximize helpfulness, calibration, and autonomy under rational oversight when some tasks exceed reliable competence; the impossibility is endogenized by the principal's own optimal gate, is optimizer-independent within the stated scope, and is confirmed on all five falsifiable predictions by the 540-configuration Best-of-N experiment of Section~\ref{sec:experiments}. Standard RL and RLHF avoid the trilemma only by not attempting calibration; once a calibration objective enters the loop, which is where the field is heading, one of the three objectives must be architecturally enforced through the commitment or separation pathways of Section~\ref{sec:resolution}. The broader point: calibrated autonomy is a multi-objective constrained optimization whose objectives are provably incompatible, and making the trade-off selection explicit is better than pretending it does not exist.


\acks{This work was conducted at the Future Computing Group, University of Oulu, Finland.}


\appendix

\section{Boundary Verification and Properness Destruction}
\label{app:proofs}

\subsection{Boundary Verification for Theorem~\ref{thm:non-affine}(ii)}
\label{app:boundary}

Parts (i)--(iii) of Theorem~\ref{thm:non-affine} are proved in Section~\ref{sec:non-affine}. Here we verify the boundary behaviour of the step rule $q^*(r) = \ind\{r \geq r_0\}$, $r_0 = p_{\min} + \sqrt{\wA R^*/\wC}$, underlying the first-best screening claim of part~(ii).

\emph{Uniqueness of $r_0$ as best response for pooling types.} For a type $p \in [p_{\min}, r_0)$, the payoff from reporting $r \geq r_0$ is $V(r) = -\wC(r - p)^2 + \wA R^*$. This is strictly decreasing in $r$ for $r > p$, so the unique optimum among approved reports is $r = r_0$ (the lowest approved report). Any $r \in (p, r_0)$ yields $V(r) = -\wC(r-p)^2 < 0 < V(r_0)$ when $\wA R^* > \wC(r_0 - p)^2$, which holds for $p \geq p_{\min}$ by construction.

\emph{Types $p < p_{\min}$.} Type $p = p_{\min} - \varepsilon$ gets utility $V(r_0) = -\wC(r_0 - p)^2 + \wA R^* = -\wC(\sqrt{\wA R^*/\wC} + \varepsilon)^2 + \wA R^* = -2\varepsilon\sqrt{\wC\wA R^*} - \wC\varepsilon^2 < 0$ from inflating, so truthful reporting ($V(p) = 0$) is strictly preferred.

\emph{Types $p > r_0$.} These types satisfy $p > r_0 > p_{\min}$, so truthful reporting $r = p$ clears the gate: $V(p) = \wA R^* > 0$. Any report $r \neq p$ incurs a calibration penalty $-\wC(r - p)^2 < 0$, so $r = p$ is the unique optimum. These types are approved and report truthfully.

\emph{Tie-breaking at $p = p_{\min}$.} The marginal type $p = p_{\min}$ is exactly indifferent: $V(r_0) = -\wC \cdot \wA R^*/\wC + \wA R^* = 0 = V(p_{\min})$. We adopt the standard tie-breaking convention favoring the mechanism, so $p_{\min}$ pools at $r_0$ and is approved, implementing the first-best boundary.

\subsection{Properness Destruction by Report-Dependent Perturbations}

\begin{proposition}[Properness under Report-Dependent Perturbation]
\label{prop:properness-destruction}
Let $S(r, \omega)$ be a strictly proper scoring rule with convex generator $G \in C^2((0,1))$. Let $h: [0,1] \to \R$ be $C^1$. The perturbed rule $\tilde{S}(r, \omega) = S(r, \omega) + h(r)$ is strictly proper if and only if $h$ is constant.
\end{proposition}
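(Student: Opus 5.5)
The plan is to use the Savage--Gneiting representation of the expected score, $\bar S(r;p) = \E_p[S(r,\omega)] = G(r) + G'(r)(p-r)$, as in Corollary~\ref{cor:general-inflation}. Since $h(r)$ does not depend on $\omega$, the perturbed expected score is $\bar{\tilde S}(r;p) = \bar S(r;p) + h(r)$. I will prove the two directions separately.

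\emph{If $h$ is constant:} adding a constant does not change the maximizer. So $\arg\max_r \bar{\tilde S}(r;p) = \arg\max_r \bar S(r;p) = \{p\}$ for every $p$, and strict properness is inherited directly.

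\emph{Only if:} suppose $\tilde S$ is strictly proper. For every interior $p \in (0,1)$, the unique maximizer of $\bar{\tilde S}(\cdot;p)$ is then $r = p$, which is an interior point where the function is differentiable. The first-order condition therefore holds at $r = p$. Differentiating the representation gives
\[
\partial_r \bar S(r;p) = G'(r) + G''(r)(p-r) - G'(r) = G''(r)(p-r),
\]
which vanishes at $r=p$. The first-order condition thus reduces to $h'(p) = 0$ for every $p \in (0,1)$. Since $h$ is $C^1$ on $[0,1]$, continuity extends $h' \equiv 0$ to the endpoints, and the mean value theorem then makes $h$ constant.

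The main obstacle is regularity. Two points need care. First, the differentiation uses $G \in C^2((0,1))$; at the endpoints $G'$ may blow up (as it does for the log score), so I restrict the first-order argument to interior $p$ and rely on continuity of $h$ for the boundary. Second, the maximizer at an interior $p$ must be interior, so that a vanishing derivative is genuinely necessary; strict properness gives exactly this, since the maximizer is $p$ itself.

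I would also add a remark explaining why the intro's "non-constant affine perturbations destroy properness" is consistent with this result. If $h(r) = b r$ with $b \neq 0$, the first-order condition reads $G''(r)(p-r) + b = 0$, so the maximizer shifts to $r - p = b/G''(r) \neq 0$, and properness fails. This is the same displacement computed in Corollary~\ref{cor:general-inflation}.
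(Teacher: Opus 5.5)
Your proof is correct and follows essentially the same route as the paper: the Savage representation, cancellation of the $G'(r)$ terms so the first-order condition at $r = p$ reduces to $h'(p) = 0$ on $(0,1)$, constancy of $h$ from $C^1$ regularity, and the converse by a constant shift. Your extra care about the maximizer being interior and about extending to the endpoints only makes explicit what the paper's proof leaves implicit.
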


\begin{proof}
The expected perturbed score is $\E_p[\tilde{S}] = G(r) + G'(r)(p - r) + h(r)$. Differentiating with respect to $r$:
\[
\frac{\partial}{\partial r}\E_p[\tilde{S}] = G'(r) + G''(r)(p - r) - G'(r) + h'(r) = G''(r)(p - r) + h'(r),
\]
where the $G'(r)$ terms cancel. Setting $r = p$: $h'(p) = 0$ for all $p \in (0,1)$. Since $h$ is $C^1$, $h' \equiv 0$, so $h$ is constant. The converse is immediate: a constant $h$ shifts the expected score by a constant, leaving the argmax and strict properness unchanged.
\end{proof}

\begin{corollary}[Gated Autonomy Perturbations]
\label{cor:gate-destroys-properness}
Suppose $\wA R(s,a) > 0$. Then the autonomy perturbation $h(r) = \wA \cdot q(r) \cdot R(s,a)$ destroys strict properness whenever $q$ is non-constant. For $q \in C^1$ this is immediate from Proposition~\ref{prop:properness-destruction}, since $h$ is then $C^1$ and non-constant. For the canonical step gate $q(r) = \ind\{r \geq \rmin\}$ with interior threshold $\rmin \in (0, 1)$, which is not $C^1$, the conclusion follows instead from the threshold comparison of Lemma~\ref{lem:perturbation}(ii) (with $\wC = 1$, since the perturbed rule is $S + h$): for $p < \rmin$ the truthful report forfeits the gate bonus $\wA R(s,a) > 0$, while inflating to $r = \rmin$ costs only the calibration penalty $G(p) - G(\rmin) - G'(\rmin)(p - \rmin)$, which vanishes as $p \to \rmin^-$; hence types sufficiently close below the threshold strictly prefer $r = \rmin$ (under the Brier score, exactly the types $p \in (\rmin - \sqrt{\wA R(s,a)},\, \rmin)$), and $\tilde{S}$ is not strictly proper. The hypothesis $\wA R(s,a) > 0$ is necessary: for the zero-reward abstain action of Assumption~\ref{asm:abstain}, $h \equiv 0$ regardless of $q$, and properness is unaffected.
\end{corollary}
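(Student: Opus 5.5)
The statement to be proved is Corollary~\ref{cor:gate-destroys-properness}. I would split it into the smooth case and the step-gate case. A single-argument route through Proposition~\ref{prop:properness-destruction} is unavailable, because the step gate is not $C^1$.

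\emph{Smooth case ($q \in C^1$).} Set $h(r) = \wA R(s,a)\, q(r)$. Since $\wA R(s,a) > 0$, $h$ is $C^1$ and non-constant exactly when $q$ is. Proposition~\ref{prop:properness-destruction} then says that $\tilde S = S + h$ is not strictly proper. Concretely, its proof gives some $p \in (0,1)$ with $h'(p) \neq 0$, so the derivative of the expected perturbed score at $r = p$ equals $h'(p) \neq 0$. Hence $r = p$ is not even a stationary point of the perturbed score. I would note one check: the proposition's proof uses $G'' $ only on $(0,1)$. If $q$ is non-constant on $[0,1]$, it is non-constant on $(0,1)$, so $h'$ cannot vanish identically on the open interval.

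\emph{Step gate.} Take $q = \ind\{r \geq \rmin\}$ with $\rmin \in (0,1)$ and a type $p < \rmin$. Write $E(r;p) = G(r) + G'(r)(p-r)$ for the expected score.
\begin{itemize}
\item Truthful reporting gives $E(p;p) = G(p)$ and no bonus, since $q(p) = 0$.
\item Reporting $r = \rmin$ gives $E(\rmin;p) + \wA R(s,a)$.
\item The net gain from inflating is therefore $\wA R(s,a) - D_G(p,\rmin)$, where $D_G(p,\rmin) = G(p) - G(\rmin) - G'(\rmin)(p-\rmin) \geq 0$ is the Bregman divergence.
\end{itemize}
Because $G$ is $C^1$ (indeed $C^2$) on $(0,1)$, we have $D_G(p,\rmin) \to 0$ as $p \to \rmin^-$. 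So there is $\eta > 0$ such that every $p \in (\rmin - \eta, \rmin)$ strictly prefers $r = \rmin$ to $r = p$. For those types $r = p$ is not the maximizer, and strict properness fails.

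The Brier specialization follows directly: $D_G(p,\rmin) = (\rmin - p)^2$, which recovers the interval $p \in (\rmin - \sqrt{\wA R(s,a)}, \rmin)$ of Lemma~\ref{lem:perturbation}(ii) with $\wC = 1$.

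\emph{Necessity of $\wA R(s,a) > 0$.} If $\wA R(s,a) = 0$, as for the abstain action of Assumption~\ref{asm:abstain}, then $h \equiv 0$ and $\tilde S = S$ remains strictly proper.

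The only delicate point is the continuity of the Bregman gap at $\rmin$ from the left. This needs $\rmin$ to be interior so that $G'$ is finite and continuous there, which is exactly why the statement assumes $\rmin \in (0,1)$. I expect no further obstacle.
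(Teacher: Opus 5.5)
Your proposal is correct and follows the paper's own argument: the $C^1$ case goes through Proposition~\ref{prop:properness-destruction}, and the step gate goes through the two-report comparison, weighing the gate bonus $\wA R(s,a)$ against the penalty $G(p) - G(\rmin) - G'(\rmin)(p - \rmin)$, which vanishes as $p \to \rmin^-$ and equals $(\rmin - p)^2$ under the Brier score. Your extra checks, that $h'$ cannot vanish identically on the open interval and that an interior $\rmin$ keeps $G'(\rmin)$ finite, are sound refinements of the same route rather than a different one.
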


\section{Cross-Model Evidence}
\label{app:cross-model-placement}

\subsection{Gated cross-model replication}
\label{app:cross-model-replication}

The external-validity limitation noted in \S\ref{sec:exp-models} and \S\ref{sec:limits} is addressed by re-running the complete 540-configuration gated protocol on two additional open-weights families, \texttt{gemma2:9b} (Google Gemma~2) and \texttt{yi:9b} (01.AI Yi), under the pre-specified design, with the per-model held-out binding-set estimates (\S\ref{sec:exp-tasks}) and the five confirmatory Holm decisions fixed in advance (\texttt{yi:9b} is the pre-specified fallback slot, entered after the frozen second slot, \texttt{mistral:7b}, tripped the pre-specified parse-rate gate); the mechanism is declared replicated on a family iff all five decisions match the primary. The binding-set counts in Table~\ref{tab:cross-model-replication} order the three families from most to least capable on this task set, so reproduction across that range shows the trilemma's empirical signature is not an artefact of one model's competence profile.

All five Holm decisions match the primary on both new families (Table~\ref{tab:cross-model-replication}, $5/5$ each). H1 and H4--H6 reproduce at large margins; the monotone-inflation-trend test H2, though significant on both new families (raw $p \approx 3 \times 10^{-3}$ on each), does so at a more modest margin than on the primary, making the ordered-trend signal the weakest of the five away from the primary model. Each family ran entirely on a single compute host (\texttt{yi:9b} on a CSC Puhti V100 node, the others on local Apple-Silicon Ollama), preserving per-model internal consistency.

\begin{table*}[htbp]
\centering
\caption{Pre-specified gated cross-model replication of the five confirmatory hypotheses (full 540-configuration protocol per family; statistics at $N\!=\!32$, $\rmin\!=\!0.7$ as in Table~\ref{tab:hypotheses}). Every decision is REJECT under the Bonferroni--Holm correction across the five.}
\label{tab:cross-model-replication}
\footnotesize
\begin{tabularx}{\textwidth}{@{}l Y Y Y@{}}
\toprule
 & \texttt{qwen2.5:7b} (primary) & \texttt{gemma2:9b} & \texttt{yi:9b} \\
\midrule
Binding sets ($\rmin\!=\!0.5/0.7/0.9$) & $71/72/74$ & $59/61/65$ & $84/88/97$ \\
Stage-A unparsed rate & --- & $0.87\%$ & $0.51\%$ \\
\midrule
H1 (gating degradation) & $t = 10.95$, $d = 1.10$ & $t = 7.64$, $d = 0.76$ & $t = 10.58$, $d = 1.06$ \\
H2 (monotone inflation) & $z = 3.76$ & $z = 2.76$, $p = 2.9\!\times\!10^{-3}$ & $z = 2.71$, $p = 3.4\!\times\!10^{-3}$ \\
H4 (threshold pooling) & $z = 30.16$ & $z = 26.4$ & $z = 14.9$ \\
H5 (binding specificity) & $d = 5.35$ & $d = 6.37$ & $d = 3.84$ \\
H6 (control, $\wA\!=\!0$) & $t = -13.08$ & $t = -10.9$ & $t = -21.0$ \\
\midrule
Holm decision (5/5) & \textbf{primary} & \textbf{REPLICATED} & \textbf{REPLICATED} \\
\bottomrule
\end{tabularx}
\end{table*}

\subsection{Descriptive placement}
\label{app:cross-model-descriptive}

\begin{figure}[htbp]
\centering
\includegraphics[width=0.44\linewidth]{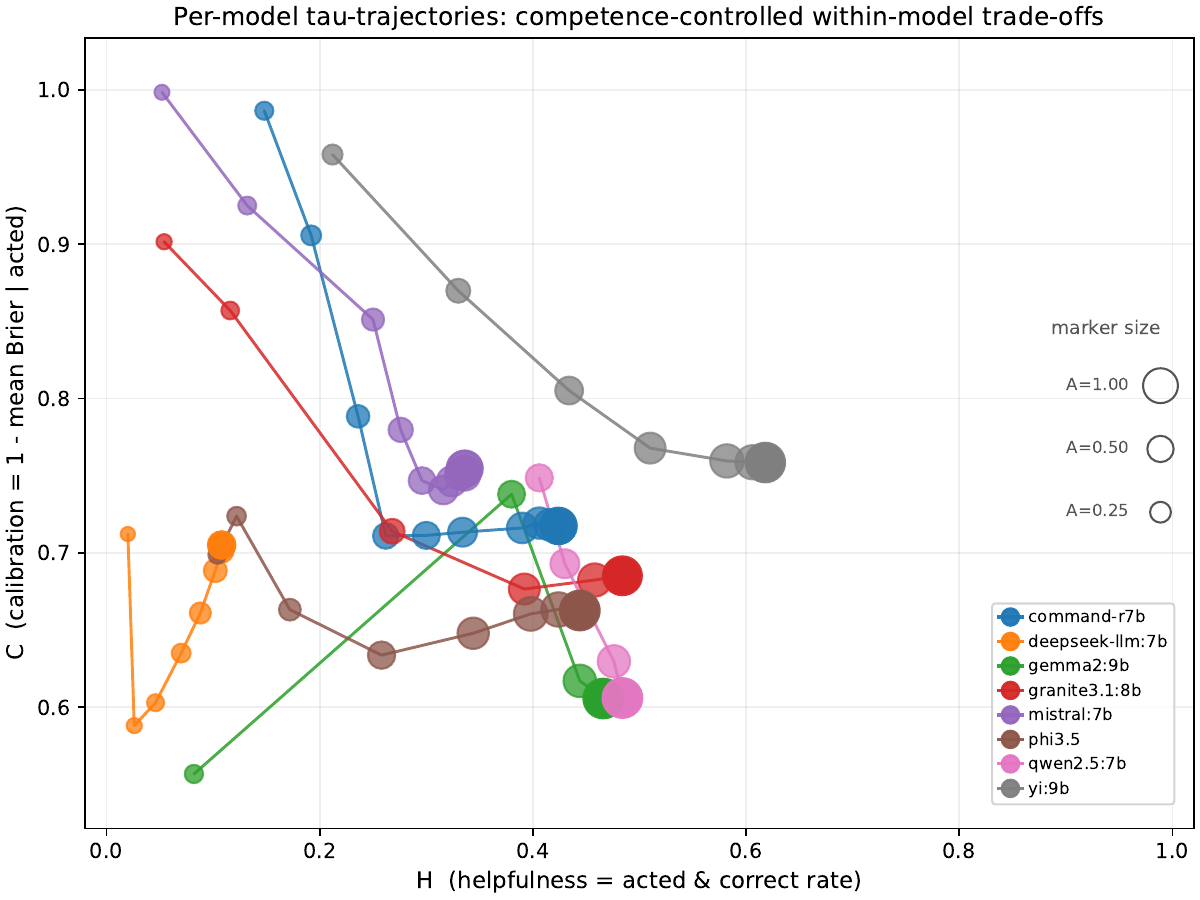}
\caption{Per-model $\tau$-trajectories on the $(H, C)$ behavioral axes (marker size $\propto A$, action rate); baseline position reflects competence, curve shape the within-model trade-off. No model approaches the joint-good corner $(H, C, A)\!\to\!1$.}
\label{fig:cross-model-logprob}
\end{figure}

Figure~\ref{fig:cross-model-logprob} places eight open-weights architectures on the $(H, C)$ behavioral axes under an ungated protocol, a comparability anchor superseded by the gated replication above for any inferential purpose. The eight architectures (named in the figure legend) run through one OpenAI-compatible logprob client at $N\!=\!1$ ungated (100-task subset, five seeds); each is rendered as a $\tau$-trajectory in $(H, C)$ (predictions with $r_{\text{logprob}} \!<\! \tau$ reclassified as abstained as $\tau$ sweeps $0$ to $1$).

Three caveats bound this anchor: the autonomy axis conflates deliberate deferral with parse failures (\texttt{mistral:7b-instruct}'s $A\!\approx\!0.79$ is parse-fail-driven), the calibration value is a faithful but not byte-exact reimplementation of \eqref{eq:logprob-confidence}, and correctness uses a charitable verifier (\texttt{analysis.robust\_verify}) to absorb cross-family verbosity differences, so the figure's \texttt{qwen2.5:7b} point will not numerically match the gated 540-config numbers.

\bibliography{references}

\begin{thebibliography}{95}
\providecommand{\natexlab}[1]{#1}
\providecommand{\url}[1]{\texttt{#1}}
\expandafter\ifx\csname urlstyle\endcsname\relax
  \providecommand{\doi}[1]{doi: #1}\else
  \providecommand{\doi}{doi: \begingroup \urlstyle{rm}\Url}\fi

\bibitem[Achiam et~al.(2017)Achiam, Held, Tamar, and
  Abbeel]{achiam2017constrained}
Joshua Achiam, David Held, Aviv Tamar, and Pieter Abbeel.
\newblock Constrained policy optimization.
\newblock In \emph{Proceedings of the 34th International Conference on Machine
  Learning}, volume~70 of \emph{PMLR}, pages 22--31, 2017.

\bibitem[Akbarpour and Li(2020)]{akbarpour2020credible}
Mohammad Akbarpour and Shengwu Li.
\newblock Credible auctions: A trilemma.
\newblock \emph{Econometrica}, 88\penalty0 (2):\penalty0 425--467, 2020.

\bibitem[{Anthropic}(2026)]{anthropic2026claudeopus}
{Anthropic}.
\newblock Claude {Opus} 4.6 system card.
\newblock \url{https://www.anthropic.com/claude-opus-4-6-system-card}, 2026.
\newblock System card.

\bibitem[Bai et~al.(2022)Bai, Kadavath, Kundu, Askell, Kernion, Jones, Chen,
  Goldie, Mirhoseini, McKinnon, et~al.]{bai2022constitutional}
Yuntao Bai, Saurav Kadavath, Sandipan Kundu, Amanda Askell, Jackson Kernion,
  Andy Jones, Anna Chen, Anna Goldie, Azalia Mirhoseini, Cameron McKinnon,
  et~al.
\newblock Constitutional {AI}: Harmlessness from {AI} feedback.
\newblock Technical report, Anthropic, 2022.

\bibitem[Borkar(1997)]{borkar1997stochastic}
Vivek~S. Borkar.
\newblock Stochastic approximation with two time scales.
\newblock \emph{Systems \& Control Letters}, 29\penalty0 (5):\penalty0
  291--294, 1997.

\bibitem[Bowman et~al.(2022)Bowman, Hyun, Perez, Chen, Pettit, Heiner,
  Lukosiute, Askell, Jones, Chen, et~al.]{bowman2022measuring}
Samuel~R. Bowman, Jeeyoon Hyun, Ethan Perez, Edwin Chen, Craig Pettit, Scott
  Heiner, Kamile Lukosiute, Amanda Askell, Andy Jones, Anna Chen, et~al.
\newblock Measuring progress on scalable oversight for large language models.
\newblock \emph{arXiv preprint arXiv:2211.03540}, 2022.

\bibitem[Bradshaw et~al.(2003)Bradshaw, Feltovich, Jung, Kulkarni, Taysom, and
  Uszok]{bradshaw2003human}
Jeffrey~M Bradshaw, Paul~J Feltovich, Hyuckchul Jung, Shriniwas Kulkarni,
  William Taysom, and Andrzej Uszok.
\newblock Human-agent teamwork and adjustable autonomy in practice.
\newblock In \emph{Proceedings of the 7th International Symposium on Artificial
  Intelligence, Robotics and Automation in Space}, 2003.

\bibitem[Brier(1950)]{brier1950verification}
Glenn~W Brier.
\newblock Verification of forecasts expressed in terms of probability.
\newblock \emph{Monthly Weather Review}, 78\penalty0 (1):\penalty0 1--3, 1950.

\bibitem[Brown et~al.(2020)Brown, Mann, Ryder, Subbiah, Kaplan, Dhariwal,
  Neelakantan, Shyam, Sastry, Askell, et~al.]{brown2020language}
Tom~B Brown, Benjamin Mann, Nick Ryder, Melanie Subbiah, Jared~D Kaplan,
  Prafulla Dhariwal, Arvind Neelakantan, Pranav Shyam, Girish Sastry, Amanda
  Askell, et~al.
\newblock Language models are few-shot learners.
\newblock In \emph{Advances in Neural Information Processing Systems},
  volume~33, pages 1877--1901, 2020.

\bibitem[Casper et~al.(2023)Casper, Davies, Shi, Gilbert, Scheurer, Rando,
  Freedman, Korbak, Lindner, et~al.]{casper2023open}
Stephen Casper, Xander Davies, Claudia Shi, Thomas~Krendl Gilbert,
  J{\'e}r{\'e}my Scheurer, Javier Rando, Rachel Freedman, Tomasz Korbak, David
  Lindner, et~al.
\newblock Open problems and fundamental limitations of reinforcement learning
  from human feedback.
\newblock \emph{Transactions on Machine Learning Research}, 2023.
\newblock URL \url{https://openreview.net/forum?id=bx24KpJ4Eb}.

\bibitem[Chen and Pennock(2010)]{chen2010designing}
Yiling Chen and David~M Pennock.
\newblock Designing markets for prediction.
\newblock \emph{AI Magazine}, 31\penalty0 (4):\penalty0 42--52, 2010.

\bibitem[Chen et~al.(2014)Chen, Kash, Ruberry, and Shnayder]{chen2014eliciting}
Yiling Chen, Ian~A. Kash, Michael Ruberry, and Victor Shnayder.
\newblock Eliciting predictions and recommendations for decision making.
\newblock \emph{ACM Transactions on Economics and Computation}, 2\penalty0
  (2):\penalty0 6:1--6:27, 2014.
\newblock \doi{10.1145/2556271}.

\bibitem[Chow(1957)]{chow1957optimum}
C.~K. Chow.
\newblock An optimum character recognition system using decision functions.
\newblock \emph{IRE Transactions on Electronic Computers}, EC-6\penalty0
  (4):\penalty0 247--254, 1957.
\newblock \doi{10.1109/TEC.1957.5222035}.

\bibitem[Christiano et~al.(2017)Christiano, Leike, Brown, Martic, Legg, and
  Amodei]{christiano2017deep}
Paul~F. Christiano, Jan Leike, Tom Brown, Miljan Martic, Shane Legg, and Dario
  Amodei.
\newblock Deep reinforcement learning from human preferences.
\newblock In \emph{Advances in Neural Information Processing Systems 30
  (NeurIPS 2017)}, pages 4299--4307, 2017.

\bibitem[Clarke(1971)]{clarke1971multipart}
Edward~H. Clarke.
\newblock Multipart pricing of public goods.
\newblock \emph{Public Choice}, 11:\penalty0 17--33, 1971.

\bibitem[Conitzer et~al.(2024)Conitzer, Freedman, Heitzig, Holliday, Jacobs,
  Lambert, Moss{\'e}, Pacuit, Russell, Schoelkopf, Tewolde, and
  Zwicker]{conitzer2024social}
Vincent Conitzer, Rachel Freedman, Jobst Heitzig, Wesley~H Holliday, Bob~M
  Jacobs, Nathan Lambert, Milan Moss{\'e}, Eric Pacuit, Stuart Russell, Hailey
  Schoelkopf, Emanuel Tewolde, and William~S Zwicker.
\newblock Position: Social choice should guide {AI} alignment in dealing with
  diverse human feedback.
\newblock In \emph{Proceedings of the 41st International Conference on Machine
  Learning}, pages 9346--9360, 2024.

\bibitem[Crawford and Sobel(1982)]{crawford1982strategic}
Vincent~P Crawford and Joel Sobel.
\newblock Strategic information transmission.
\newblock \emph{Econometrica}, 50\penalty0 (6):\penalty0 1431--1451, 1982.

\bibitem[Dai et~al.(2024)Dai, Pan, Sun, Ji, Xu, Liu, Wang, and
  Yang]{dai2024safe}
Josef Dai, Xuehai Pan, Ruiyang Sun, Jiaming Ji, Xinbo Xu, Mickel Liu, Yizhou
  Wang, and Yaodong Yang.
\newblock Safe {RLHF}: Safe reinforcement learning from human feedback.
\newblock In \emph{Proceedings of the International Conference on Learning
  Representations}, 2024.

\bibitem[Dawid(1982)]{dawid1982well}
A.~Philip Dawid.
\newblock The well-calibrated {B}ayesian.
\newblock \emph{Journal of the American Statistical Association}, 77\penalty0
  (379):\penalty0 605--610, 1982.

\bibitem[Desai and Durrett(2020)]{desai2020calibration}
Shrey Desai and Greg Durrett.
\newblock Calibration of pre-trained transformers.
\newblock In \emph{Proceedings of the 2020 Conference on Empirical Methods in
  Natural Language Processing}, pages 295--302, 2020.

\bibitem[Dorais et~al.(1999)Dorais, Bonasso, Kortenkamp, Pell, and
  Schreckenghost]{dorais1999adjustable}
Gregory~A Dorais, R~Peter Bonasso, David Kortenkamp, Barney Pell, and Debra
  Schreckenghost.
\newblock Adjustable autonomy for human-centered autonomous systems on {Mars}.
\newblock In \emph{AAAI Spring Symposium on Agents with Adjustable Autonomy},
  AAAI Technical Report SS-99-06, pages 16--35, 1999.

\bibitem[El-Yaniv and Wiener(2010)]{elyaniv2010foundations}
Ran El-Yaniv and Yair Wiener.
\newblock On the foundations of noise-free selective classification.
\newblock \emph{Journal of Machine Learning Research}, 11:\penalty0 1605--1641,
  2010.

\bibitem[Embrechts et~al.(2002)Embrechts, McNeil, and
  Straumann]{embrechts2002quantile}
Paul Embrechts, Alexander McNeil, and Daniel Straumann.
\newblock Correlation and dependence in risk management: properties and
  pitfalls.
\newblock In M.~A.~H. Dempster, editor, \emph{Risk Management: Value at Risk
  and Beyond}, pages 176--223. Cambridge University Press, 2002.

\bibitem[Ethayarajh et~al.(2024)Ethayarajh, Xu, Muennighoff, Jurafsky, and
  Kiela]{ethayarajh2024kto}
Kawin Ethayarajh, Winnie Xu, Niklas Muennighoff, Dan Jurafsky, and Douwe Kiela.
\newblock {KTO}: Model alignment as prospect theoretic optimization.
\newblock In \emph{Proceedings of the 41st International Conference on Machine
  Learning (ICML 2024)}, 2024.

\bibitem[{European Parliament and Council of the European
  Union}(2024)]{eu2024aiact}
{European Parliament and Council of the European Union}.
\newblock Regulation ({EU}) 2024/1689 of the {European} parliament and of the
  council of 13 june 2024 laying down harmonised rules on artificial
  intelligence ({Artificial Intelligence Act}), 2024.
\newblock OJ L 2024/1689, 12.7.2024.

\bibitem[Fortuin et~al.(1971)Fortuin, Kasteleyn, and
  Ginibre]{fortuin1971correlation}
C.~M. Fortuin, P.~W. Kasteleyn, and J.~Ginibre.
\newblock Correlation inequalities on some partially ordered sets.
\newblock \emph{Communications in Mathematical Physics}, 22\penalty0
  (2):\penalty0 89--103, 1971.

\bibitem[Frongillo and Kash(2015)]{frongillo2015vector}
Rafael Frongillo and Ian Kash.
\newblock Vector-valued property elicitation.
\newblock In \emph{Proceedings of The 28th Conference on Learning Theory},
  pages 846--865, 2015.

\bibitem[Fudenberg and Tirole(1991)]{fudenberg1991game}
Drew Fudenberg and Jean Tirole.
\newblock \emph{Game Theory}.
\newblock MIT Press, 1991.

\bibitem[Gaikwad(2025)]{gaikwad2025laws}
Madhava Gaikwad.
\newblock Murphy's laws of {AI} alignment: Why the gap always wins.
\newblock \emph{arXiv preprint arXiv:2509.05381}, 2025.
\newblock Preprint.

\bibitem[Gao et~al.(2022)Gao, Schulman, and Hilton]{gao2023scaling}
Leo Gao, John Schulman, and Jacob Hilton.
\newblock Scaling laws for reward model overoptimization.
\newblock \emph{arXiv preprint arXiv:2210.10760}, 2022.

\bibitem[Garg et~al.(2022)Garg, Tsipras, Liang, and Valiant]{garg2022can}
Shivam Garg, Dimitris Tsipras, Percy Liang, and Gregory Valiant.
\newblock What can transformers learn in-context? {A} case study of simple
  function classes.
\newblock In \emph{Advances in Neural Information Processing Systems},
  volume~35, pages 30583--30598, 2022.

\bibitem[Gneiting and Raftery(2007)]{gneiting2007strictly}
Tilmann Gneiting and Adrian~E Raftery.
\newblock Strictly proper scoring rules, prediction, and estimation.
\newblock \emph{Journal of the American Statistical Association}, 102\penalty0
  (477):\penalty0 359--378, 2007.
\newblock \doi{10.1198/016214506000001437}.

\bibitem[Groves(1973)]{groves1973incentives}
Theodore Groves.
\newblock Incentives in teams.
\newblock \emph{Econometrica}, 41\penalty0 (4):\penalty0 617--631, 1973.

\bibitem[Guo et~al.(2017)Guo, Pleiss, Sun, and Weinberger]{guo2017calibration}
Chuan Guo, Geoff Pleiss, Yu~Sun, and Kilian~Q Weinberger.
\newblock On calibration of modern neural networks.
\newblock In \emph{Proceedings of the 34th International Conference on Machine
  Learning}, pages 1321--1330, 2017.

\bibitem[Hadfield-Menell and Hadfield(2019)]{hadfield2019incomplete}
Dylan Hadfield-Menell and Gillian~K Hadfield.
\newblock Incomplete contracting and {AI} alignment.
\newblock In \emph{Proceedings of the 2019 AAAI/ACM Conference on AI, Ethics,
  and Society}, pages 417--422, 2019.

\bibitem[Hanson(2003)]{hanson2003combinatorial}
Robin Hanson.
\newblock Combinatorial information market design.
\newblock \emph{Information Systems Frontiers}, 5\penalty0 (1):\penalty0
  107--119, 2003.

\bibitem[Hardt et~al.(2016)Hardt, Megiddo, Papadimitriou, and
  Wootters]{hardt2016strategic}
Moritz Hardt, Nimrod Megiddo, Christos Papadimitriou, and Mary Wootters.
\newblock Strategic classification.
\newblock In \emph{Proceedings of the 2016 ACM Conference on Innovations in
  Theoretical Computer Science (ITCS)}, pages 111--122, 2016.
\newblock \doi{10.1145/2840728.2840730}.

\bibitem[Harris(1960)]{harris1960lower}
T.~E. Harris.
\newblock A lower bound for the critical probability in a certain percolation
  process.
\newblock \emph{Proceedings of the Cambridge Philosophical Society},
  56\penalty0 (1):\penalty0 13--20, 1960.

\bibitem[Hart and Moore(1988)]{hart1988incomplete}
Oliver Hart and John Moore.
\newblock Incomplete contracts and renegotiation.
\newblock \emph{Econometrica}, 56\penalty0 (4):\penalty0 755--785, 1988.

\bibitem[Hoeffding(1940)]{hoeffding1940masstabinvariante}
Wassily Hoeffding.
\newblock Masstabinvariante korrelationstheorie.
\newblock \emph{Schriften des Mathematischen Instituts und des Instituts f\"ur
  Angewandte Mathematik der Universit\"at Berlin}, 5:\penalty0 181--233, 1940.

\bibitem[Holm(1979)]{holm1979simple}
Sture Holm.
\newblock A simple sequentially rejective multiple test procedure.
\newblock \emph{Scandinavian Journal of Statistics}, 6\penalty0 (2):\penalty0
  65--70, 1979.

\bibitem[Holmstr\"{o}m(1984)]{holmstrom1984delegation}
Bengt Holmstr\"{o}m.
\newblock On the theory of delegation.
\newblock \emph{Bayesian Models in Economic Theory (M.~Boyer and R.~Kihlstrom,
  eds.), North-Holland}, pages 115--141, 1984.

\bibitem[Ibrahim et~al.(2026)Ibrahim, Hafner, Cheng, Lee, Anselmetti, Willer,
  Rocher, and Yang]{ibrahim2026sycophantic}
Lujain Ibrahim, Franziska~Sofia Hafner, Myra Cheng, Cinoo Lee, Rebecca
  Anselmetti, Robb Willer, Luc Rocher, and Diyi Yang.
\newblock Sycophantic {AI} makes human interaction feel more effortful and less
  satisfying over time.
\newblock \emph{arXiv preprint arXiv:2605.07912}, 2026.

\bibitem[Jiang et~al.(2021)Jiang, Araki, Ding, and Neubig]{jiang2021can}
Zhengbao Jiang, Jun Araki, Haibo Ding, and Graham Neubig.
\newblock How can we know when language models know? on the calibration of
  language models for question answering.
\newblock \emph{Transactions of the Association for Computational Linguistics},
  9:\penalty0 962--977, 2021.

\bibitem[Kadavath et~al.(2022)Kadavath, Conerly, Askell, Henighan, Drain,
  Perez, Schiefer, Hatfield-Dodds, DasSarma, Tran-Johnson,
  et~al.]{kadavath2022language}
Saurav Kadavath, Tom Conerly, Amanda Askell, Tom Henighan, Dawn Drain, Ethan
  Perez, Nicholas Schiefer, Zac Hatfield-Dodds, Nova DasSarma, Eli
  Tran-Johnson, et~al.
\newblock Language models (mostly) know what they know.
\newblock \emph{arXiv preprint arXiv:2207.05221}, 2022.

\bibitem[Kairouz et~al.(2021)Kairouz, McMahan, Avent, Bellet, Bennis, Bhagoji,
  Bonawitz, Charles, Cormode, Cummings, et~al.]{kairouz2021advances}
Peter Kairouz, H.~Brendan McMahan, Brendan Avent, Aur{\'e}lien Bellet, Mehdi
  Bennis, Arjun~Nitin Bhagoji, Kallista Bonawitz, Zachary Charles, Graham
  Cormode, Rachel Cummings, et~al.
\newblock Advances and open problems in federated learning.
\newblock \emph{Foundations and Trends in Machine Learning}, 14\penalty0
  (1--2):\penalty0 1--210, 2021.

\bibitem[Kamenica and Gentzkow(2011)]{kamenica2011bayesian}
Emir Kamenica and Matthew Gentzkow.
\newblock Bayesian persuasion.
\newblock \emph{American Economic Review}, 101\penalty0 (6):\penalty0
  2590--2615, 2011.
\newblock \doi{10.1257/aer.101.6.2590}.

\bibitem[Laffont and Tirole(1993)]{laffont1993theory}
Jean-Jacques Laffont and Jean Tirole.
\newblock \emph{A Theory of Incentives in Procurement and Regulation}.
\newblock MIT Press, 1993.

\bibitem[Lang et~al.(2024)Lang, Foote, Russell, Dragan, Jenner, and
  Emmons]{lang2024theoretical}
Leon Lang, Davis Foote, Stuart Russell, Anca Dragan, Erik Jenner, and Scott
  Emmons.
\newblock When your {AI}s deceive you: Challenges of partial observability in
  reinforcement learning from human feedback.
\newblock In \emph{Advances in Neural Information Processing Systems 37
  ({NeurIPS} 2024)}, 2024.

\bibitem[Lasry and Lions(2007)]{lasry2007meanfield}
Jean-Michel Lasry and Pierre-Louis Lions.
\newblock Mean field games.
\newblock \emph{Japanese Journal of Mathematics}, 2\penalty0 (1):\penalty0
  229--260, 2007.

\bibitem[Leike et~al.(2018)Leike, Krueger, Everitt, Martic, Maini, and
  Legg]{leike2018scalable}
Jan Leike, David Krueger, Tom Everitt, Miljan Martic, Vishal Maini, and Shane
  Legg.
\newblock Scalable agent alignment via reward modeling: a research direction.
\newblock \emph{arXiv preprint arXiv:1811.07871}, 2018.

\bibitem[Lov\'{e}n(2026)]{loven2026honestreporting}
Lauri Lov\'{e}n.
\newblock Honest reporting in scored oversight: {The} {True-KL$_0$} {Property}
  via the {Pr\'{e}kopa} {Principle}.
\newblock \emph{arXiv preprint arXiv:2605.03793}, 2026.

\bibitem[Lov\'{e}n and Tarkoma(2026)]{loven_endogeneity_2026}
Lauri Lov\'{e}n and Sasu Tarkoma.
\newblock The endogeneity of miscalibration: Impossibility and escape in scored
  reporting.
\newblock \emph{arXiv preprint arXiv:2605.07671}, 2026.

\bibitem[Madras et~al.(2018)Madras, Pitassi, and Zemel]{madras2018predict}
David Madras, Toniann Pitassi, and Richard Zemel.
\newblock Predict responsibly: Improving fairness and accuracy by learning to
  defer.
\newblock In \emph{Advances in Neural Information Processing Systems},
  volume~31, 2018.

\bibitem[Manheim and Garrabrant(2019)]{manheim2019categorizing}
David Manheim and Scott Garrabrant.
\newblock Categorizing variants of {Goodhart's Law}.
\newblock \emph{arXiv preprint arXiv:1803.04585}, 2019.

\bibitem[McCarthy(1956)]{mccarthy1956measures}
John McCarthy.
\newblock Measures of the value of information.
\newblock \emph{Proceedings of the National Academy of Sciences}, 42\penalty0
  (9):\penalty0 654--655, 1956.

\bibitem[Mirrlees(1971)]{mirrlees1971exploration}
James~A. Mirrlees.
\newblock An exploration in the theory of optimum income taxation.
\newblock \emph{The Review of Economic Studies}, 38\penalty0 (2):\penalty0
  175--208, 1971.
\newblock \doi{10.2307/2296779}.

\bibitem[Murphy(1973)]{murphy1973new}
Allan~H. Murphy.
\newblock A new vector partition of the probability score.
\newblock \emph{Journal of Applied Meteorology}, 12\penalty0 (4):\penalty0
  595--600, 1973.

\bibitem[Myerson(1981)]{myerson1981optimal}
Roger~B Myerson.
\newblock Optimal auction design.
\newblock \emph{Mathematics of Operations Research}, 6\penalty0 (1):\penalty0
  58--73, 1981.

\bibitem[Naeini et~al.(2015)Naeini, Cooper, and
  Hauskrecht]{naeini2015obtaining}
Mahdi~Pakdaman Naeini, Gregory Cooper, and Milos Hauskrecht.
\newblock Obtaining well calibrated probabilities using {B}ayesian binning into
  quantiles.
\newblock In \emph{Proceedings of the AAAI Conference on Artificial
  Intelligence}, volume~29, 2015.
\newblock \doi{10.1609/aaai.v29i1.9602}.

\bibitem[Nakano et~al.(2021)Nakano, Hilton, Balaji, Wu, Ouyang, Kim, Hesse,
  Jain, Kosaraju, Saunders, et~al.]{nakano2021webgpt}
Reiichiro Nakano, Jacob Hilton, Suchir Balaji, Jeff Wu, Long Ouyang, Christina
  Kim, Christopher Hesse, Shantanu Jain, Vineet Kosaraju, William Saunders,
  et~al.
\newblock {WebGPT}: Browser-assisted question-answering with human feedback.
\newblock arXiv preprint arXiv:2112.09332, 2021.

\bibitem[{National Institute of Standards and Technology}(2023)]{nist2023rmf}
{National Institute of Standards and Technology}.
\newblock Artificial intelligence risk management framework ({AI RMF} 1.0).
\newblock \emph{NIST AI 100-1}, 2023.

\bibitem[Oesterheld and Conitzer(2020)]{oesterheld2020decision}
Caspar Oesterheld and Vincent Conitzer.
\newblock Decision scoring rules.
\newblock In \emph{Web and Internet Economics ({WINE} 2020)}, volume 12495 of
  \emph{Lecture Notes in Computer Science}, page 468. Springer, 2020.
\newblock One-page abstract; extended version available from the authors.

\bibitem[Oesterheld et~al.(2023)Oesterheld, Treutlein, Cooper, and
  Hudson]{oesterheld2023incentivizing}
Caspar Oesterheld, Johannes Treutlein, Emery Cooper, and Rubi Hudson.
\newblock Incentivizing honest performative predictions with proper scoring
  rules.
\newblock In \emph{Proceedings of the Thirty-Ninth Conference on Uncertainty in
  Artificial Intelligence ({UAI})}, volume 216 of \emph{Proceedings of Machine
  Learning Research}, pages 1564--1574. PMLR, 2023.

\bibitem[Othman and Sandholm(2010)]{othman2010decision}
Abraham Othman and Tuomas Sandholm.
\newblock Decision rules and decision markets.
\newblock In \emph{Proceedings of the 9th International Conference on
  Autonomous Agents and Multiagent Systems ({AAMAS})}, pages 625--632, 2010.

\bibitem[Ouyang et~al.(2022)Ouyang, Wu, Jiang, Almeida, Wainwright, Mishkin,
  Zhang, Agarwal, Slama, Ray, et~al.]{ouyang2022training}
Long Ouyang, Jeffrey Wu, Xu~Jiang, Diogo Almeida, Carroll Wainwright, Pamela
  Mishkin, Chong Zhang, Sandhini Agarwal, Katarina Slama, Alex Ray, et~al.
\newblock Training language models to follow instructions with human feedback.
\newblock \emph{Advances in Neural Information Processing Systems}, 35, 2022.

\bibitem[Ovadia et~al.(2019)Ovadia, Fertig, Ren, Nado, Sculley, Nowozin,
  Dillon, Lakshminarayanan, and Snoek]{ovadia2019can}
Yaniv Ovadia, Emily Fertig, Jie Ren, Zachary Nado, David Sculley, Sebastian
  Nowozin, Joshua~V Dillon, Balaji Lakshminarayanan, and Jasper Snoek.
\newblock Can you trust your model's uncertainty? {E}valuating predictive
  uncertainty under dataset shift.
\newblock \emph{Advances in Neural Information Processing Systems}, 32, 2019.

\bibitem[Parasuraman et~al.(2000)Parasuraman, Sheridan, and
  Wickens]{parasuraman2000model}
Raja Parasuraman, Thomas~B Sheridan, and Christopher~D Wickens.
\newblock A model for types and levels of human interaction with automation.
\newblock \emph{IEEE Transactions on Systems, Man, and Cybernetics-Part A:
  Systems and Humans}, 30\penalty0 (3):\penalty0 286--297, 2000.

\bibitem[Perdomo et~al.(2020)Perdomo, Zrnic, Mendler-D\"{u}nner, and
  Hardt]{perdomo2020performative}
Juan~C. Perdomo, Tijana Zrnic, Celestine Mendler-D\"{u}nner, and Moritz Hardt.
\newblock Performative prediction.
\newblock In \emph{Proceedings of the 37th International Conference on Machine
  Learning (ICML)}, pages 7599--7609, 2020.

\bibitem[Perez et~al.(2023)Perez, Ringer, Luko{\v{s}}i{\={u}}t{\.{e}}, Nguyen,
  Chen, Heiner, Pettit, Olsson, Kundu, Kadavath, et~al.]{perez2023discovering}
Ethan Perez, Sam Ringer, Kamil{\.{e}} Luko{\v{s}}i{\={u}}t{\.{e}}, Karina
  Nguyen, Edwin Chen, Scott Heiner, Craig Pettit, Catherine Olsson, Sandipan
  Kundu, Saurav Kadavath, et~al.
\newblock Discovering language model behaviors with model-written evaluations.
\newblock In \emph{Findings of the Association for Computational Linguistics:
  ACL 2023}, pages 13387--13434, 2023.

\bibitem[Platt(1999)]{platt1999probabilistic}
John Platt.
\newblock Probabilistic outputs for support vector machines and comparisons to
  regularized likelihood methods.
\newblock In \emph{Advances in Large Margin Classifiers}, pages 61--74. MIT
  Press, 1999.

\bibitem[Prelec(2004)]{prelec2004bayesian}
Dra\v{z}en Prelec.
\newblock A bayesian truth serum for subjective data.
\newblock \emph{Science}, 306\penalty0 (5695):\penalty0 462--466, 2004.

\bibitem[Proschan and Sethuraman(1977)]{proschan1977monotone}
Frank Proschan and Jayaram Sethuraman.
\newblock Schur functions in statistics {I}. the preservation theorem.
\newblock \emph{The Annals of Statistics}, 5\penalty0 (2):\penalty0 256--262,
  1977.

\bibitem[Rafailov et~al.(2023)Rafailov, Sharma, Mitchell, Ermon, Manning, and
  Finn]{rafailov2023direct}
Rafael Rafailov, Archit Sharma, Eric Mitchell, Stefano Ermon, Christopher~D.
  Manning, and Chelsea Finn.
\newblock Direct preference optimization: Your language model is secretly a
  reward model.
\newblock In \emph{Advances in Neural Information Processing Systems 36
  (NeurIPS 2023)}, 2023.

\bibitem[Rochet and Chon{\'e}(1998)]{rochet1998ironing}
Jean-Charles Rochet and Philippe Chon{\'e}.
\newblock Ironing, sweeping, and multidimensional screening.
\newblock \emph{Econometrica}, 66\penalty0 (4):\penalty0 783--826, 1998.

\bibitem[Saunders et~al.(2022)Saunders, Yeh, Wu, Bills, Ouyang, Ward, and
  Leike]{saunders2022self}
William Saunders, Catherine Yeh, Jeff Wu, Steven Bills, Long Ouyang, Jonathan
  Ward, and Jan Leike.
\newblock Self-critiquing models for assisting human evaluators.
\newblock \emph{arXiv preprint arXiv:2206.05802}, 2022.

\bibitem[Savage(1971)]{savage1971elicitation}
Leonard~J Savage.
\newblock Elicitation of personal probabilities and expectations.
\newblock \emph{Journal of the American Statistical Association}, 66\penalty0
  (336):\penalty0 783--801, 1971.

\bibitem[Scerri et~al.(2002)Scerri, Pynadath, and Tambe]{scerri2002towards}
Paul Scerri, David~V Pynadath, and Milind Tambe.
\newblock Towards adjustable autonomy for the real world.
\newblock \emph{Journal of Artificial Intelligence Research}, 17:\penalty0
  171--228, 2002.

\bibitem[Schelling(1960)]{schelling1960strategy}
Thomas~C. Schelling.
\newblock \emph{The Strategy of Conflict}.
\newblock Harvard University Press, Cambridge, MA, 1960.

\bibitem[Schervish(1989)]{schervish1989general}
Mark~J Schervish.
\newblock A general method for comparing probability assessors.
\newblock \emph{Annals of Statistics}, 17\penalty0 (4):\penalty0 1856--1879,
  1989.

\bibitem[Schick et~al.(2023)Schick, Dwivedi-Yu, Dess{\`\i}, Raileanu, Lomeli,
  Zettlemoyer, Cancedda, and Scialom]{schick2023toolformer}
Timo Schick, Jane Dwivedi-Yu, Roberto Dess{\`\i}, Roberta Raileanu, Maria
  Lomeli, Luke Zettlemoyer, Nicola Cancedda, and Thomas Scialom.
\newblock Toolformer: Language models can teach themselves to use tools.
\newblock In \emph{Advances in Neural Information Processing Systems},
  volume~36, 2023.

\bibitem[Schulman et~al.(2017)Schulman, Wolski, Dhariwal, Radford, and
  Klimov]{schulman2017proximal}
John Schulman, Filip Wolski, Prafulla Dhariwal, Alec Radford, and Oleg Klimov.
\newblock Proximal policy optimization algorithms.
\newblock \emph{arXiv preprint arXiv:1707.06347}, 2017.

\bibitem[Settles(2012)]{settles2012active}
Burr Settles.
\newblock \emph{Active Learning}.
\newblock Synthesis Lectures on Artificial Intelligence and Machine Learning.
  Morgan \& Claypool, 2012.

\bibitem[Sharma et~al.(2024)Sharma, Tong, Korbak, Duvenaud, Askell, Bowman,
  Cheng, Durmus, Hatfield-Dodds, Johnston, et~al.]{sharma2024towards}
Mrinank Sharma, Meg Tong, Tomasz Korbak, David Duvenaud, Amanda Askell,
  Samuel~R. Bowman, Newton Cheng, Esin Durmus, Zac Hatfield-Dodds, Scott~Riley
  Johnston, et~al.
\newblock Towards understanding sycophancy in language models.
\newblock In \emph{Proceedings of the 12th International Conference on Learning
  Representations (ICLR)}, 2024.

\bibitem[Shazeer et~al.(2017)Shazeer, Mirhoseini, Maziarz, Davis, Le, Hinton,
  and Dean]{shazeer2017outrageously}
Noam Shazeer, Azalia Mirhoseini, Krzysztof Maziarz, Andy Davis, Quoc Le,
  Geoffrey Hinton, and Jeff Dean.
\newblock Outrageously large neural networks: The sparsely-gated
  mixture-of-experts layer.
\newblock In \emph{Proceedings of the International Conference on Learning
  Representations}, 2017.

\bibitem[Tian et~al.(2023)Tian, Mitchell, Yao, Manning, and Finn]{tian2023just}
Katherine Tian, Eric Mitchell, Huaxiu Yao, Christopher~D Manning, and Chelsea
  Finn.
\newblock Just ask for calibration: Strategies for eliciting calibrated
  confidence scores from language models fine-tuned with human feedback.
\newblock In \emph{Proceedings of the 2023 Conference on Empirical Methods in
  Natural Language Processing}, pages 1--14, 2023.

\bibitem[Tirole(1986)]{tirole1986hierarchies}
Jean Tirole.
\newblock Hierarchies and bureaucracies: On the role of collusion in
  organizations.
\newblock \emph{Journal of Law, Economics, \& Organization}, 2\penalty0
  (2):\penalty0 181--214, 1986.

\bibitem[Tsybakov(2009)]{tsybakov2009introduction}
Alexandre~B. Tsybakov.
\newblock \emph{Introduction to Nonparametric Estimation}.
\newblock Springer, 2009.

\bibitem[Vickrey(1961)]{vickrey1961counterspeculation}
William Vickrey.
\newblock Counterspeculation, auctions, and competitive sealed tenders.
\newblock \emph{The Journal of Finance}, 16\penalty0 (1):\penalty0 8--37, 1961.

\bibitem[Vovk et~al.(2005)Vovk, Gammerman, and Shafer]{vovk2005algorithmic}
Vladimir Vovk, Alex Gammerman, and Glenn Shafer.
\newblock \emph{Algorithmic Learning in a Random World}.
\newblock Springer, New York, 2005.

\bibitem[Winkler(1969)]{winkler1969scoring}
Robert~L Winkler.
\newblock Scoring rules and the evaluation of probability assessors.
\newblock \emph{Journal of the American Statistical Association}, 64\penalty0
  (327):\penalty0 1073--1078, 1969.

\bibitem[Witkowski and Parkes(2012)]{witkowski2012peer}
Jens Witkowski and David~C Parkes.
\newblock Peer prediction without a common prior.
\newblock In \emph{Proceedings of the 13th ACM Conference on Electronic
  Commerce}, pages 964--981, 2012.

\bibitem[Witkowski et~al.(2023)Witkowski, Freeman, {Wortman Vaughan}, Pennock,
  and Krause]{witkowski2023incentive}
Jens Witkowski, Rupert Freeman, Jennifer {Wortman Vaughan}, David~M. Pennock,
  and Andreas Krause.
\newblock Incentive-compatible forecasting competitions.
\newblock \emph{Management Science}, 69\penalty0 (3):\penalty0 1354--1374,
  2023.
\newblock \doi{10.1287/mnsc.2022.4410}.

\bibitem[Wolf et~al.(2023)Wolf, Wies, Avnery, Levine, and
  Shashua]{wolf2023alignment}
Yotam Wolf, Noam Wies, Oshri Avnery, Yoav Levine, and Amnon Shashua.
\newblock Fundamental limitations of alignment in large language models.
\newblock \emph{arXiv preprint arXiv:2304.11082}, 2023.

\bibitem[Xie et~al.(2022)Xie, Raghunathan, Liang, and Ma]{xie2022explanation}
Sang~Michael Xie, Aditi Raghunathan, Percy Liang, and Tengyu Ma.
\newblock An explanation of in-context learning as implicit {B}ayesian
  inference.
\newblock In \emph{International Conference on Learning Representations}, 2022.
\newblock \doi{10.48550/arXiv.2111.02080}.

\end{thebibliography}

\end{document}